\documentclass[letterpaper]{article} 
\usepackage{aaai2026}  
\usepackage{times}  
\usepackage{helvet}  
\usepackage{courier}  
\usepackage[hyphens]{url}  
\usepackage{graphicx} 
\urlstyle{rm} 
\usepackage{natbib}  
\usepackage{caption} 
\frenchspacing  
\setlength{\pdfpagewidth}{8.5in}  
\setlength{\pdfpageheight}{11in}  
%
\usepackage{algorithm}
\usepackage{algorithmic}

%
\usepackage{newfloat}
\usepackage{listings}
\DeclareCaptionStyle{ruled}{labelfont=normalfont,labelsep=colon,strut=off} 
\lstset{%
	basicstyle={\footnotesize\ttfamily},
	numbers=left,numberstyle=\footnotesize,xleftmargin=2em,
	aboveskip=0pt,belowskip=0pt,%
	showstringspaces=false,tabsize=2,breaklines=true}
\floatstyle{ruled}
\newfloat{listing}{tb}{lst}{}
\floatname{listing}{Listing}
%
\pdfinfo{
/TemplateVersion (2026.1)
}

\newcommand{\modelname}{{\text{FLORA~}}}
\usepackage{amsmath,amssymb,amsthm}
\newtheorem{theorem}{Theorem}
\newtheorem{lemma}{Lemma}
\newtheorem{assumption}{Assumption}
\usepackage{subfigure}
\usepackage{adjustbox}
\usepackage{tablefootnote}
\usepackage{booktabs}
\usepackage{extarrows}

\setcounter{secnumdepth}{0} 

%


\title{Offline Meta-Reinforcement Learning with Flow-Based Task Inference and Adaptive Correction of Feature Overgeneralization}
\author {
    Min Wang\textsuperscript{\rm 1},
    Xin Li\textsuperscript{\rm 1 2}\thanks{Corresponding author.},
    Mingzhong Wang\textsuperscript{\rm 3},
    Hasnaa Bennis\textsuperscript{\rm 1}
}
\affiliations {
    \textsuperscript{\rm 1} Beijing Institute of Technology,\\
    \textsuperscript{\rm 2} Key Laboratory of Symbolic Computation and Knowledge Engineering of Ministry of Education, Jilin University,\\
    \textsuperscript{\rm 3} University of the Sunshine Coast\\
    \{minwangcs, xinli\}@bit.edu.cn, mwang@usc.edu.au,  bennishasnaa1920@gmail.com
}

\usepackage{bibentry}

\begin{document}

\maketitle

\begin{abstract}
Offline meta-reinforcement learning (OMRL) combines the strengths of learning from diverse datasets in offline RL with the adaptability to new tasks of meta-RL, promising safe and efficient knowledge acquisition by RL agents. However, OMRL still suffers extrapolation errors due to out-of-distribution (OOD) actions, compromised by broad task distributions and Markov Decision Process (MDP) ambiguity in meta-RL setups. Existing research indicates that the generalization of the $Q$ network affects the extrapolation error in offline RL. This paper investigates this relationship by decomposing the $Q$ value into feature and weight components, observing that while decomposition enhances adaptability and convergence in the case of high-quality data, it often leads to policy degeneration or collapse in complex tasks. We observe that decomposed $Q$ values introduce a large estimation bias when the feature encounters OOD samples, a phenomenon we term ``feature overgeneralization''. To address this issue, we propose FLORA, which identifies OOD samples by modeling feature distributions and estimating their uncertainties. FLORA integrates a return feedback mechanism to adaptively adjust feature components. Furthermore, to learn precise task representations, FLORA explicitly models the complex task distribution using a chain of invertible transformations. We theoretically and empirically demonstrate that FLORA achieves rapid adaptation and meta-policy improvement compared to baselines across various environments.
\end{abstract}

\section{Introduction}
Offline Meta-Reinforcement Learning (OMRL) focuses on quick adaptation to new tasks through training on a distribution of static and limited offline tasks. Context-based OMRL, which learns to encode task representations from histories, has gained increasing attention for more efficient and stable adaptability. In meta-RL, differences in task distributions between meta-training and meta-testing often lead to Markov Decision Process (MDP) ambiguity, where the policy struggles to distinguish between different tasks, resulting in spurious connections between trajectory data and task representations. In offline settings, extrapolation error frequently arises from the training policy executing out-of-distribution (OOD) actions, and the complexity of a meta-task dataset composed of multiple tasks further exacerbates this issue. Therefore, addressing the following two challenges is crucial for rapid adaptation in OMRL: 1) accurately inferring the true task distribution; and 2) effectively addressing the extrapolation error problem.
\begin{figure*}[htbp]
\centering
\subfigure{
\includegraphics[width=0.23\textwidth]{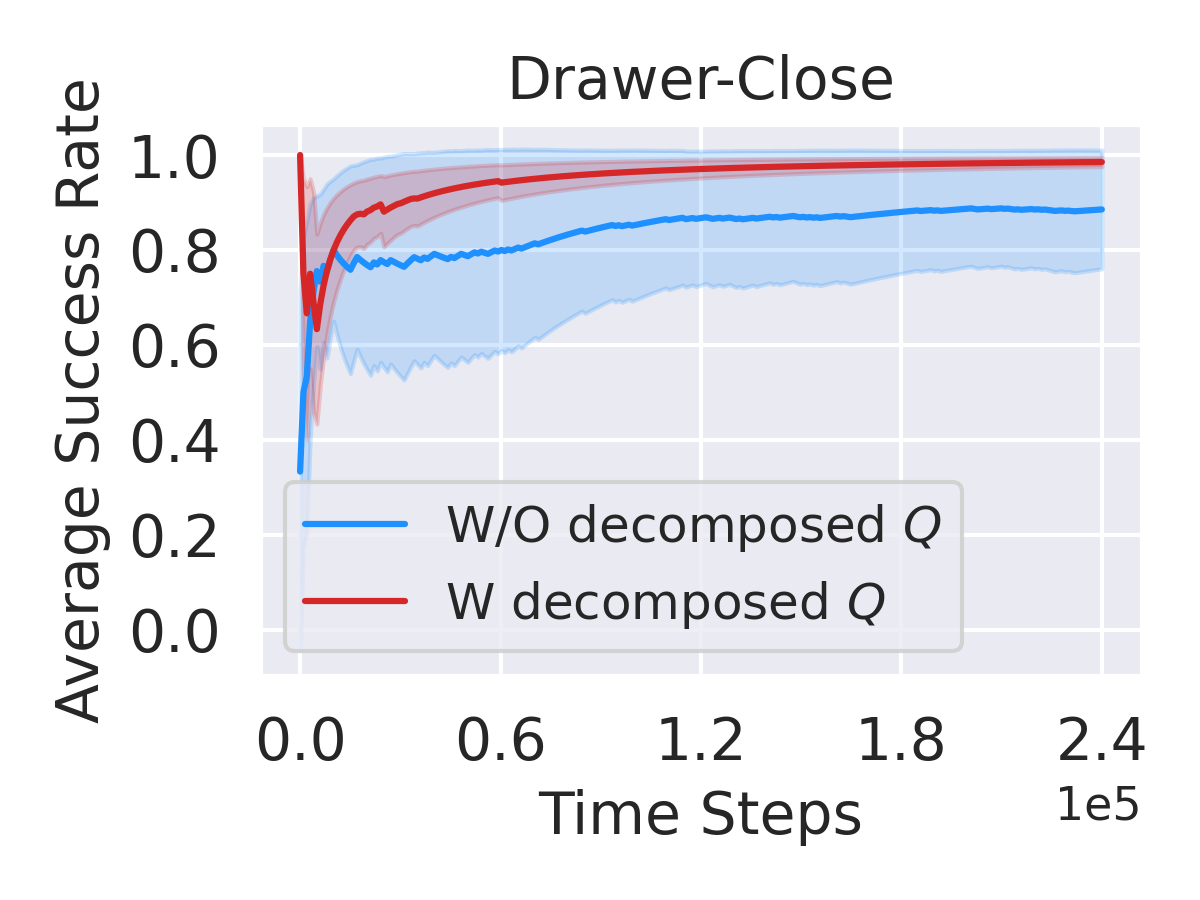}
}
\subfigure{
\includegraphics[width=0.23\textwidth]{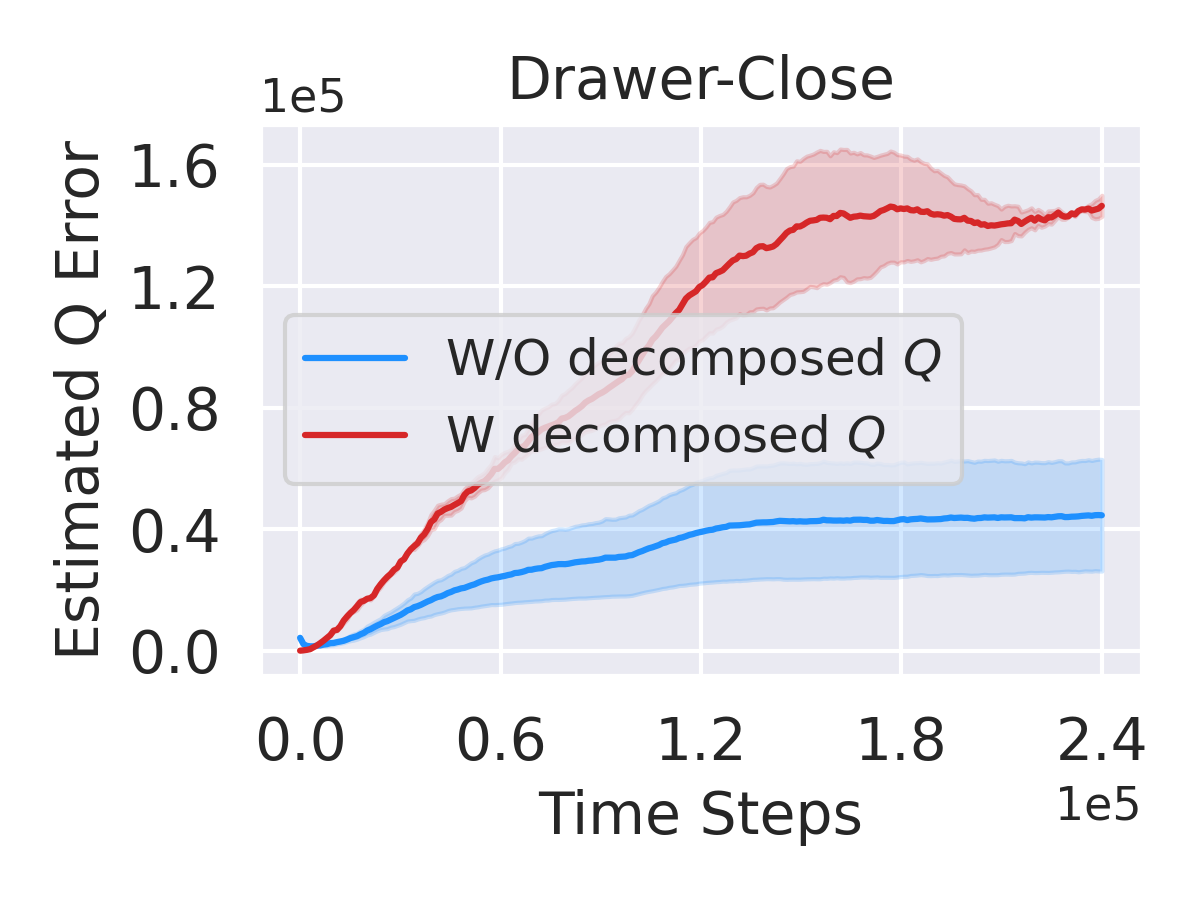}
}
\subfigure{
\includegraphics[width=0.23\textwidth]{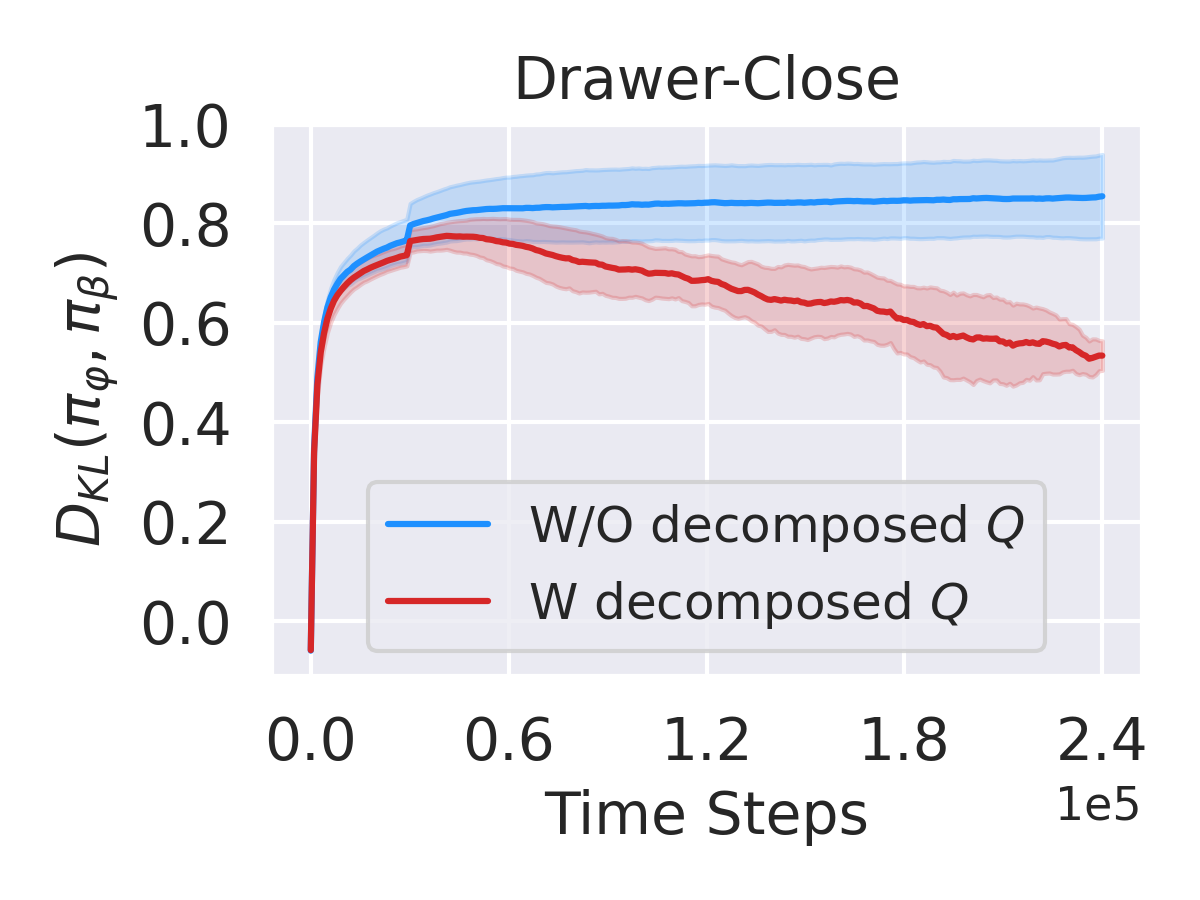}
}
\subfigure{
\includegraphics[width=0.23\textwidth]{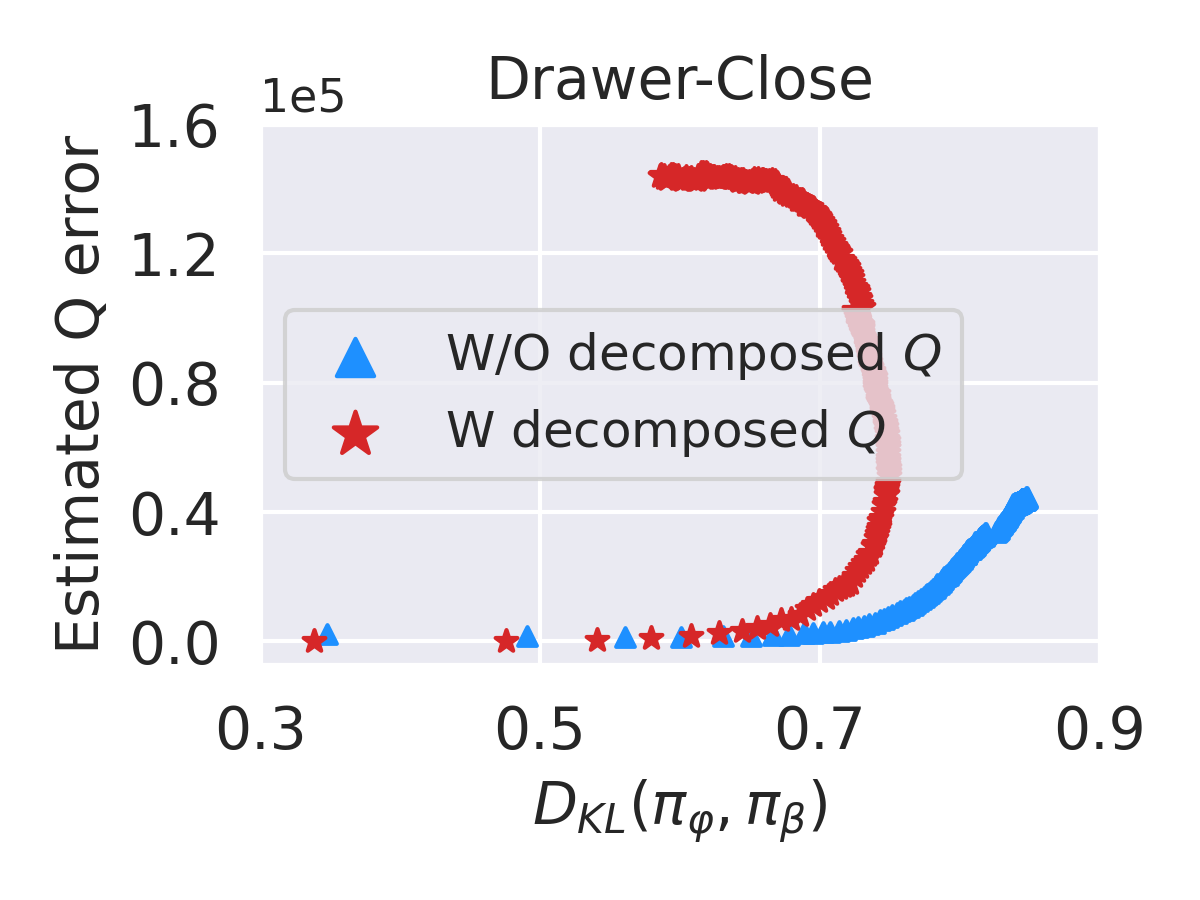}
}
\vskip -0.15in
\subfigure{
\includegraphics[width=0.23\textwidth]{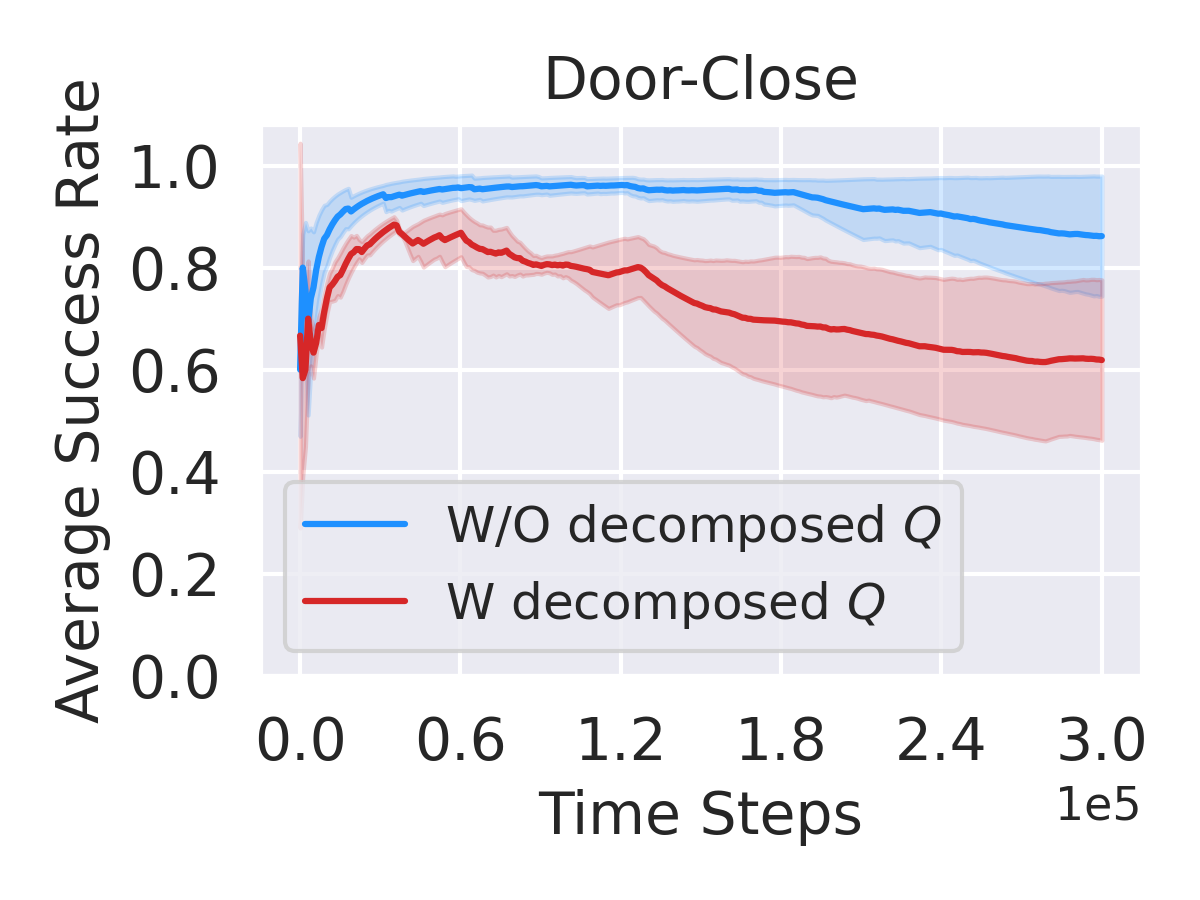}
}
\subfigure{
\includegraphics[width=0.23\textwidth]{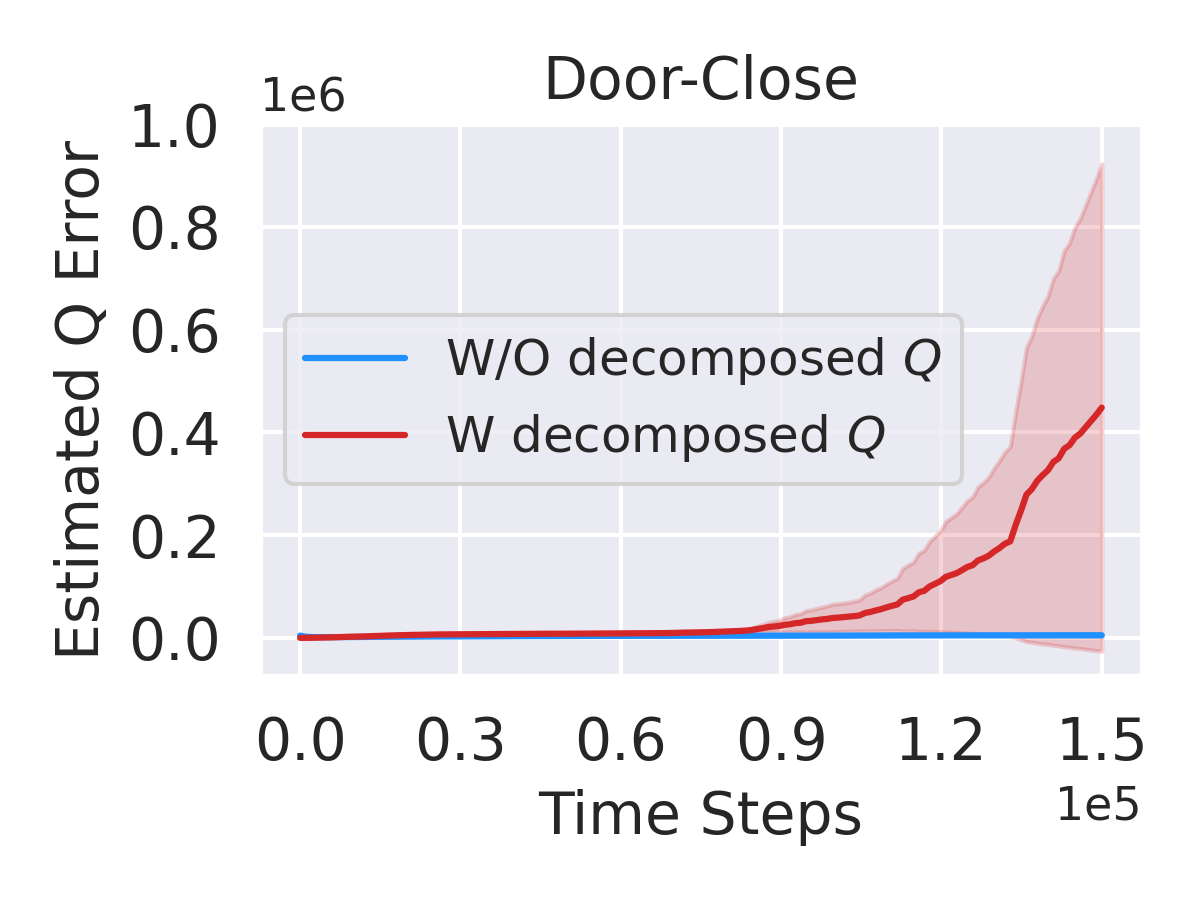}
}
\subfigure{
\includegraphics[width=0.23\textwidth]{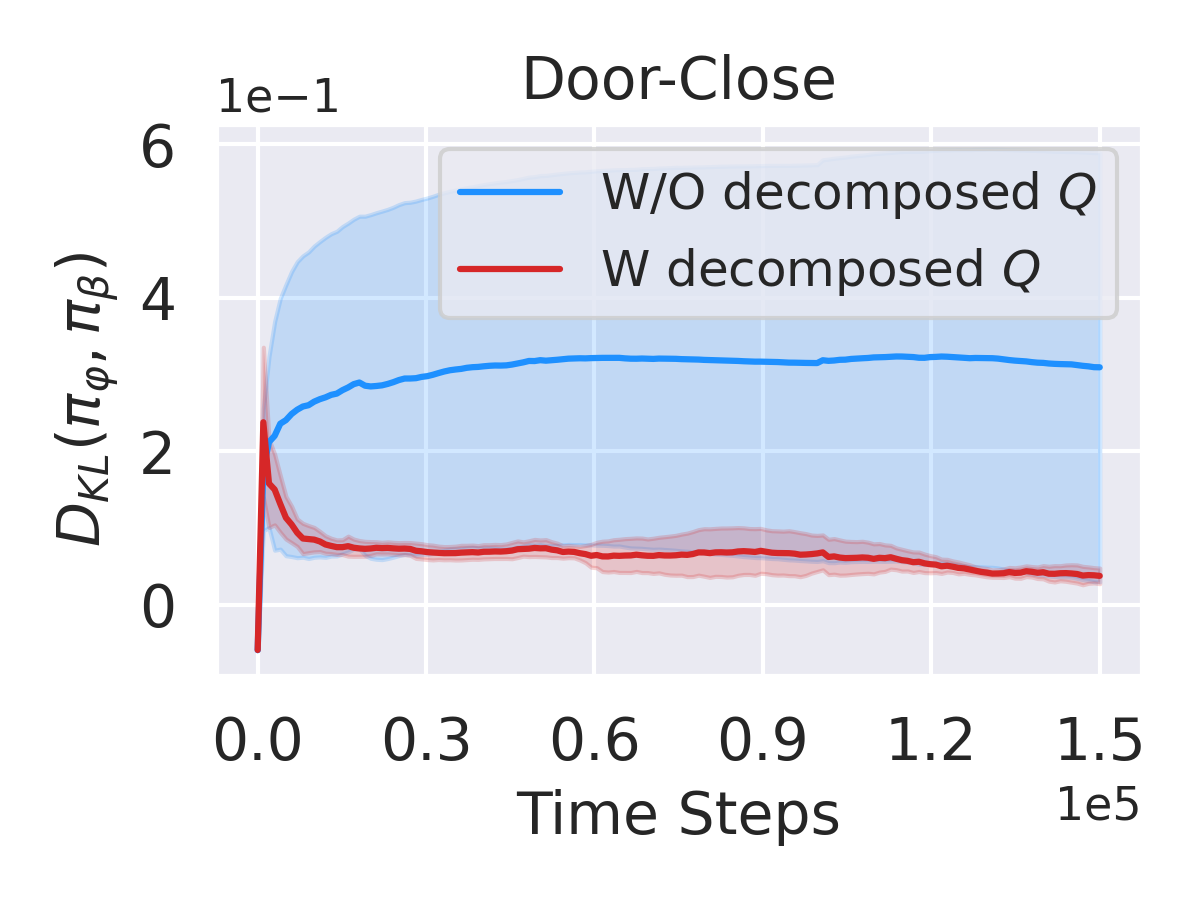}
}
\subfigure{
\includegraphics[width=0.23\textwidth]{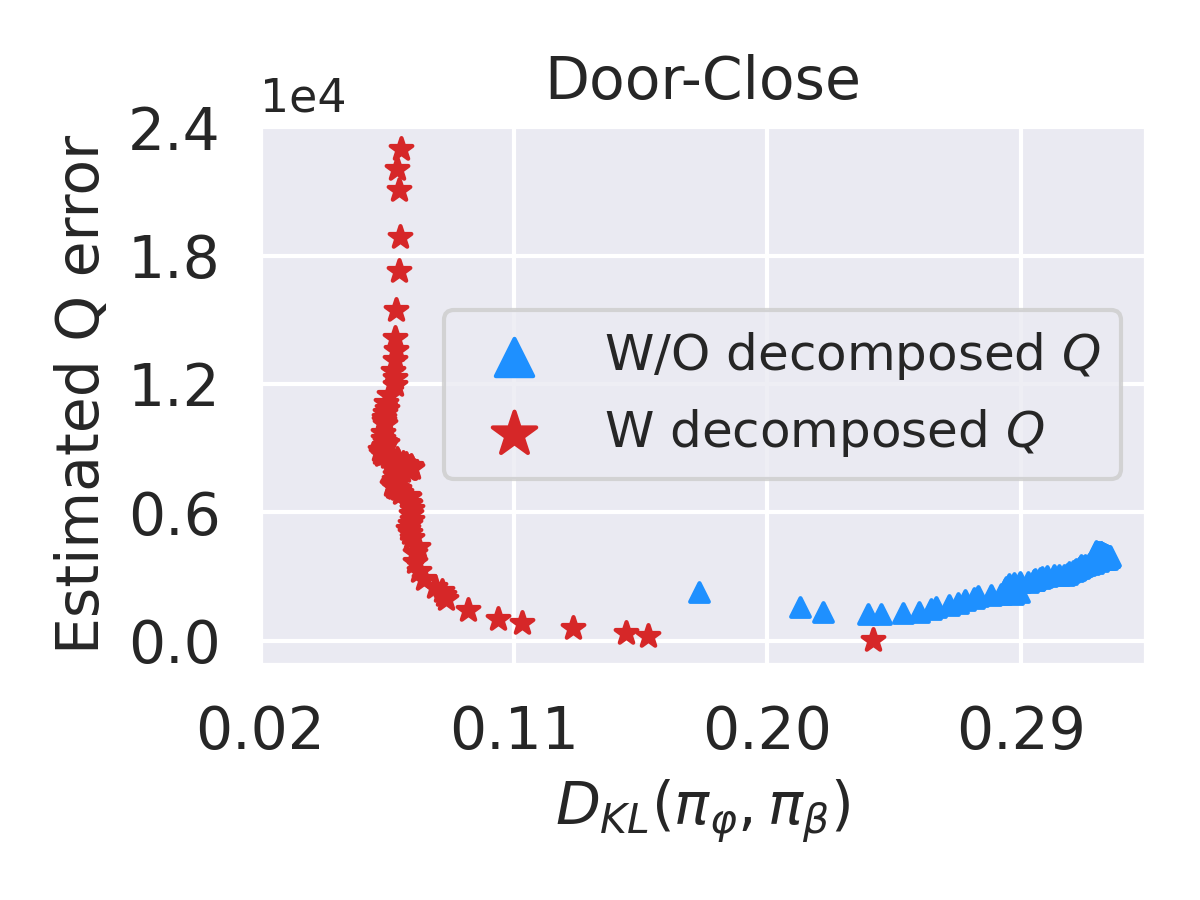}
}
\vskip -0.1in
\caption{Motivating example: In environments with high-quality datasets or narrow task distributions (e.g. Drawer-Close), the decomposed $Q$ value accelerates the adaptability of the training policy $\pi_\varphi$, and the converged optimal policy outperforms the behavior policy $\pi_\beta$. However, in environments with low-quality datasets or broad task distributions (e.g. Door-Close), the decomposed $Q$ value exacerbates the overestimation issue, ultimately leading to divergence of $Q$ value and failure of $\pi_\varphi$.}
\label{fig:motivation}
\end{figure*}

Recent research~\cite{yuan2022robust,li2024towards} has primarily focused on improving the accuracy of task inference through representation learning methods. Nevertheless, these methods ignore the inference and generation of specific probability distributions of task representations, are susceptible to biases introduced by the construction of positive and negative samples, and struggle to capture the latent structure of complex tasks. To better model task distributions and effectively discern different tasks, a strand of research has concentrated on predicting future states and rewards separately using the decoder~\cite{ni2023metadiffuser} or the context-aware world model~\cite{wang2024meta} to derive disentangled task representations. However, these approaches frequently assume that the distribution of task representations follows a simple prior Gaussian distribution, which restricts the flexibility and expressiveness of task inference models. Moreover, the trade-off between regularization and reconstruction terms presents challenges: a higher proportion of reconstruction loss can lead the decoder to overfit, resulting in inaccuracies in the reconstructed next state and reward, ultimately diminishing the accuracy of task representations.

To address extrapolation error, most OMRL methods leverage existing offline RL algorithms as backbone frameworks, which encourage the training policy $\pi_\varphi$ to fully mimic or stay close to the behavior policy $\pi_\beta$. The benefit of offline RL designed to tackle OOD actions originates from their inhibition of overgeneralization of $Q$ networks, rather than only from their ability to prevent overestimation of $Q$ values~\cite{mao2024doubly}. To fully leverage history information and reduce the impact of accumulated errors in temporal difference (TD) updates, Meta-DT~\cite{wang2024meta} strives to leverage the conditional sequence modeling paradigm to effectively improve training efficiency and capture complex temporal relationships. However, the aforementioned methods ignore the disentanglement of dynamics and reward functions in the policy space, only considering the decoupled task representation in feature space. This oversight prevents them from fundamentally addressing the extrapolation error aggravated by different task structures.

Inspired by Successor Features (SFs)~\cite{barreto2017successor}, prior work in online meta-RL~\cite{wang2024metacard} has proposed a decomposition of the $Q$ value to share common task structures and enable rapid adaptation to new tasks. This is expressed as: $Q=\psi^\mathsf{T}W$, where context-aware SFs $\psi$ are related to the transition dynamics, and reward weights $W$ correspond to the reward function. This decomposition decouples task structures, facilitating the reuse of shared task features and assisting in exploring the relationship between representation and overestimation in offline settings. We found that in offline meta-RL, although decomposed $Q$ values promote rapid policy adaptation, they present new challenges. In Fig.~\ref{fig:motivation}, we observe that environments with expert-level data or lower task variability, such as Drawer-Close, benefit from this decomposition, which enhances the adaptability of policy to different tasks and allows for rapid convergence. During early training, as the KL divergence between $\pi_\varphi$ and $\pi_\beta$ increases, the estimation error gradually amplifies. Interestingly, for the decomposed $Q$, forcing $\pi_\varphi$ to align intimately with $\pi_\beta$ further increases the estimated error in the later phase. This reflects that $\pi_\varphi$ eventually surpasses $\pi_\beta$ and converges to optimal, and the estimation error correspondingly stabilizes. In contrast, environments like Door-Close, which contain more suboptimal trajectories or have a broader task distribution, make it easier for agents to encounter OOD actions and lead to feature overgeneralization. Importantly, overestimation caused by an overgeneralized $\psi$ in the decomposed $Q$ value can be particularly problematic. As training progresses, the estimation error increases exponentially even if the KL divergence is gradually small, resulting in divergent $Q$ values and causing $\pi_\varphi$ to collapse.

The key to overcoming this dilemma is the identification and processing of OOD samples. In this paper, we propose \textbf{FLO}w-based task infe\textbf{R}ence and \textbf{A}daptive correction of feature overgeneralization caused by OOD actions (FLORA) in offline meta-RL scenarios. In detail, \modelname starts by employing a chain of invertible linear-time transformations for task inference, flexibly approximating more complex task distributions and deriving compact task representations for policy learning. Subsequently, \modelname models the distribution of context-aware SFs and estimates \emph{epistemic uncertainty} through double-feature learning to detect OOD actions. Ultimately, \modelname incorporates a reward feedback mechanism: it reduces uncertainty to discourage the agent from trying low-reward OOD actions and increases uncertainty to encourage high-reward actions, mitigating overgeneralization issues and simultaneously ensuring meta-policy improvement. Our main contributions are summarized as follows:
\begin{enumerate}
    \item We formalize a feature overgeneralization issue that leads to degradative policies in offline meta-RL. To achieve a balance between mitigating overgeneralization and maintaining policy improvement in decoupled policy space, we propose FLORA, which frames adaptive adjustment of feature learning as the choice of conservative level of estimated uncertainty. By leveraging flow-based task inference, \modelname learns complex task distributions more flexibly and efficiently, enhancing the task inference.
    \item We theoretically prove that FLORA, by leveraging a decoupled representation and policy space, achieves a superior policy compared to standard $Q$-value estimation.
    \item We extensively evaluate \modelname across diverse environments of MuJoCo and Meta-World, demonstrating that it significantly outperforms baselines in adaptation efficiency and achieves superior asymptotic performance.
\end{enumerate}

\section{Related Work}
\subsection{Offline Meta-Reinforcement Learning}
OMRL aims to enable rapid adaptation to new tasks by training with static and limited offline meta-tasks. It can be categorized as gradient-based or context-based. Although gradient-based methods~\cite{mitchell2021offline} can adapt to a broad range of tasks, they still suffer from the inherent sample inefficiency of nested policy gradient updates.

Context-based OMRL, in contrast, alleviates the sample complexity issue by amortizing task adaptation into simple task inference via a context encoder. Initial attempts, such as FOCAL~\cite{li2021focal}, employ a distance metric to cluster samples from the same task while separating those from different tasks. Subsequent work has extended these ideas by leveraging generative adversarial networks (e.g., ER-TRL~\cite{nakhaeinezhadfard2025entropy}), mutual information~\cite{gao2024context}, and contrastive objectives~\cite{yuan2022robust} to enhance task representations in feature space. To obtain disentangled task representations, recent research~\cite{li2024towards, zhou2024generalizable} explores the integration of variational task inference.   

Despite these advancements, they still suffer from extrapolation errors when querying OOD actions in TD updates. Most OMRL methods address this issue by leveraging behavioral policy regularization or behavior cloning as the base algorithm to keep training samples in distribution, such as IDAQ~\cite{wang2023offline}. Another line of research~\cite{wang2024meta} has introduced sequence modeling to tackle the accumulation of extrapolation errors. Nevertheless, these approaches overlook the impact of decoupling dynamics and rewards in the policy space on distributional shifts.
 
\subsection{Successor Features}
Pioneering works \cite{dayan1993improving} introduced the idea of successor representation, which proposes that the value function can be decomposed into reward weights and state representations that characterize the state transition dynamics. Subsequently, \citet{barreto2017successor,barreto2018transfer} formally defined the concept as Successor Features (SFs), supporting the separation of environmental dynamics from the reward structure for more efficient learning. Considering the task structure in meta-RL, \citet{han2022meta} first proposes applying SFs to decouple transition dynamics and rewards in the representation space, thereby enhancing the accurate identification of task representations. \citet{wang2024metacard} extends this disentanglement into the policy space and incorporates task uncertainty into the task inference module, which strengthens the learning of meaningful task representations. Unlike these works, which focus solely on the online meta-RL setting, this paper leverages the decomposition in a more challenging offline setting where we address the extrapolation error caused by feature overgeneralization through flexible value estimation via uncertainty correction.

\section{Problem Formulation}
OMRL generally assumes tasks are drawn from a distribution $\mathcal{M}_i=<\mathcal{S},\mathcal{A},\mathcal{P}_i,\mathcal{R}_i,\gamma>\sim P(\mathcal{M})$, where $\mathcal{S}$ and $\mathcal{A}$ denote the state and action spaces, $\mathcal{P}$ and $\mathcal{R}$ represent the transition dynamics and reward functions, respectively. $\gamma$ is the discount factor. These tasks, or MDPs, share identical state and action spaces but differ in their transition dynamics and rewards. For each training task $\mathcal{M}_i$, an offline dataset $\mathcal{D}_i$ is collected using a behavior policy $\pi_{\beta_i}$. During meta-training, the task representation $\mathbf{z}$ is inferred by a context encoder $E$ based on history trajectories $\tau$ (referred to as \emph{context}) sampled from the offline datasets $\mathcal{D}$. Prohibited from interacting with the environment, the contextual policy $\pi(a|s,\mathbf{z})$ is trained by employing $\mathcal{D}$ to perform downstream tasks. During meta-testing, given a test task $\mathcal{M}_{i'}\sim P(\mathcal{M})$, the agent accesses a limited dataset $\mathcal{D}_{i'}$ to perform task inference and then interacts with the environment to evaluate the expected return of $\pi$. The overall objective of OMRL is to maximize the expected discounted cumulative rewards:
\begin{equation}
\begin{aligned}
    &\max_{\pi} { \mathcal{R}=\mathbb{E}_{\mathcal{M}\sim P(\mathcal{M})} [\textstyle\sum_{t=0}^{\infty} \gamma^{t} r(s, a) ] } , \\& a \sim \pi(s, \mathbf{z}), \quad \mathbf{z}\sim E(\tau), \quad s,\tau\sim \mathcal{D}.
\end{aligned}
\end{equation}

\section{Methodology}
We first elucidate how to decompose $Q$ value, conditioned on task representations, into features of the transition dynamics and reward weights, enabling shared structures and rapid adaptation in the offline meta-RL, while also introducing the issue of feature overgeneralization. To mitigate extrapolation errors from this overgeneralization, we elaborate on our method for detecting and correcting OOD samples through uncertainty estimation and return feedback, which together enable adaptive feature learning. Finally, we describe how to implement flow-based task inference, which is crucial for deriving appropriate task representations.
\subsection{Overgeneralization Problem}
\label{sec:overgeneralize}
Given the offline meta-task dataset $\mathcal{D}$, 
which contains trajectories $\tau^{\mathcal{M}_i}=\{(s_t,a_t,r_t,s_{t+1})\}_{t=1}^H$ with horizon $H$ from various training tasks drawn from an identical task distribution, that is, $\mathcal{M}_{\text{train}}\sim P(\mathcal{M})$, we leverage a context encoder $E$ parameterized by $\omega$ to derive task representations $\mathbf{z}$ as:
\begin{equation}
\mathbf{z}=E_\omega(\tau^\mathcal{M}_t), \tau^\mathcal{M}_t=\{s_{t+c},a_{t+c},r_{t+c},s_{t+c+1}\}_{c=0}^h,
\end{equation}
where $\tau_t^\mathcal{M}$ is a trajectory segment of horizon $h$ sampled starting from a randomly chosen $t$ in the offline dataset $\mathcal{D}$.

Following \citet{barreto2017successor}, we assume the reward function can be linearly decomposed as follows:
\begin{equation}
r(s,a,s',\mathbf{z})=\phi(s,a,s',\mathbf{z})^\mathsf{T}W(\mathbf{z}),
    \label{equ:de-reward}
\end{equation}
where $\phi(s,a,s',\mathbf{z})\in \mathbb{R}^d$ are context-aware features reflecting variations in state transition dynamics within tasks, and $W(\mathbf{z})\in \mathbb{R}^d$ are context-aware reward weights that mirror changes in the reward function across tasks. The $Q$ value under policy $\pi$ is rewritten as:
\begin{equation}
    \begin{aligned}
        Q^\pi(s,a,\mathbf{z}) &= \mathbb{E}_\pi\left[r_{t+1}+\gamma r_{t+2}+\ldots|s_t=s,a_t=a\right] \\ 
        &=\mathbb{E}_\pi[\phi_{t+1}^\mathsf{T}W(\mathbf{z})+\gamma \phi_{t+2}^\mathsf{T}W(\mathbf{z})+ \ldots|s,a]\\
        &=\mathbb{E}_\pi[\textstyle\sum_{n=t}^{\infty}\gamma^{n-t}\phi_{n+1}|s,a]^\mathsf{T}W(\mathbf{z})\\
        &=\psi^{\pi}(s,a, \mathbf{z})^\mathsf{T}W(\mathbf{z}),
        \label{equ:de-q}
    \end{aligned}
\end{equation}
where $\psi^\pi(s,a,\mathbf{z})$ represents context-aware SFs, equivalent to the expected discounted cumulative $\phi$. This decoupling of dynamics and rewards within the policy space facilitates the investigation of the impact of features on extrapolation errors and promotes the sharing of similar task structures.

In our offline setting with a decomposed $Q$ value, context-aware SFs $\psi$ and reward weights $W$ are updated separately. The $W$ network and context-aware features $\phi$ are jointly trained by minimizing the prediction error of rewards:
\begin{equation}
    \mathcal{L}_r=\mathbb{E}_{\substack{(s,a,s',r)\sim \mathcal{D}\\\mathbf{z} \sim E}}\left[\frac{1}{2}(r-\phi_{\xi}(s,a,s',\mathbf{z})^\mathsf{T}W_{\mu}(\mathbf{z}))^2\right].
\label{equ:r}
\end{equation}
And $\psi_\theta$ is estimated by minimizing the following TD loss:
\begin{equation}
\begin{aligned}
\textstyle\mathcal{L}_{\psi}(\theta)&=\mathbb{E}_{\left(s,a,s'\right) \sim \mathcal{D},\mathbf{z}\sim E,a'\sim\pi}\Big[\psi_\theta(s,a,\mathbf{z})\\
&-\Big(\phi_\xi(s,a,s',\mathbf{z})+\gamma \psi_{\hat{\theta}}(s',a',\mathbf{z})\Big)\Big]^2,
\label{equ:td-psi}
\end{aligned}
\end{equation}
where $\psi_{\hat{\theta}}$ is a target context-aware SFs network with soft parameter updates, and $\phi$ serves as an immediate reward. 

However, in offline scenarios with decomposed $Q$ values, as illustrated in Eq.~\ref{equ:td-psi}, the extrapolation error is primarily caused by $\psi_{\hat{\theta}}(s',a',\mathbf{z})$ when the next action $a'$ taken by the policy $\pi$ may not be contained in the dataset $\mathcal{D}$. This can lead to overgeneralization in $\psi_\theta$ and ultimately result in a biased $Q$ value. We term this problem as \emph{feature overgeneralization} and observe that overgeneralized $\psi$ generally triggers a severe estimation bias in $Q$ value, as in Door-Close in Fig.~\ref{fig:motivation}.

\subsection{Adaptive Correction of Overgeneralization (ACO)}
\label{sec:adaptive}
To accurately identify OOD samples that trigger overgeneralization during the TD update of context-aware SFs $\psi_\theta$, we model the target value $\psi_{\hat{\theta}}$ in Eq.~\ref{equ:td-psi} as a Gaussian distribution with mean $\bar{\psi}$ and standard deviation $\sigma_\psi$:
\begin{equation}
    \Psi(s',a',\mathbf{z})\triangleq\bar{\psi}(s',a',\mathbf{z})+\sigma_\psi(s',a',\mathbf{z}).
\end{equation}
This design adeptly handles the inherent variability and uncertainty of the environment, accommodating more challenging offline scenarios. Moreover, it preserves the multimodality of $Q$ value distributions, promoting stable learning. 

We approximate $\Psi$ using a set of $D$ feature atoms $\psi^{(d)}$ ($d=1,2,\cdots,D$), representing discrete feature values of the $\psi_\theta$ network. Specifically, we employ double $\psi$-learning, with corresponding double feature atoms $\psi^{(d)}$ parameterized by ${\hat{\theta}}$, to minimize estimation bias. The mean and standard deviation of the feature atoms are calculated as:
\begin{equation}
    \bar{\psi}^{(d)}=\frac{1}{2}\left(\psi_{\hat{\theta}_1}^{(d)}+\psi_{\hat{\theta}_2}^{(d)}\right), \quad \sigma^{(d)}=\sqrt{\sum_{e=1}^2\left(\psi_{\hat{\theta}_e}^{(d)}-\bar{\psi}^{(d)}\right)^2}.
\end{equation}
The standard deviation $\sigma^{(d)}$ quantifies the \emph{epistemic uncertainty} for detecting OOD samples. By combining $\bar{\psi}^{(d)}$ and $\sigma^{(d)}$, we define the belief distribution $\tilde{\Psi}$: 
\begin{equation}
    \tilde{\Psi}(s',a',\mathbf{z})=\bar{\psi}^{(d)}(s',a',\mathbf{z})+\alpha \sigma^{(d)}(s',a',\mathbf{z}),
\label{equ:dis-psi}
\end{equation}
where $\alpha$ is a dynamically adjusted parameter.

We consider multi-armed bandit strategies~\cite{moskovitz2021tactical} and associate each bandit arm with a specific $\alpha$ value. Additionally, we introduce feedback on policy performance to update these $\alpha$ values. With $U$ bandit arms, where $\alpha\in\{\alpha_u\}_{u=1}^U$, the adaptive adjustment of the weight for each $\alpha_u$ in the next episode, $w_{g+1}(\alpha_u)$, is formulated as:
\begin{equation}
\begin{aligned}
    &w_{g+1}(\alpha_u)= w_g(\alpha_u)+\lambda \frac{\mathcal{R}_g-\mathcal{R}_{g-1}}{p_g(\alpha)},\\&\alpha\sim p_g(\alpha)\propto \text{exp}(w_g(\alpha_u)), \quad \lambda >0,
\end{aligned}
\label{equ:adaptive-w}
\end{equation}
where $p_g(\alpha)$ denotes the exponentially weighted distribution of $\alpha$, $\mathcal{R}_g$ represents the return from the current episode $g$, and $\lambda$ is the update step size. Eq.~\ref{equ:adaptive-w} indicates that changes in the return from one episode to the next, signifying either improvements or degradation in policy performance, trigger updates to the weight of $\alpha_u$. Conversely, if the policy performance remains unchanged, the weight remains constant. 

In offline scenarios, we set $\alpha\leq 0$ to suppress feature overgeneralization and stabilize training. Through iterative training of $w(\alpha_u)$, the probability of $\alpha<0$ increases when the agent encounters low-return OOD actions, leading to conservative estimates for $\psi$. Conversely, when no OOD actions are encountered or when the returns from OOD actions are relatively high, the probability of $\alpha=0$ increases, utilizing the estimated mean as the predicted output. This strategy not only prevents overestimation but also encourages the agent to explore high-return actions outside the dataset, thereby improving the meta-policy and reducing extrapolation error.

We employ belief context-aware SFs $\tilde{\psi}^{(d)}$, sampled from the $\tilde{\Psi}$ distribution as the TD target. To robustly address OOD actions, we leverage the Huber loss~\cite{dabney2018distributional} for TD errors as the optimization objective for $\psi_\theta$:
\begin{equation}
\begin{aligned}
\mathcal{J}_{\psi}(\theta_e)=\sum^D_{d=1}\mathcal{L}_\text{Huber}\Big(\psi_{\theta_e}^{(d)}-\Big(\phi_\xi^{(d)}+\gamma \tilde{\psi}^{(d)}\Big)\Big).
\label{equ:critic}
\end{aligned}
\end{equation}
The corrected $Q$ value is estimated by calculating the minimal product of $\psi$ and $W$:
\begin{equation}
    \tilde{Q}(s,a,\overline{\mathbf{z}})=\underset{e=1,2}{\min}\sum_{d=1}^{D}\psi_{\theta_e}^{(d)}(s,a,\overline{\mathbf{z}})^\mathsf{T}W_{\mu}^{(d)}(\overline{\mathbf{z}}),
\label{equ:q}
\end{equation}
where $\overline{\mathbf{z}}$ prevents the backpropagation of gradients. From the perspective of distributional RL, these feature atoms construct a support set $\{\psi^{(d)},1\leq d\leq D\}$, with the corresponding probabilities given by $W^{(d)}$. We can then approximate the full distribution of $\tilde{Q}$ values, effectively capturing uncertainties and singular values of training samples. 

The loss function for the meta-policy $\pi_\varphi$ is structured as
\begin{equation}
    \mathcal{L}_\pi=-\mathbb{E}_{s\sim \mathcal{D}}\left[\mathbb{E}_{\tilde{a} \sim \pi_\varphi}\left[\tilde{Q}\left(s, \tilde{a},\overline{\mathbf{z}}\right)\right]-\rho_1 D_{\mathrm{KL}}\left(\pi_\varphi, \pi_\beta\right)\right],
\label{equ:pi}
\end{equation}
where $D_{\mathrm{KL}}$ denotes the KL divergence between the behavior policy $\pi_\beta(\cdot\mid s,\mathbf{z})$ and the learned $\pi_\varphi(\cdot\mid s,\mathbf{z})$, with $\rho_1$ being a trainable parameter. The policy superiority of \modelname is guaranteed in Theorem~\ref{th:psi-improve}, with the proof in Appendix A.

\subsection{Flow-based Task Inference (FTI)}
\label{sec:flow}
In context-based meta-RL, task representations $\mathbf{z}$ are typically derived by a context encoder that processes recent trajectories: $\mathbf{z}\sim E_\omega(\mathbf{z}|\tau),\tau\sim\mathcal{D}$. As meta-task distributions are often inherently multimodal, a property further amplified in offline settings due to the heterogeneity of $\pi_\beta$ for collecting datasets $\mathcal{D}$, a unimodal Gaussian prior is insufficient to capture this complex distribution $E_\omega(\mathbf{z}|\tau)$. We perform flow-based task inference to explicitly model this distribution by transforming a simple Gaussian distribution through a series of invertible mappings, thereby enhancing the expressiveness of task representations and better matching the true posterior. We utilize an invertible transformation~\cite{rezende2015variational} defined as follows:
\begin{equation}
    f_{\eta}(\mathbf{z})=\mathbf{z}+\mathbf{u} l\left(\mathbf{w}^{\top} \mathbf{z}+b\right),
\label{eq:transform}
\end{equation}
where $\mathbf{w}$ is the weight vector, $b$ is the bias, $\mathbf{u}$ is a scaling vector, and $l(\cdot)$ is a smooth function. This transformation is part of a series of \emph{planar flows} $F_\eta$, which transforms $\mathbf{z}$ drawn from an initial Gaussian-distributed prior $p(\mathbf{z})$ into $\mathbf{z}_K$:
\begin{equation}
\mathbf{z}_K=F_\eta(\mathbf{z})=f_{\eta_K} \circ \ldots \circ f_{\eta_2} \circ f_{\eta_1}\left(\mathbf{z}\right), \mathbf{z}\sim p(\mathbf{z}).
\end{equation}
The final posterior distribution $E^K_\omega(\mathbf{z}|\tau)$, obtained by successively propagating the initial distribution $E^0_\omega(\mathbf{z}|\tau)$ through a chain of $K$ transformations $f_{\eta_{k}}$, is given by:
\begin{equation}
    \log E_\omega^K\left(\mathbf{z}|\tau\right)=\log E_\omega^0(\mathbf{z}|\tau)-\sum_{k=1}^K \log \left|\operatorname{det} \frac{\partial f_{\eta_k}}{\partial \mathbf{z}_{k-1}}\right|.
\end{equation}

To extract compact task representations, $E_\omega$ leverages variational inference to model the contextual state-action value function $Q(s,a,\mathbf{z})$, with the evidence lower bound $\mathcal{J}_{\text{ELBO}}$ for training $E_\omega$ defined as:
\begin{equation}
        \mathbb{E}_{\substack{\tau\sim \mathcal{D},\mathbf{z}\sim E_\omega\\\mathbf{z}_K\sim F_\eta}}\left[\log p(Q)-\rho_2 D_{\mathrm{KL}}( E_\omega^K\left(\mathbf{z}|\tau\right)\|p(\mathbf{z}_K))\right],
\label{equ: g}
\end{equation}
where $\rho_2$ is a hyperparameter set to 0.1 in all experiments. More details on $\mathcal{J}_{\text{ELBO}}$ are in Appendix A. We incorporate the training loss of reward $\mathcal{L}_r$ and context-aware successor features $\mathcal{J}_\psi$ to collectively update the context encoder $E_\omega$, facilitating an accurate capture of task changes~\cite{wang2024metacard}. The overall loss function for training $E_\omega$ is:
\begin{equation}
\mathcal{L}_E=\rho_3(\mathcal{L}_r+\mathcal{J}_\psi)-\mathcal{J}_{\text{ELBO}},
\label{equ:encoder}
\end{equation}
where $\rho_3$ is a hyperparameter set to 0.01 in all experiments. The pseudocode of \modelname is illustrated in Alg.~\ref{alg}.
\begin{theorem}
\label{th:psi-improve}
{\bf{(Policy Superiority Guarantee)}} Consider a meta-task $\mathcal{M}_i$ with an optimal policy $\pi^{*}$ whose action-value is $Q_{i}^{\pi^{*}}$. Let $Q_{i}^{\pi_j^{*}}$ be the action-value of an optimal policy of $\mathcal{M}_j$ when performed on $\mathcal{M}_i$. Given the set $\{\hat{Q}_{i}^{\pi_1^{*}}, \hat{Q}_{i}^{\pi_2^{*}},\cdots,\hat{Q}_{i}^{\pi_L^{*}}\}$ such that $|Q_i^{\pi_j^*}-\hat{Q}_{i}^{\pi_j^*}|\leq \epsilon$ for all $s\in\mathcal{S}, a\in\mathcal{A}, \mathbf{z}\in\mathcal{Z}$. Define the upper bounds of the distance to the optimal policy for FLORA and the standard $Q$ value as follows: $\left|Q_{i}^{\pi^{*}}-Q^{\pi_\text{flora}}_i\right|\leq\delta_{\text{flora}}$ and $\left|Q_{i}^{\pi^{*}}-Q^{\pi_\text{s-q}}_i\right|\leq\delta_{\text{s-q}}$, respectively. Then we have $\delta_{\text{flora}}\leq\delta_{\text{s-q}}$.
\end{theorem}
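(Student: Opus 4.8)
The plan is to recognize this statement as an instance of the \emph{generalized policy improvement} (GPI) bound of \citet{barreto2017successor}, specialized to the decoupled $Q=\psi^\mathsf{T}W$ representation and then contrasted against a monolithic value estimate. The whole argument hinges on the fact that the SF decomposition lets each previously learned policy $\pi_j^{*}$ be re-evaluated on the new task $\mathcal{M}_i$ at essentially no cost: since $\psi^{\pi_j^{*}}$ depends only on the transition dynamics and $\pi_j^{*}$, the action value of $\pi_j^{*}$ under the reward weights $W_i$ of $\mathcal{M}_i$ is simply $Q_i^{\pi_j^{*}}(s,a,\mathbf{z})=\psi^{\pi_j^{*}}(s,a,\mathbf{z})^\mathsf{T}W_i(\mathbf{z})$. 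This is exactly what produces the set $\{\hat{Q}_i^{\pi_1^{*}},\dots,\hat{Q}_i^{\pi_L^{*}}\}$ that FLORA can maximize over, whereas a standard $Q$ estimator without the factorization is tied to a single policy.

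First I would instantiate the GPI guarantee for FLORA. The FLORA policy acts greedily with respect to the envelope of the transferred estimates, $\pi_{\text{flora}}(s)\in\arg\max_a\max_{j}\hat{Q}_i^{\pi_j^{*}}(s,a,\mathbf{z})$. Under the assumed $\epsilon$-accuracy $|Q_i^{\pi_j^{*}}-\hat{Q}_i^{\pi_j^{*}}|\leq\epsilon$, the GPI bound yields the pointwise inequality $Q_i^{\pi_{\text{flora}}}(s,a)\geq\max_j Q_i^{\pi_j^{*}}(s,a)-\tfrac{2}{1-\gamma}\epsilon$. Subtracting both sides from $Q_i^{\pi^{*}}$ and using optimality of $\pi^{*}$ (so the gap is nonnegative and equals its absolute value) gives the valid upper bound $\delta_{\text{flora}}=\sup_{s,a}\big(Q_i^{\pi^{*}}(s,a)-\max_j Q_i^{\pi_j^{*}}(s,a)\big)+\tfrac{2}{1-\gamma}\epsilon$.

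Next I would derive the companion bound for the standard estimator. Lacking the decoupled representation, a single policy-improvement step can certify improvement over only one reference policy $\pi_{j_0}^{*}$, giving $Q_i^{\pi_{\text{s-q}}}(s,a)\geq Q_i^{\pi_{j_0}^{*}}(s,a)-\tfrac{2}{1-\gamma}\epsilon$ and hence $\delta_{\text{s-q}}=\sup_{s,a}\big(Q_i^{\pi^{*}}(s,a)-Q_i^{\pi_{j_0}^{*}}(s,a)\big)+\tfrac{2}{1-\gamma}\epsilon$. The two bounds share the identical approximation term $\tfrac{2}{1-\gamma}\epsilon$, so it cancels, and the claim reduces to the elementary fact that a maximum dominates any of its members, $\max_j Q_i^{\pi_j^{*}}(s,a)\geq Q_i^{\pi_{j_0}^{*}}(s,a)$ for every $(s,a)$. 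This pointwise ordering is preserved after subtracting from $Q_i^{\pi^{*}}$ and passing to the supremum, delivering $\delta_{\text{flora}}\leq\delta_{\text{s-q}}$.

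The main obstacle I anticipate is not the inequality chain, which is routine once GPI is in place, but \emph{pinning down the precise baseline} that ``standard $Q$ value'' denotes and justifying that it is confined to a single-policy certificate. I would argue that a monolithic $Q$ network, without the split into task-invariant features $\psi$ and task-specific weights $W$, cannot cheaply synthesize the family $\{Q_i^{\pi_j^{*}}\}$ on a novel $\mathcal{M}_i$ and therefore cannot form the inner maximum that GPI exploits; this is where the decoupled policy and representation space of \modelname does the real work. A secondary technical point is to confirm that the $\epsilon$-accuracy assumption transfers cleanly to the inner-product estimates $\hat{Q}_i^{\pi_j^{*}}=\hat{\psi}^{\pi_j^{*}\,\mathsf{T}}W_i$, which I would handle by absorbing the feature/weight estimation error into $\epsilon$ under a bounded-norm assumption on $\phi$ and $W$, consistent with the linear reward model of Eq.~\ref{equ:de-reward}.
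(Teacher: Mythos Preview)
Your proposal and the paper share the same starting point---the GPI bound (the paper's Lemma~1), which you invoke correctly---but then diverge sharply on what distinguishes FLORA from the ``standard $Q$'' baseline, and this is precisely the crux of the theorem.

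You argue that the gap arises because FLORA can form the inner maximum $\max_j \hat{Q}_i^{\pi_j^*}$ over all $L$ transferred estimates, whereas the monolithic $Q$ estimator is stuck with a single reference policy $\pi_{j_0}^*$; the inequality then follows from $\max_j \geq$ any fixed $j_0$. The paper does \emph{not} make this distinction. In the paper's analysis, \emph{both} FLORA and s-q are granted the same GPI step against $Q_i^{\pi_j^*}$ for a common~$j$; what differs is the subsequent bound on the task-similarity term $|Q_i^{\pi_i^*}-Q_i^{\pi_j^*}|$ (Lemma~2). By decomposing the Bellman recursion and isolating the cross-task pieces $\delta_{q_iq_j}$, the paper shows that when the successor features $\psi$ are shared (Assumption~2) one gets $\delta_{q_iq_j}^{\text{flora}}\leq\gamma\psi_{\max}\|W_i-W_j\|$, whereas without the factorization one can only write $\delta_{q_iq_j}^{\text{s-q}}\leq\gamma\psi_{\max}(\|W_i\|+\|W_j\|)$. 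The final comparison $\delta_{\text{flora}}\leq\delta_{\text{s-q}}$ then reduces to the triangle inequality $\|W_i-W_j\|\leq\|W_i\|+\|W_j\|$, not to a max-vs-single-element argument.

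So the obstacle you flagged---``pinning down the precise baseline''---is real, and the paper resolves it differently than you guessed: s-q is not handicapped by having fewer policies in the GPI envelope, but by admitting only a cruder norm bound on the per-task $Q$ gap because it cannot exploit a shared $\psi$. Your argument is internally coherent for \emph{your} formalization of s-q, but it does not reproduce the specific $\delta_{\text{flora}}$ and $\delta_{\text{s-q}}$ the theorem refers to, which are the explicit expressions $\triangle_{\text{flora}}+\tfrac{2\epsilon}{1-\gamma}$ and $\triangle_{\text{s-q}}+\tfrac{2\epsilon}{1-\gamma}$ from Lemma~2. To match the paper you would need to derive those norm bounds and invoke the triangle inequality on the reward weights, rather than appeal to the size of the policy library.
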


\section{Experiment}
\subsection{Experimental Settings} 
\subsubsection{Setups}
We evaluated \modelname on two benchmarks: (1) \textbf{Meta-World}~\cite{Yu2019MetaWorldAB} comprises diverse robotic manipulation tasks and is generally considered more challenging due to its broader distribution. We follow the ML1 evaluation protocol, which evaluates the few-shot adaptability with random initial objects and goal positions. (2) \textbf{MuJoCo}~\cite{Todorov2012MuJoCoAP} focuses solely on parametric diversity. We evaluated two distinct scenarios: a) tasks with changes in reward functions (e.g., goal position for Point-Robot), and b) tasks with changes in transition dynamics (e.g., wind conditions for Point-Robot-Wind).
\subsubsection{Offline Data Collection} For each environment in Meta-World, we sampled 40 training tasks and 10 testing tasks from the meta-task distribution. For MuJoCo, we sampled 8 training tasks and 2 testing tasks. Following IDAQ, we utilized the Soft Actor-Critic (SAC) algorithm~\cite{haarnoja2018soft} to train policies for each task and considered the SAC policy at different training phases as behavior policies. These policies were then rolled out to generate 50 trajectories per benchmark, creating comprehensive offline datasets. To ensure consistency and fairness, all methods were trained on the same dataset in each environment. Moreover, all baseline methods utilize BRAC as the backbone algorithm to address the overestimation of $Q$ value in offline meta-RL. More experimental details are provided in Appendix C.
\begin{algorithm}[htbp]
\caption{\modelname algorithm}
\label{alg}
\textbf{\emph{Meta-training}} \\
\textbf{Input}: Offline training datasets $\mathcal{D}^\text{Train}=\{{\tau^{\mathcal{M}_i}}\}_{i=1}^I$ of tasks $\{\mathcal{M}_{\text{Train}}^i\}_{i=1}^I\sim P(\mathcal{M})$, context encoder $E_\omega(\mathbf{z}|\tau)$, planar flows $F_\eta$, meta-policy $\pi_\varphi(a|s,\mathbf{z})$, context-aware features $\phi_\xi(s, a, s',\mathbf{z})$ and reward weights $W_\mu(\mathbf{z})$, context-aware SFs $\psi_{\theta_{e}}(s, a, \mathbf{z})$ and $e \in\{1,2\}$
    \begin{algorithmic}[1]
    \STATE{Initialize the distribution $p_0(\alpha)\leftarrow\mathcal{U}([-1,0]^U)$}
    \WHILE{not done} 
    \FOR{each episode $g=0, \ldots, G-1$}
    \STATE{Sample adaptive parameter $\alpha\sim p_g(\alpha)$}
        \FOR{each training step}
            \FOR{each training task $\mathcal{M}^{train}_i$}
            \STATE{Sample $\tau^{\mathcal{M}_i}=\{(s_t,a_t,r_t,s_{t+1})\}_{t=1}^H\sim\mathcal{D}^{\text{train}}$ for training $E_\omega$ and $\pi_\varphi$}
            \STATE{Infer task representations $\mathbf{z}\sim E_\omega(\mathbf{z}|\tau^{\mathcal{M}_i})$}
            \STATE{Transform $\mathbf{z}$ via planar flows $\mathbf{z}_K= f_\eta(\mathbf{z})$}
            \ENDFOR
            \STATE{Train $E_\omega$: $\omega \leftarrow \omega-\lambda_\omega \hat{\nabla}_{\omega}\mathcal{L}_E$ (Eq.~\ref{equ:encoder})}
            \STATE{Train $W_\mu$: $\mu \leftarrow \mu-\lambda_\mu \hat{\nabla}_{\mu}\mathcal{L}_r$ (Eq.~\ref{equ:r})}
            \STATE{Train $\psi_{\theta_e}$: $\theta_e \leftarrow \theta_e-\lambda_\psi \hat{\nabla}_{\theta_e}\mathcal{J}_{\psi}(\theta_e)$ (Eq.~\ref{equ:critic})}
            \STATE{Train $\pi_\varphi$: $\varphi \leftarrow \varphi-\lambda_\varphi \hat{\nabla}_{\varphi}\mathcal{L}_\pi$ (Eq.~\ref{equ:pi})}
            \STATE{Train target $\psi_{\hat{\theta}_e}$: $\hat{\theta}_e \leftarrow \zeta\theta_e+(1-\zeta) \hat{\theta}_e$}
            \STATE{Train $\phi_\xi$: $\xi \leftarrow \xi-\lambda_\xi \hat{\nabla}_{\xi}\mathcal{L}_r$ (Eq.~\ref{equ:r})} 
            \STATE{Train $F_\eta$: $\eta \leftarrow \eta-\lambda_\eta \hat{\nabla}_{\eta}\mathcal{L}_E$ (Eq.~\ref{equ:encoder})}
        \ENDFOR
        \STATE{Update weight of $\alpha$ distribution $w_{g+1}$ using Eq.~\ref{equ:adaptive-w}}
    \ENDFOR
    \ENDWHILE
    \end{algorithmic}
    \textbf{\emph{Meta-testing}} \\
    \textbf{Input}: Offline testing datasets $\mathcal{D}^\text{Test}=\{{\tau^{\mathcal{M}_{i'}}}\}_{i'=1}^{I^\prime}$ of testing tasks $\{\mathcal{M}_{\text{Test}}^{i'}\}_{i'=1}^{I^\prime}\sim P(\mathcal{M})$, learned meta-policy $\pi_\varphi$ and context encoder $E_\omega$
    \begin{algorithmic}[1]
    \floatname{algorithm}{Procedure}
    \FOR{each testing task $\mathcal{M}^{test}_{i'}$}
    \FOR {t=0, \ldots, T-1}
    \STATE{Sample trajectories $\tau^{\mathcal{M}_{i'}}\sim\mathcal{D}^\text{Test}$}
    \STATE{Infer task representations $\mathbf{z}\sim E_\omega(\mathbf{z}|\tau^{\mathcal{M}_{i'}})$}
    \STATE{Roll out policy $\pi_{\varphi}(a|s,\mathbf{z})$ for evaluation}
    \ENDFOR
    \ENDFOR
    \end{algorithmic}  
\end{algorithm}

\begin{figure*}[htbp]
\centering
\subfigure{
\includegraphics[width=0.23\textwidth]{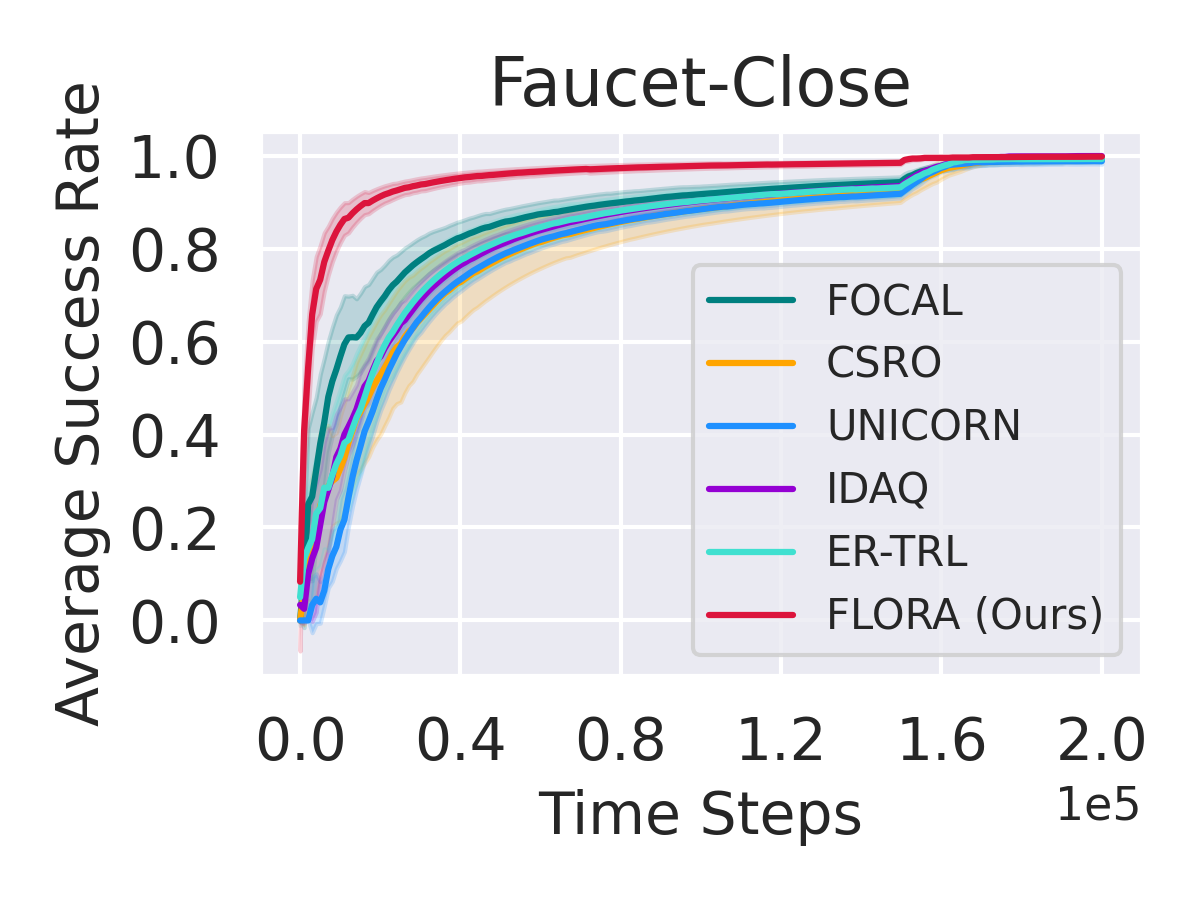}
}
\subfigure{
\includegraphics[width=0.23\textwidth]{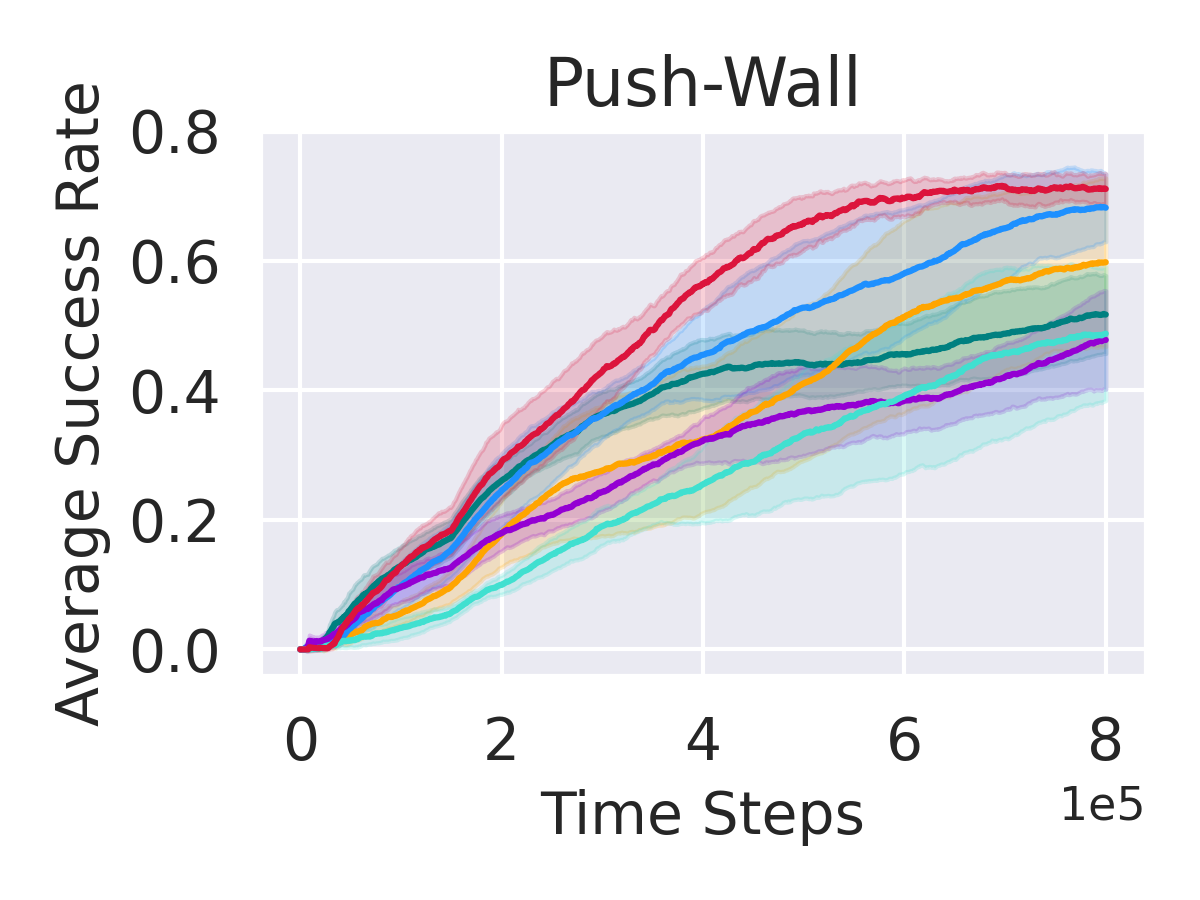}
}
\subfigure{
\includegraphics[width=0.23\textwidth]{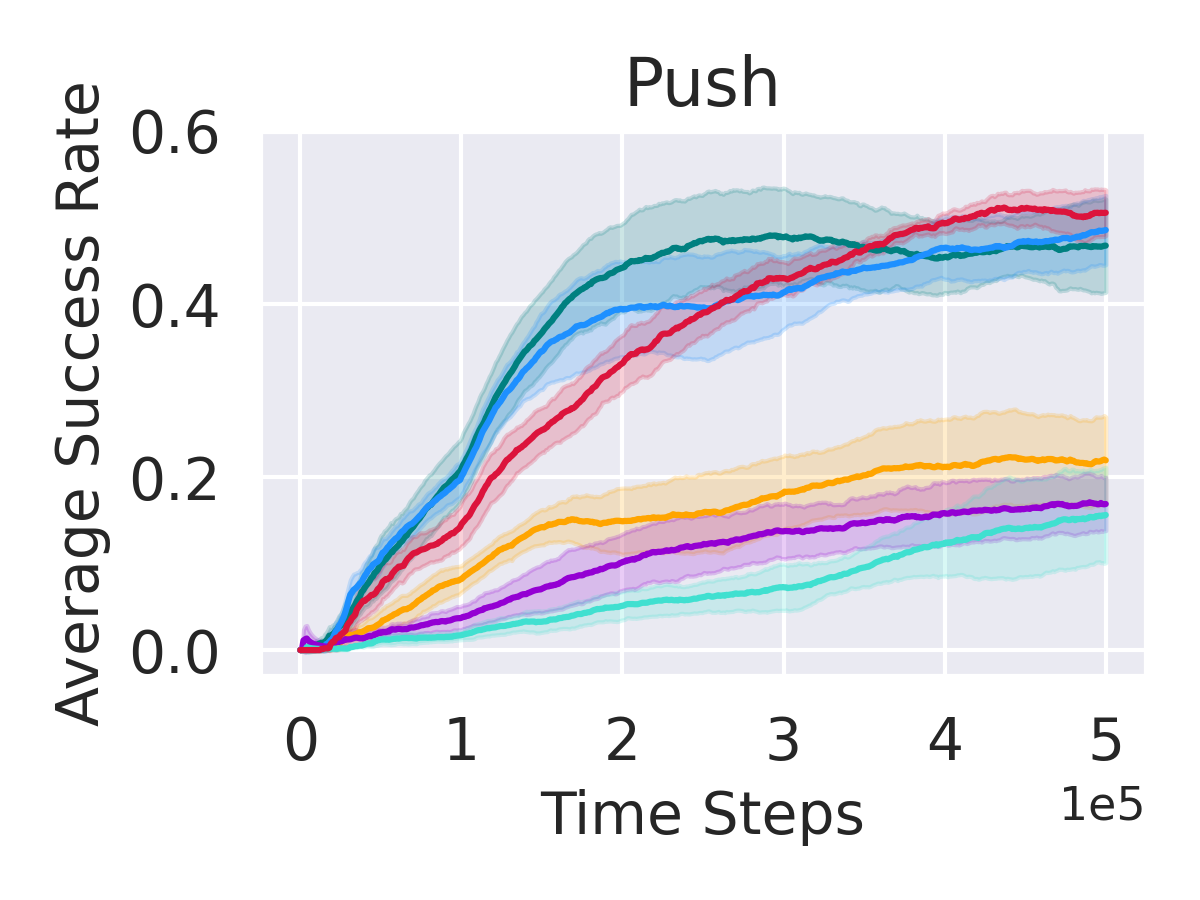}
}
\subfigure{
\includegraphics[width=0.23\textwidth]{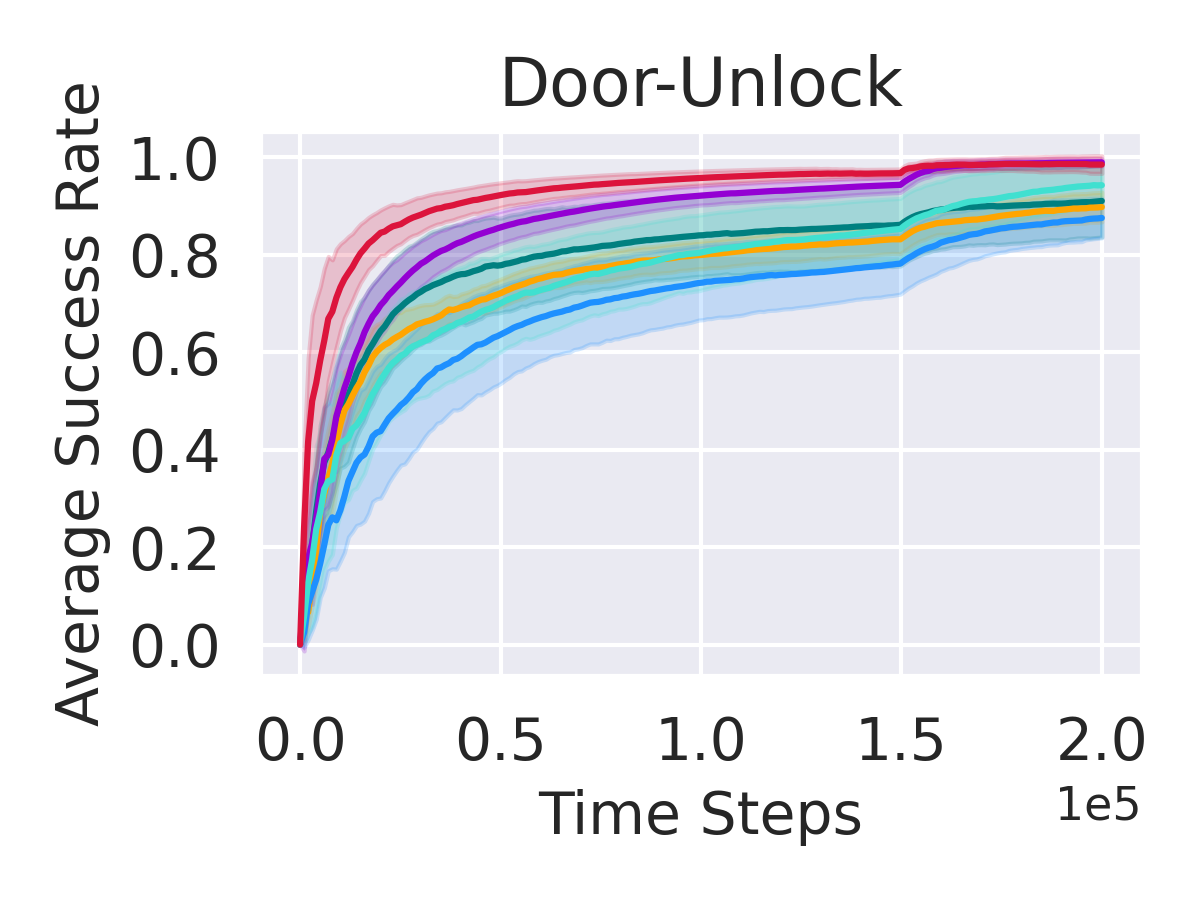}
}
\vskip -0.15in
\subfigure{
\includegraphics[width=0.23\textwidth]{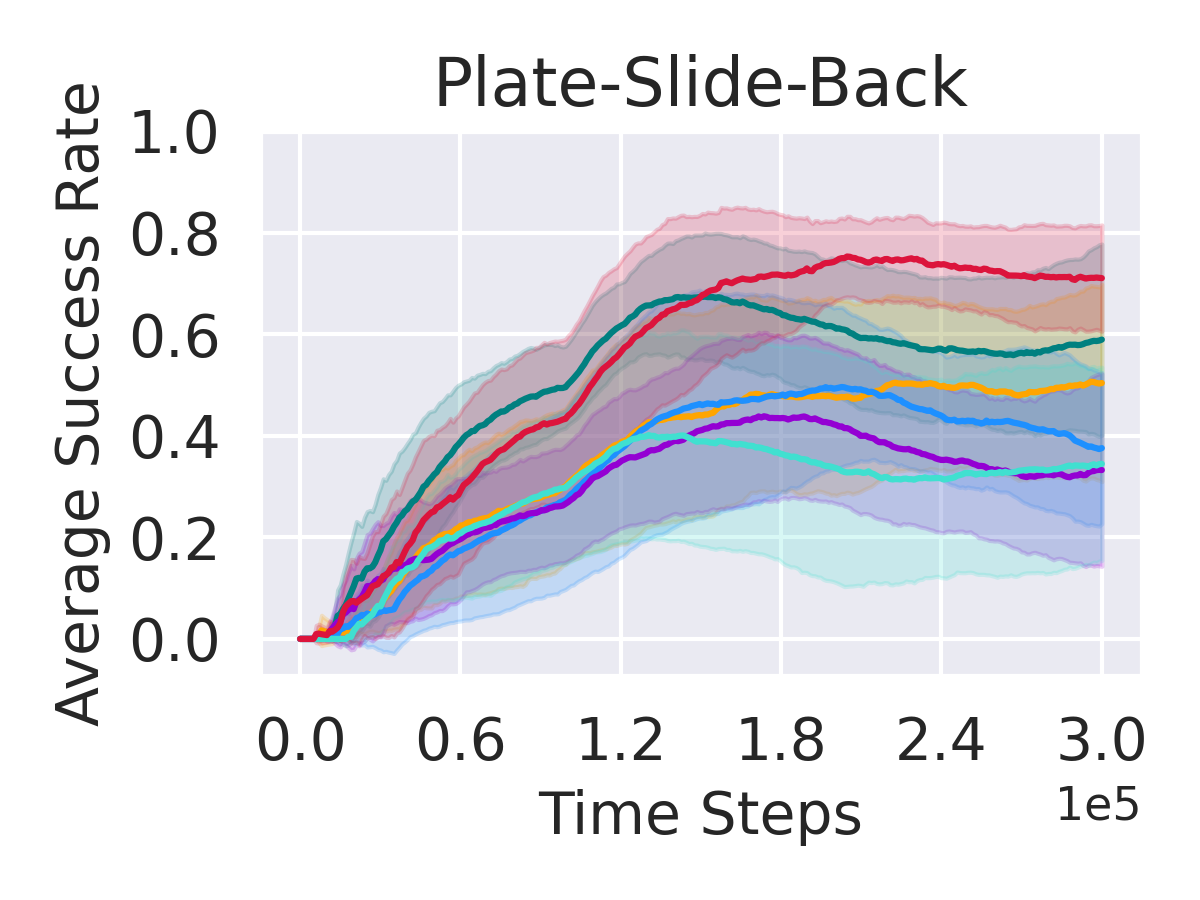}
}
\subfigure{
\includegraphics[width=0.23\textwidth]{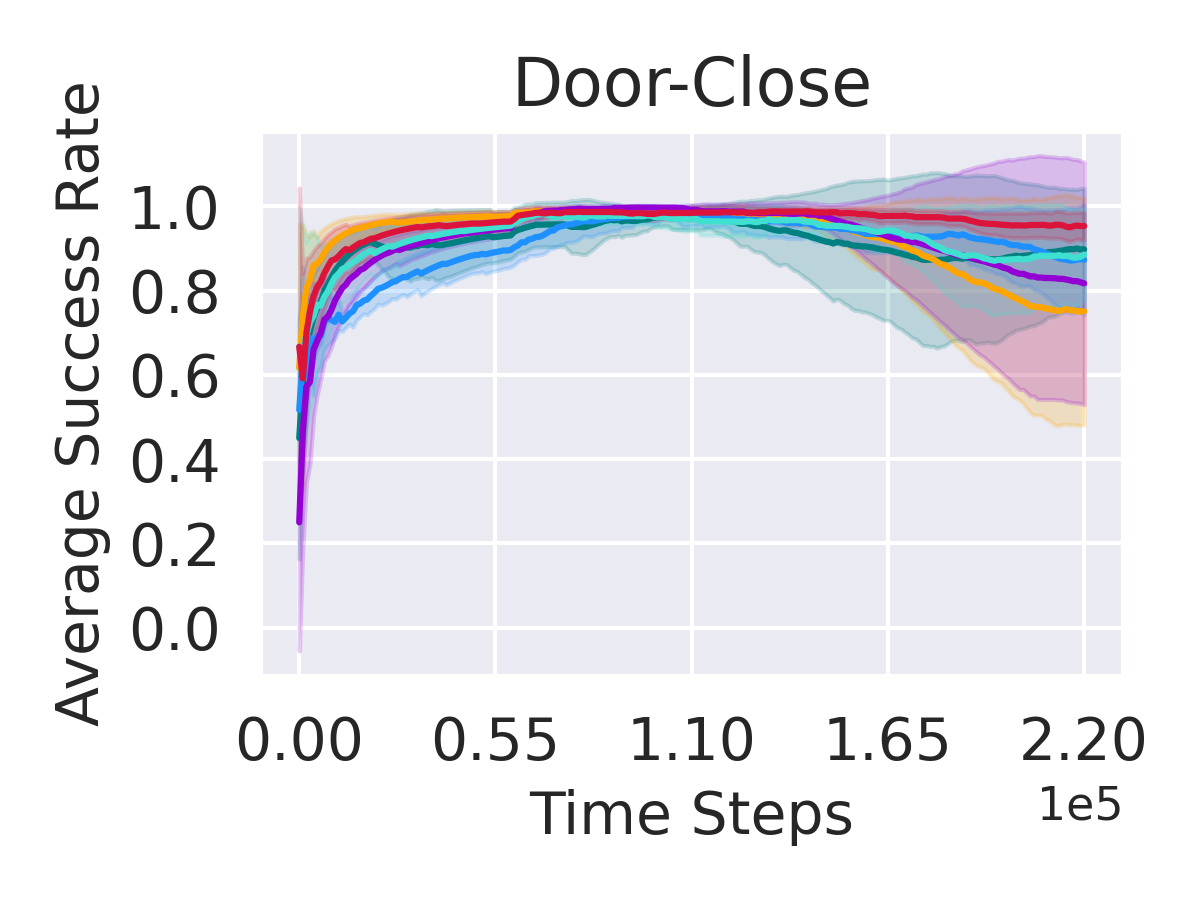}
}
\subfigure{
\includegraphics[width=0.23\textwidth]{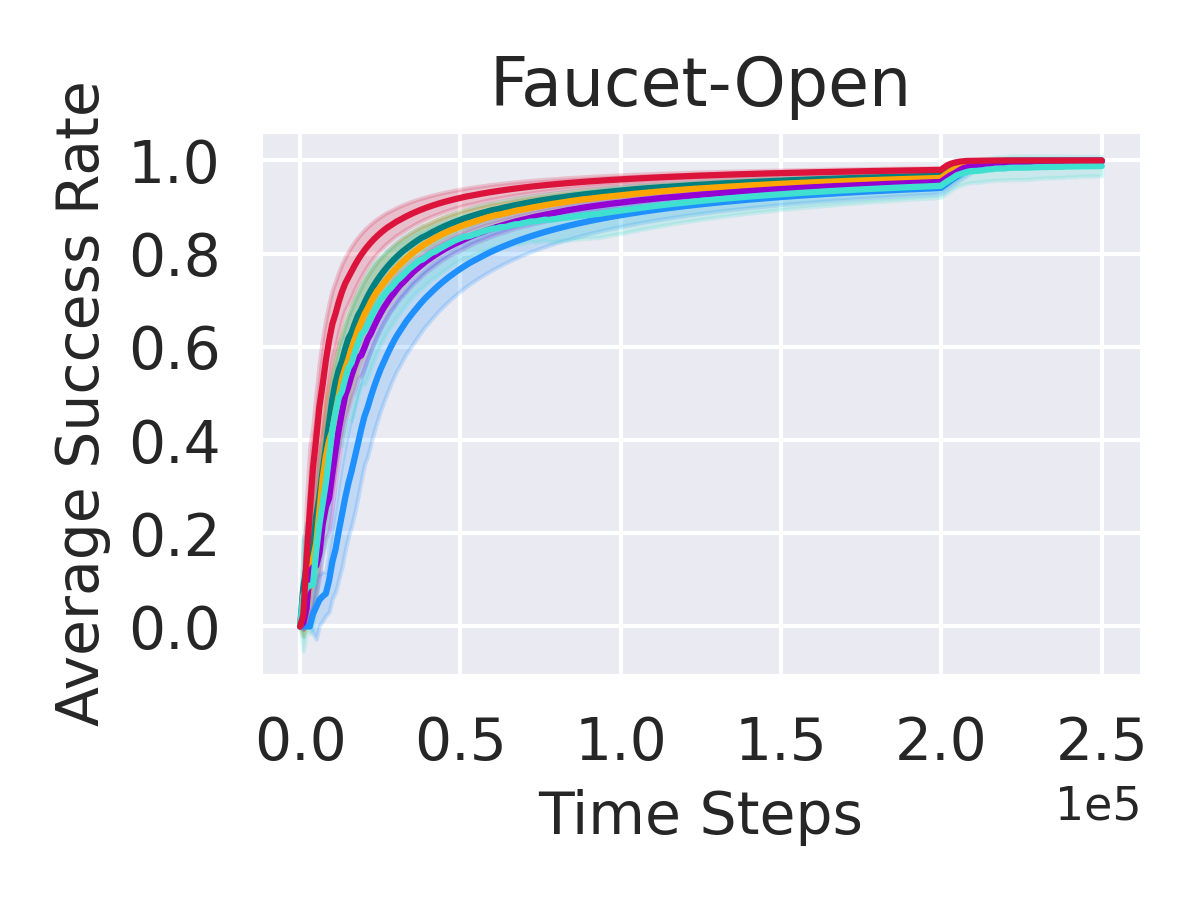}
}
\subfigure{
\includegraphics[width=0.23\textwidth]{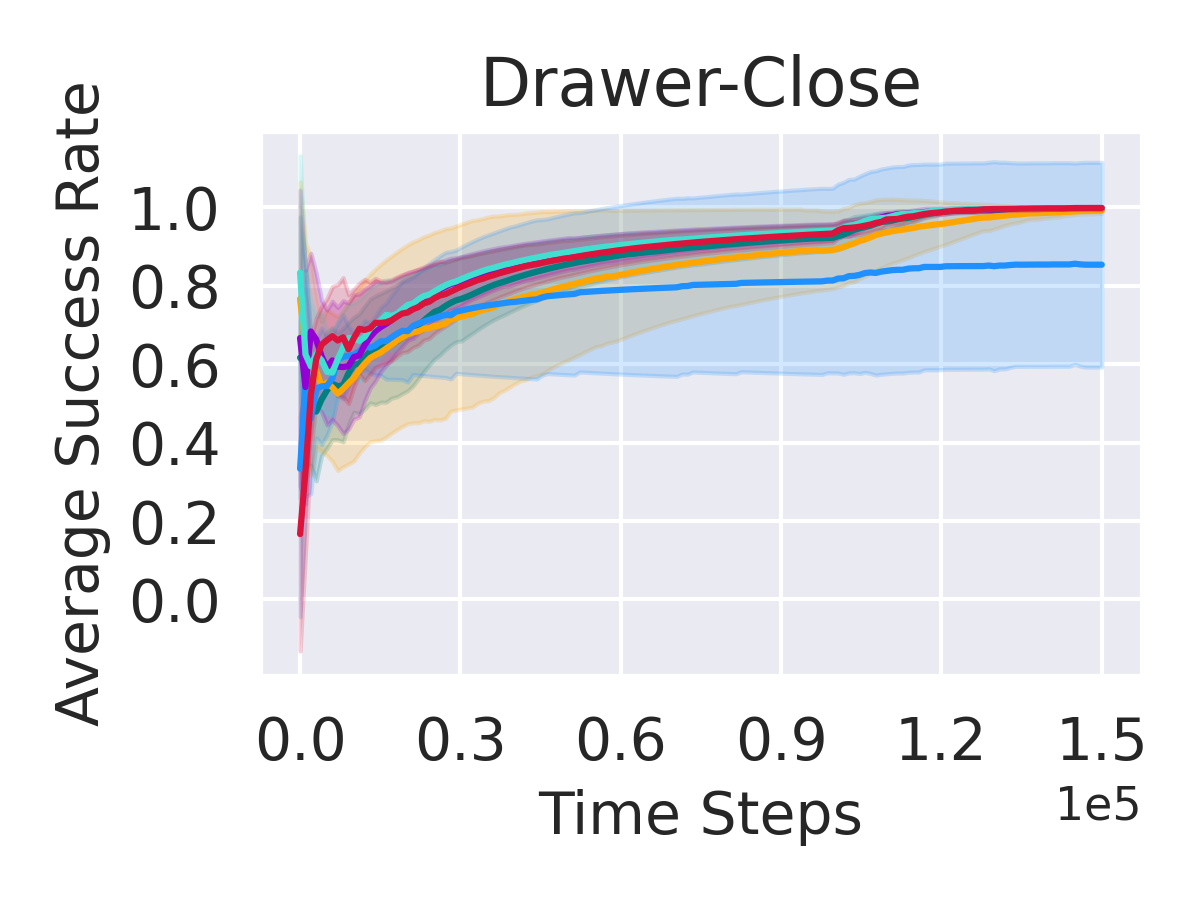}
}
\vskip -0.1in
\caption{Testing average performance on 8 ML1 environments over 6 random seeds.}
\label{fig:meta-world}
\end{figure*}

\begin{table*}[htbp]
\centering
\begin{adjustbox}{max width=1.00\textwidth}
\begin{tabular}{@{}lccccccccccc@{}}
\toprule
& \multicolumn{1}{c}{Faucet-Close} & \multicolumn{1}{c}{Push-Wall} & \multicolumn{1}{c}{Push} & \multicolumn{1}{c}{Door-Unlock} & \multicolumn{1}{c}{Plate-Slide-Back} & \multicolumn{1}{c}{Door-Close} & \multicolumn{1}{c}{Faucet-Open} & \multicolumn{1}{c}{Drawer-Close}\\
\midrule
FOCAL & 99.67$\pm{0.75}$ & 52.73$\pm{\textbf{13.55}}$ & 45.63$\pm{13.29}$ & 89.96$\pm{12.61}$ & 61.75$\pm{37.69}$ & 91.56$\pm{15.74}$ & 99.96$\pm{0.09}$ & 99.80$\pm{0.39}$ \\
IDAQ & 99.94$\pm{0.12}$ & 50.90$\pm{18.10}$ & 18.00$\pm{\textbf{10.33}}$ & 98.15$\pm{\textbf{3.01}}$ & 36.67$\pm{38.97}$ & 80.22$\pm{36.88}$ & 99.83$\pm{0.36}$ &  99.90$\pm{0.22}$\\
CSRO & 99.89$\pm{0.25}$ & 58.27$\pm{20.54}$ & 21.03$\pm{10.55}$ & 87.61$\pm{10.30}$ & 51.83$\pm{36.24}$ & 74.22$\pm{39.50}$ & 99.92$\pm{0.18}$ &  99.73$\pm{0.60}$\\
UNICORN & 99.56$\pm{0.90}$ & 69.47$\pm{13.82}$ & 49.57$\pm{11.45}$ &  84.44$\pm{13.17}$& 37.33$\pm{36.12}$ & 83.06$\pm{28.34}$ & 99.99$\pm{0.02}$ &  85.77$\pm{31.74}$\\
ER-TRL & 99.61$\pm{0.78}$ & 50.80$\pm{19.40}$ & 16.80$\pm{11.15}$ & 90.74$\pm{11.45}$ & 38.92$\pm{36.13}$ & 92.33$\pm{14.19}$ & 99.51$\pm{1.10}$ & 99.77$\pm{0.52}$\\
\textbf{\modelname} & \textbf{100.00}$\pm{\textbf{0.00}}$ & \textbf{71.03}$\pm{17.39}$ & \textbf{50.07}$\pm{13.11}$ & \textbf{98.27}$\pm{3.44}$ & \textbf{73.25}$\pm{\textbf{31.87}}$ & \textbf{95.00}$\pm{\textbf{9.89}}$ & \textbf{100.00}$\pm{\textbf{0.00}}$ & \textbf{99.93}$\pm{\textbf{0.15}}$\\
\bottomrule
\end{tabular}
\end{adjustbox}
\caption{Converged average test success rate $\pm$ standard error $(\%)$ across 6 random seeds on Meta-World.}
\label{tab:ml1}
\end{table*}

\begin{figure}[hbpt]
\centering
\subfigure{
\includegraphics[width=0.224\textwidth]{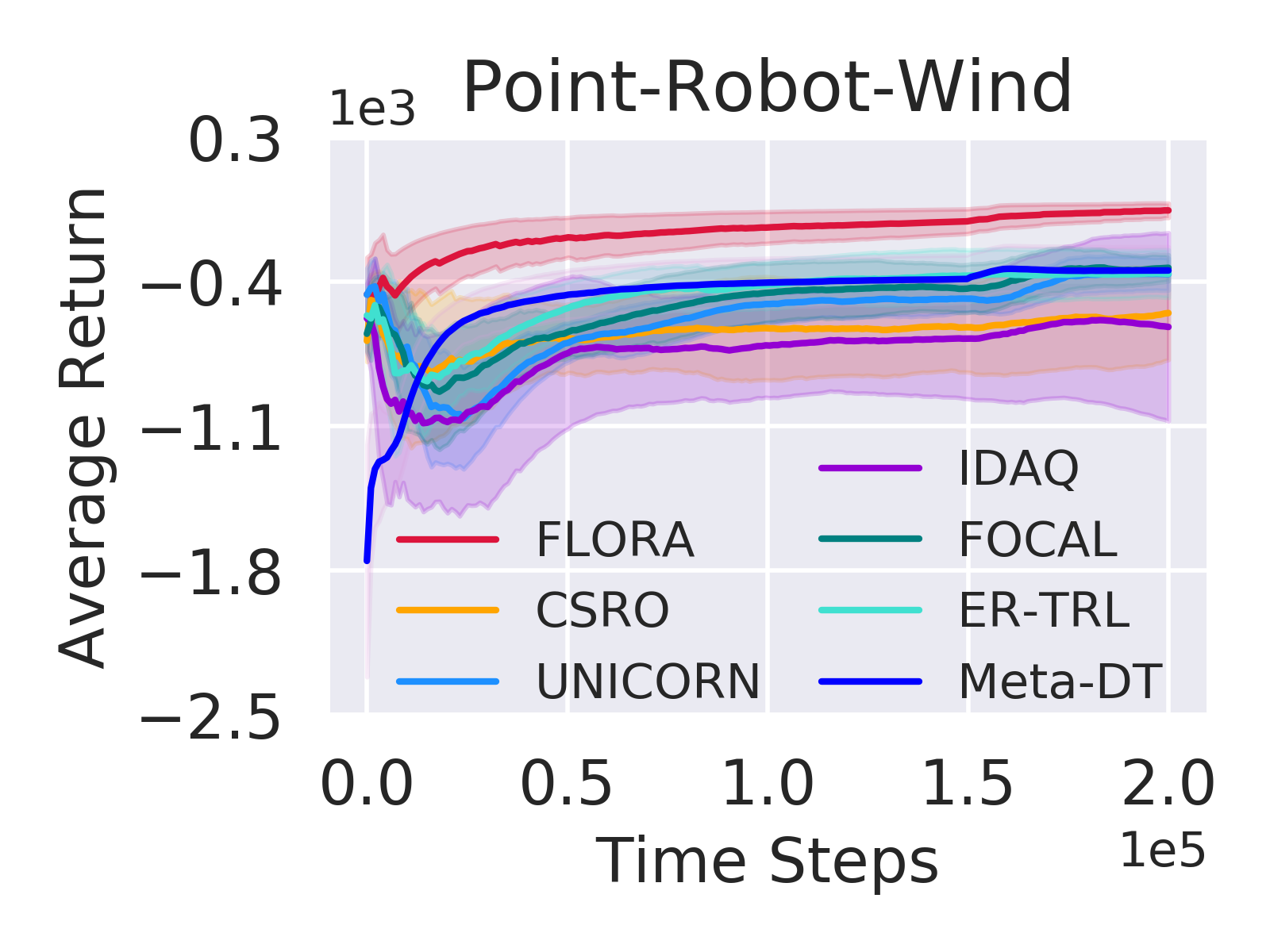}
}
\subfigure{
\includegraphics[width=0.224\textwidth]{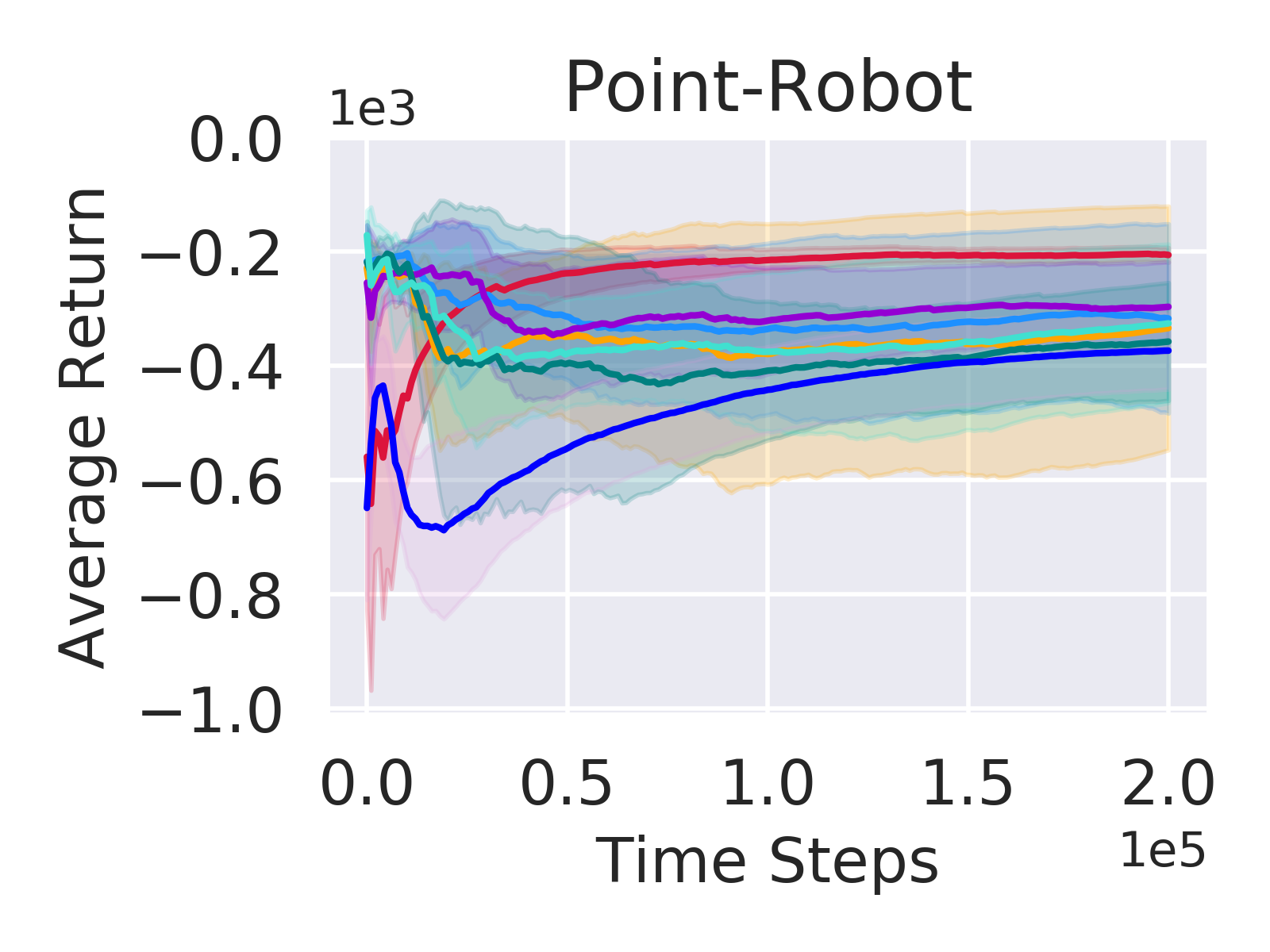}
}
\vskip -0.1in
\caption{Testing average performance of \modelname and baselines on MuJoCo over 6 random seeds.}
\label{fig:mujoco}
\end{figure}

\subsubsection{Baselines} 1) \textbf{FOCAL} devised an inverse power distance to cluster similar tasks; 2) \textbf{IDAQ} tackled distribution shift via return-based uncertainty; 3) \textbf{CSRO} used mutual information to reduce behavior policy bias; 4) \textbf{UNICORN} leveraged a decoder to reconstruct task representations; 5) \textbf{Meta-DT} utilized sequence modeling and a context-aware world model to achieve efficient generalization; 6) \textbf{ER-TRL} employed a GAN to maximize behavior policy entropy.

\subsection{Performance Comparison}
\textbf{Meta-World Results.} Fig.~\ref{fig:meta-world} and Table~\ref{tab:ml1} summarize the testing performance and convergent results, respectively. \modelname consistently achieves the highest final performance in all environments and demonstrates superior adaptation efficiency. Notably, in high-quality datasets, such as Door-Unlock, IDAQ shows relatively quick convergence but struggles in more complex Push, where offline datasets contain a higher proportion of suboptimal trajectories. IDAQ misclassifies more suboptimal data as in-distribution context, leading to significant performance degradation. In contrast, \modelname maintains rapid adaptation and achieves exceptional final performance with relatively minimal variance regardless of dataset quality. CSRO fails to adapt to new tasks in most environments, possibly because the minimization of mutual information results in overly sparse and independent task representations. In challenging environments such as Push, the relationships of different tasks are not effectively captured by CSRO, disrupting the shared structure across tasks and making it difficult for the meta-policy to generalize to new tasks. UNICORN, which decomposes task structures in the representation space, achieves faster policy convergence and higher final performance on Push and Push-Wall. However, UNICORN's decoder may introduce noise when reconstructing task structures to interfere with the encoder's task inference. Moreover, the trade-off between reconstruction loss and the regularization term can lead to intrinsic information loss, making UNICORN fall behind other baselines. ER-TRL faces a similar dilemma: although entropy maximization promotes action diversity, the inherent training instability of the generator may lead to mode collapse. Surprisingly, FOCAL exceeds most baselines in adaptability and exhibits impressive final performance in Push and Door-Close. However, the similarity between tasks may be difficult to define accurately across the broader distributed Meta-World, making FOCAL's distance metric unable to effectively distinguish diverse tasks. Conversely, \modelname consistently outperforms other baselines regarding data efficiency and final testing success rate.

In the Door-Close and Plate-Slide-Back, all baselines exhibit varying performance degradation. They fail to detect and address extrapolation errors promptly, causing the errors to accumulate over time and ultimately resulting in policy collapse. In comparison, \modelname accurately approximates complex task distributions through a chain of invertible transformations and effectively mitigates these errors with adaptive feature learning, consistently achieving rapid adaptation and superior asymptotic performance.

\textbf{MuJoCo Results.} Fig.~\ref{fig:mujoco} demonstrates that \modelname has exceptional adaptation efficiency and converged performance with minimal variance. By decoupling transition dynamics and reward functions in both the representation and policy spaces, FLORA accurately captures and shares similar structures of different tasks, thus facilitating fast and stable adaptation to unseen tasks. Specifically, in Point-Robot-Wind, \modelname mitigates the overestimation of the decomposed features that represent dynamics through adaptive correction, effectively reducing extrapolation errors. In contrast, other baselines ignore the impact of meta-task structures on extrapolation errors, resulting in severe fluctuations and increased variance during policy learning.

\begin{figure}[hbpt]
\centering
\subfigure{
\includegraphics[width=0.222\textwidth]{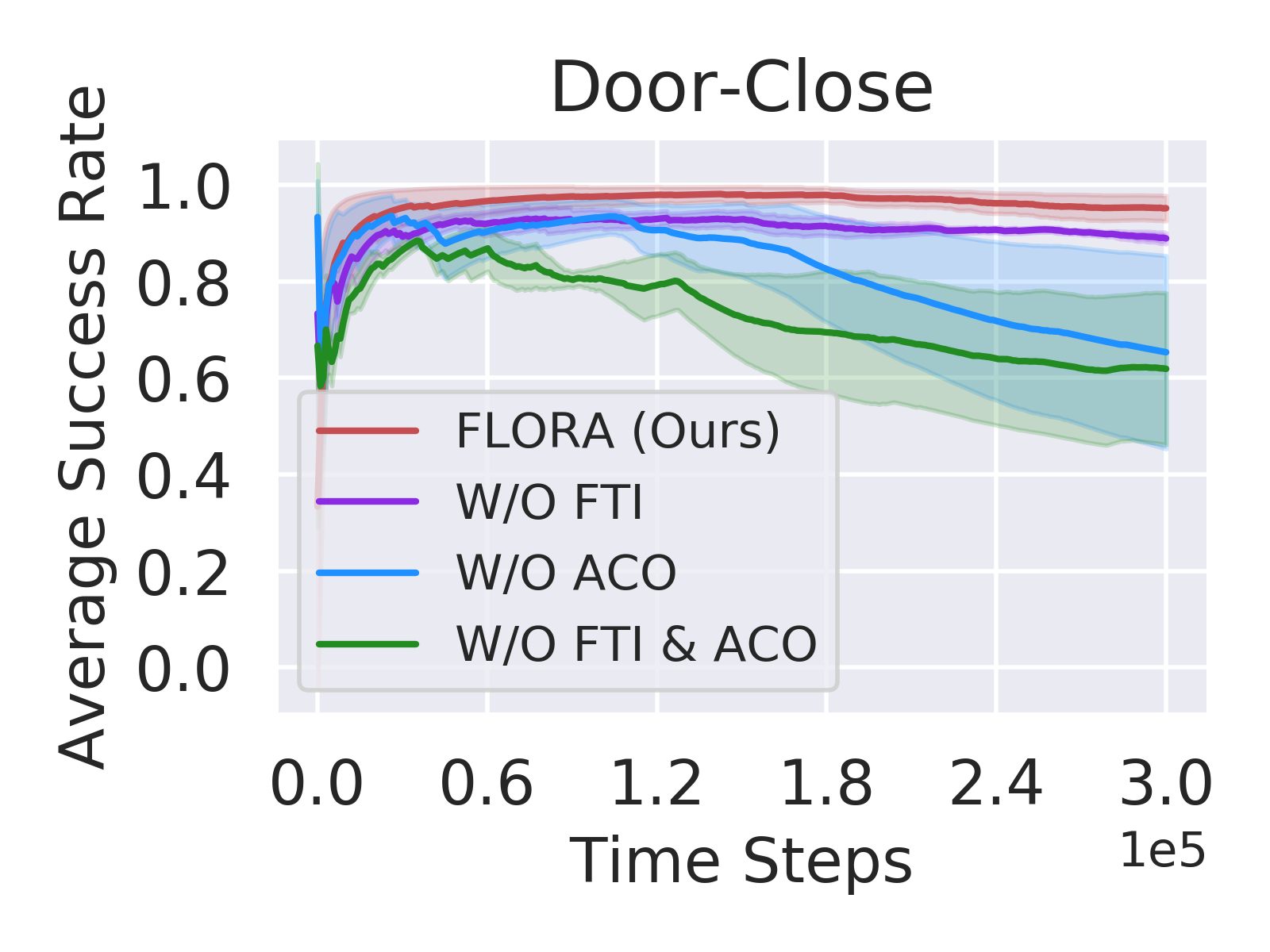}
}
\subfigure{
\includegraphics[width=0.222\textwidth]{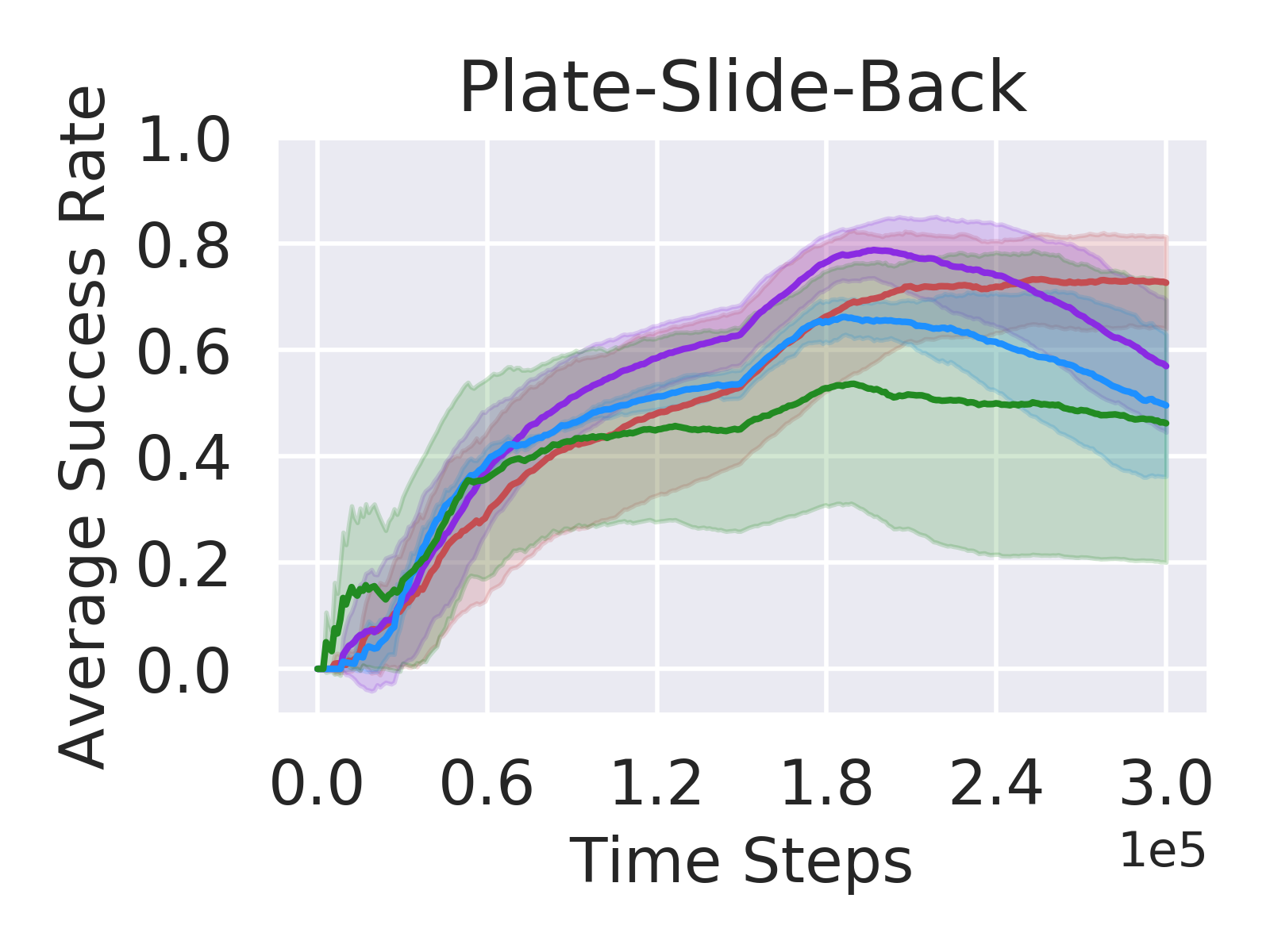}
}
\vskip -0.1in
\caption{Ablation study.}
\label{fig:new-ablation}
\end{figure}

\textbf{Ablation Study.} When meta-policies are trained with suboptimal offline datasets in a broader distribution of tasks, context-aware SFs can easily overgeneralize in the presence of OOD actions. This leads to an overestimation of $Q$ values and a subsequent decline in policy performance, as observed in the performance of the model without FTI and ACO in Fig.~\ref{fig:new-ablation}. Specifically, the ACO module plays a critical role in the prompt detection of OOD samples by estimating the uncertainty of training samples. It conservatively estimates the rewards of low-reward OOD samples through a reward feedback mechanism, effectively suppressing feature overgeneralization while preserving the performance gains from policy space decoupling for both in-distribution samples and high-return OOD samples. FTI models more accurate task representation distributions in challenging tasks, thereby enhancing the compactness of task representations. As shown in Fig.~\ref{fig:new-ablation}, ACO effectively mitigates policy decline and prevents policy collapse. Meanwhile, it accelerates policy convergence and improves overall performance.

\begin{figure}[ht]
\centering
\subfigure{
\includegraphics[width=0.222\textwidth]{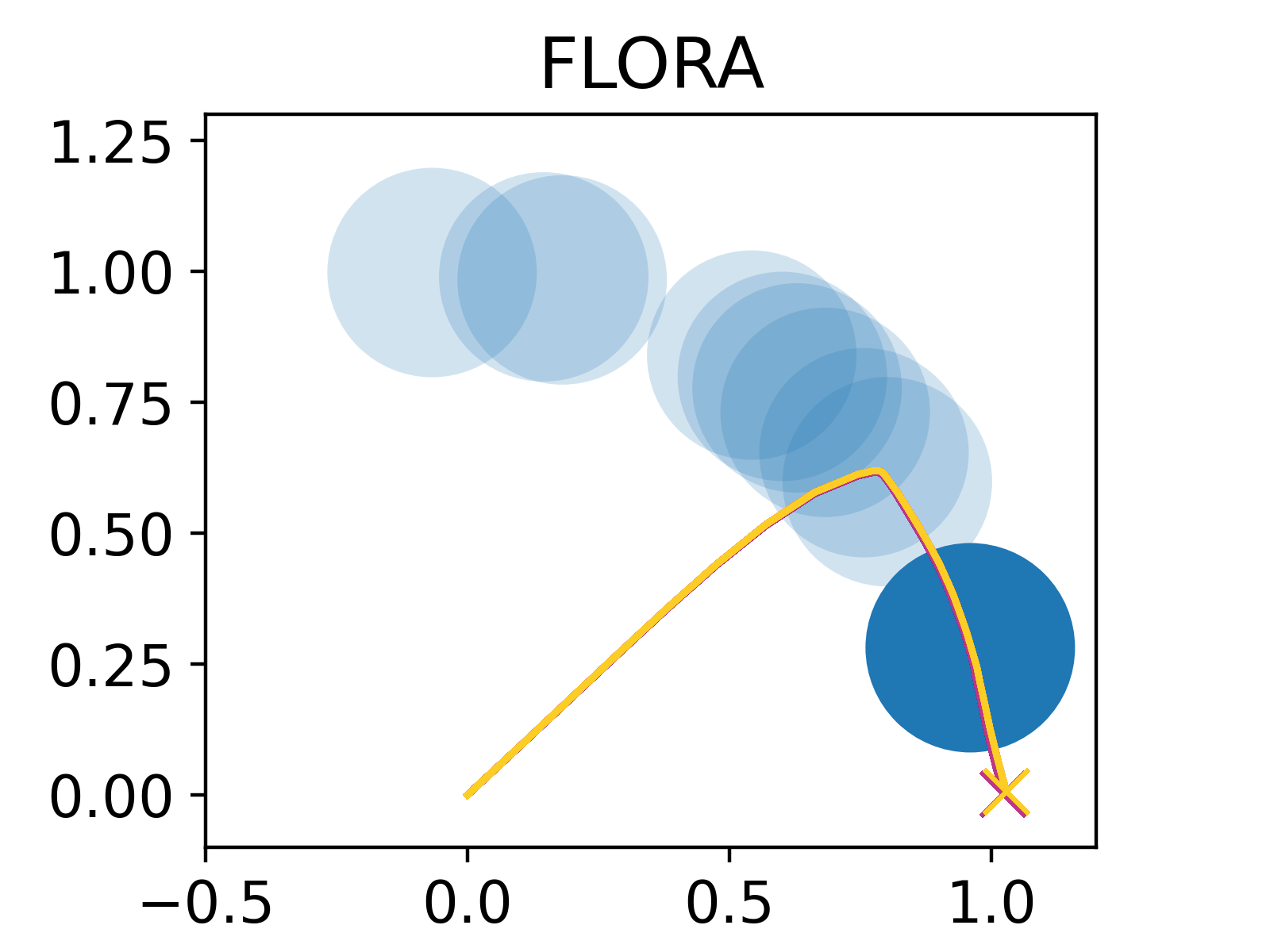}
}
\subfigure{
\includegraphics[width=0.222\textwidth]{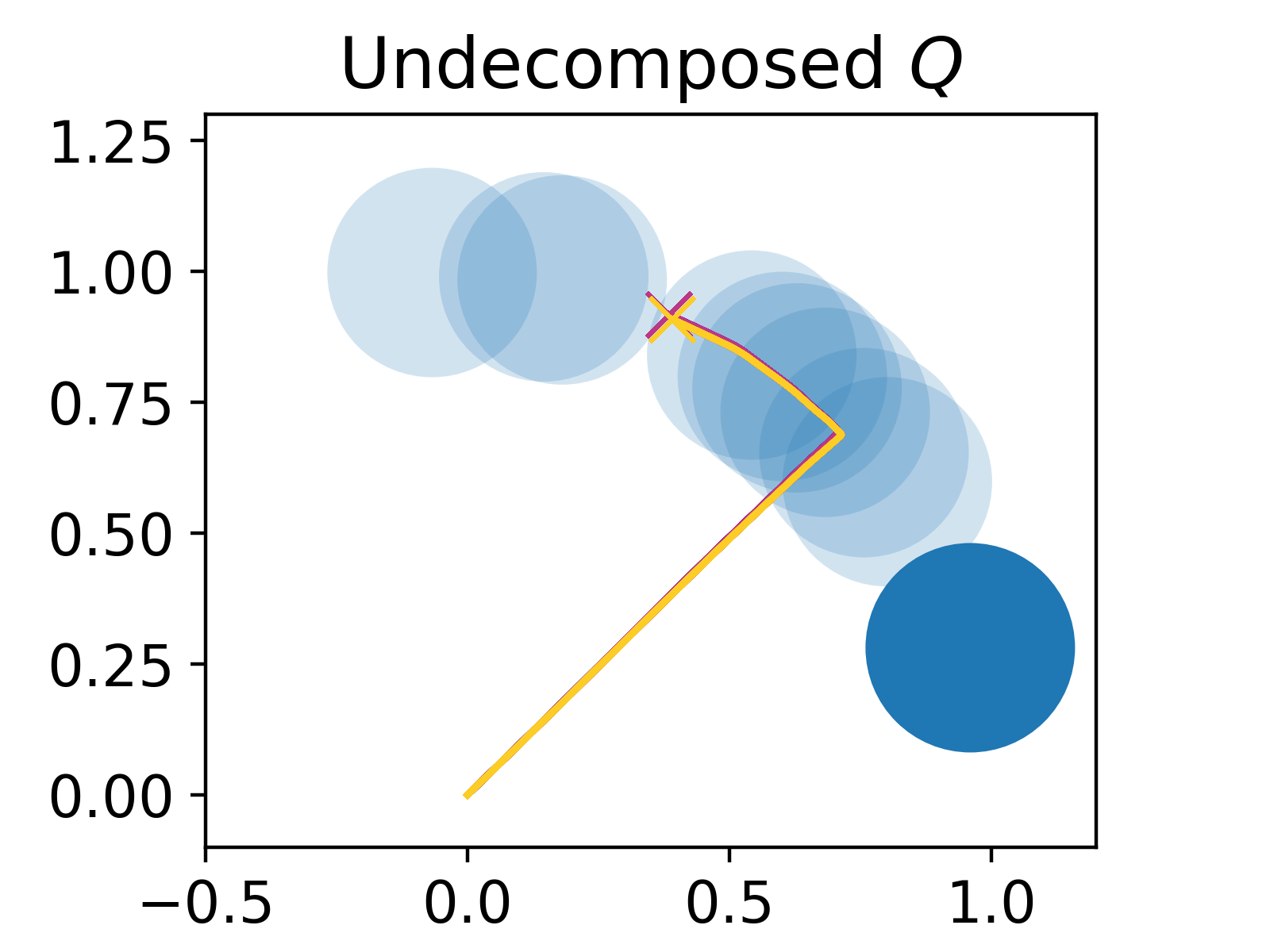}
}
\vskip -0.1in
\caption{Visualization of sparse 2D navigation.}
\label{fig:visual}
\end{figure}

\textbf{Rapid adaptation in sparse tasks.} We also validate the adaptation efficiency of \modelname on sparse point-robot, where the agent is trained to navigate to training goals (light blue circles in Fig.~\ref{fig:visual}) and tested on a distinct unseen goal (dark blue). A reward is given only when the agent is within a certain radius of the goal. As in Fig.~\ref{fig:visual}, \modelname navigates to the area around the test goal more quickly. This indicates that the decomposed representation and policy space facilitate learning common structures in the sparse setting.

\textbf{Evaluation on Non-Stationary Tasks.} Following the setup of~\cite{bing2023meta}, we evaluated FLORA and the baselines on more challenging tasks, where the agent learns to adapt to time-evolving task structures (with reward functions that vary over time). In our setup, the task period is sampled from a Gaussian distribution $N(250, 10)$ for each 1000 steps per epoch. As illustrated in Fig.~\ref{fig:evaluate} (left), our FLORA consistently achieves the highest success rate with minimal variance and maintains excellent adaptability.

\begin{figure}[hbpt]
\centering
\subfigure{
\includegraphics[width=0.222\textwidth]{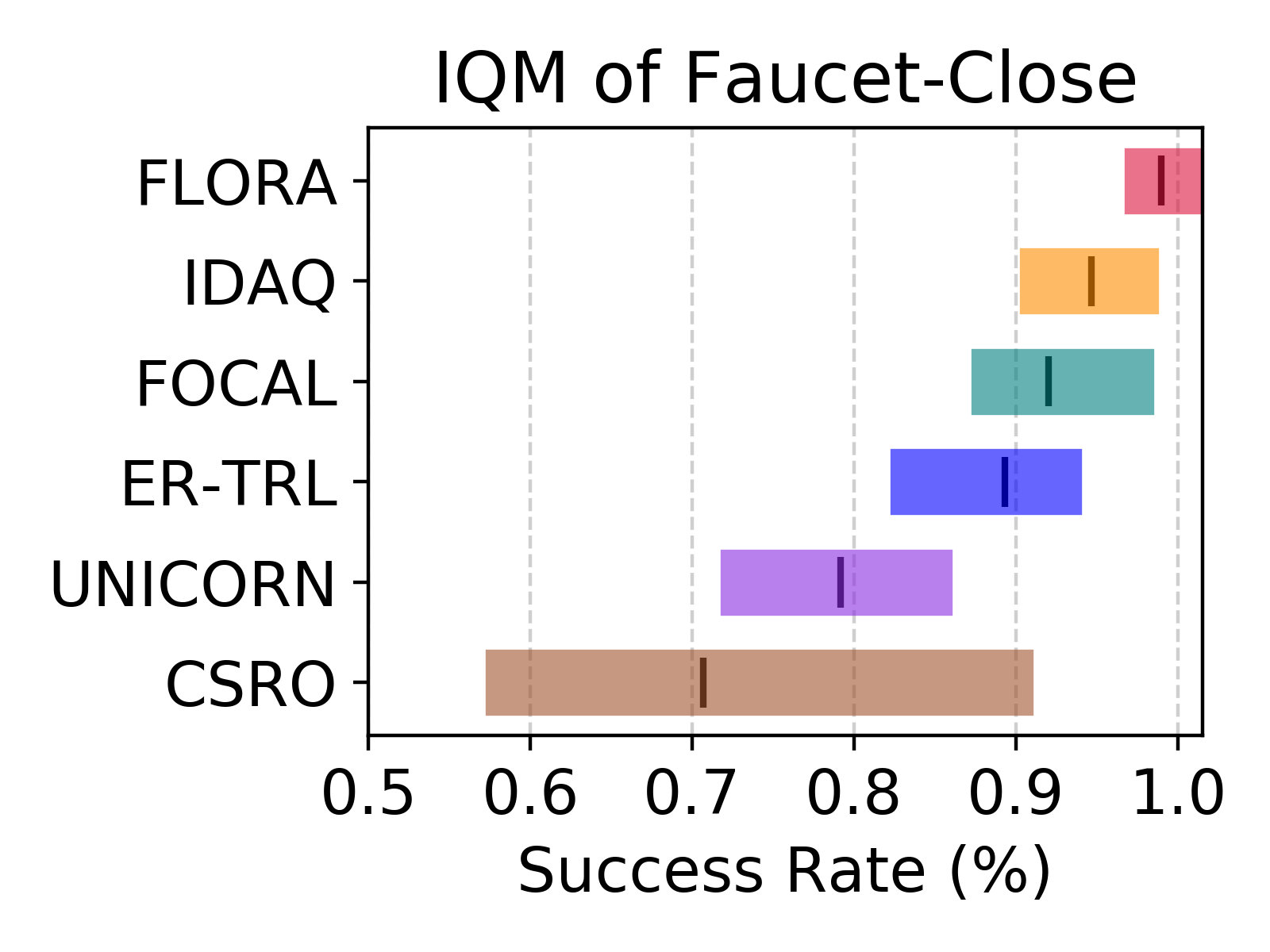}
}
\subfigure{
\includegraphics[width=0.222\textwidth]{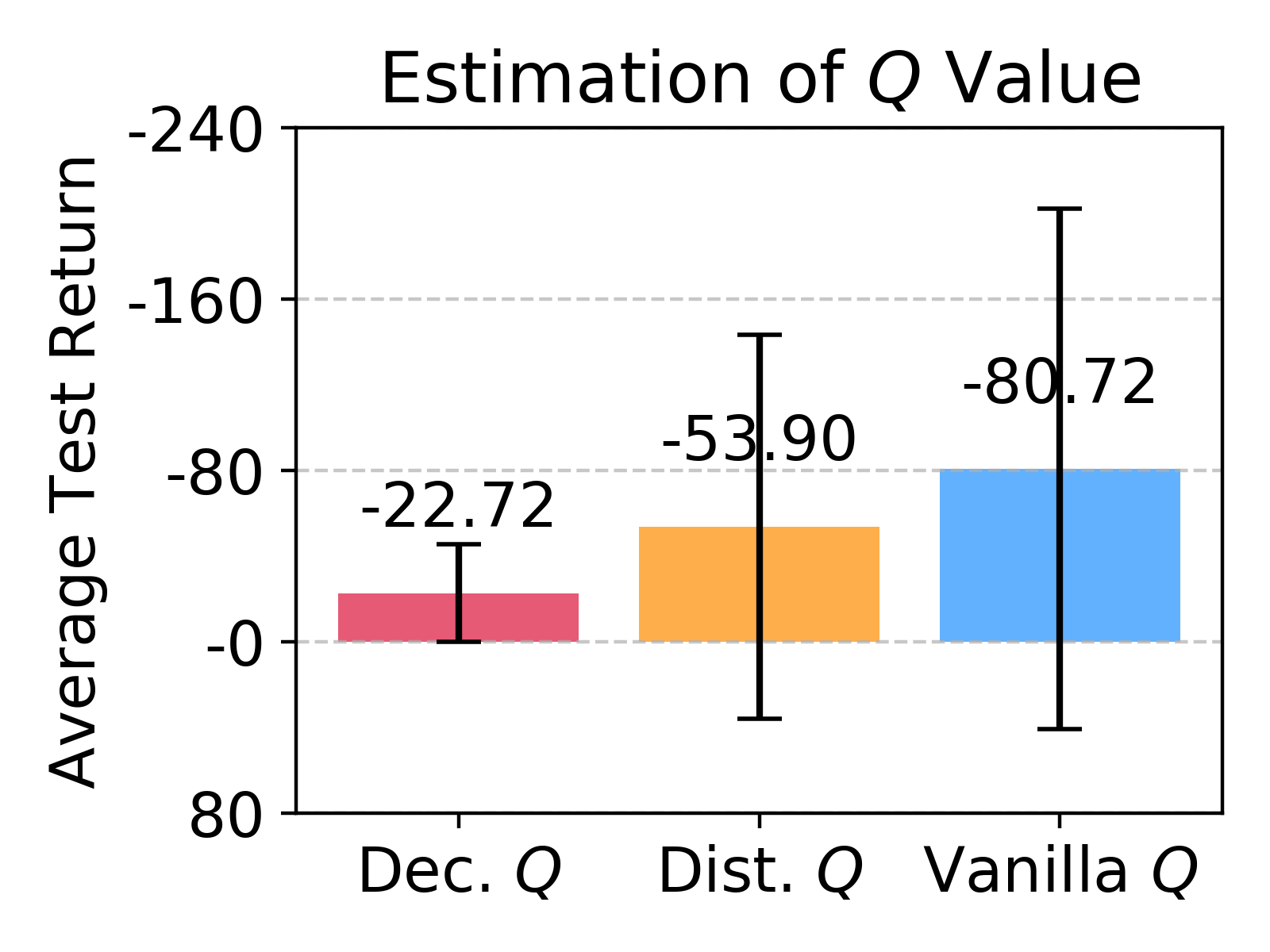}
}
\vskip -0.1in
\caption{Evaluation on non-stationary tasks (left) and with different $Q$ value estimation in Point-Robot-Wind (right).}
\label{fig:evaluate}
\end{figure}

\textbf{Comparison of Different $Q$ Value Estimations.} We investigated the influence of $Q$ value learning across 6 random seeds in Point-Robot-Wind, replacing our backbone (decomposed $Q$ network) with a vanilla (standard and undecomposed) and a distributional $Q$ (Dist. $Q$) network to measure uncertainty, respectively. As in Fig.~\ref{fig:evaluate} (right),  Dist. $Q$ alleviates the accumulation of errors by fitting the distribution of $Q$ values more effectively than vanilla $Q$. Furthermore, Dec. $Q$ enhances task distinction and uncertainty identification, achieving the highest return with minimal variance.

\section{Conclusion}
OMRL primarily confronts two major challenges: accurately inferring task distributions and alleviating extrapolation errors. This paper proposes \modelname to effectively address these issues by: 1) flexibly and efficiently approximating complex task distributions through flow-based task inference to derive accurate task representations, and 2) precisely identifying OOD samples via uncertainty estimation of decoupled features and mitigating feature overgeneralization through adaptive correction. The experimental results on challenging Meta-World and MuJoCo demonstrate that \modelname outperforms other baselines in accurately discerning diverse tasks and rapidly adapting to new tasks. 

\section{Acknowledgments}
This work was partially supported by the NSFC under Grants 92270125 and 62276024; by the Fundamental Research Funds for the Central Universities, JLU, under Grant 93K172025K01; and by the Fundamental Research Funds for the Central Universities under Grant 2025CX01010.

\bibliography{aaai2026}

@article{dayan1993improving,
  title={Improving generalization for temporal difference learning: The successor representation},
  author={Dayan, Peter},
  journal={Neural computation},
  volume={5},
  number={4},
  pages={613--624},
  year={1993},
  publisher={MIT Press}
}

@article{barreto2017successor,
  title={Successor features for transfer in reinforcement learning},
  author={Barreto, Andr{\'e} and Dabney, Will and Munos, R{\'e}mi and Hunt, Jonathan J and Schaul, Tom and van Hasselt, Hado P and Silver, David},
  journal={Advances in neural information processing systems},
  volume={30},
  year={2017}
}

@inproceedings{barreto2018transfer,
  title={Transfer in deep reinforcement learning using successor features and generalised policy improvement},
  author={Barreto, Andre and Borsa, Diana and Quan, John and Schaul, Tom and Silver, David and Hessel, Matteo and Mankowitz, Daniel and Zidek, Augustin and Munos, Remi},
  booktitle={International Conference on Machine Learning},
  pages={501--510},
  year={2018},
  organization={PMLR}
}

@article{han2022meta,
  title={Meta Reinforcement Learning with Successor Feature Based Context},
  author={Han, Xu and Wu, Feng},
  journal={arXiv preprint arXiv:2207.14723},
  year={2022}
}

@inproceedings{wang2024metacard,
  title={MetaCARD: Meta-Reinforcement Learning with Task Uncertainty Feedback via Decoupled Context-Aware Reward and Dynamics Components},
  author={Wang, Min and Li, Xin and Zhang, Leiji and Wang, Mingzhong},
  booktitle={Proceedings of the AAAI Conference on Artificial Intelligence},
  volume={38},
  pages={15555--15562},
  year={2024}
}

@inproceedings{rezende2015variational,
  title={Variational inference with normalizing flows},
  author={Rezende, Danilo and Mohamed, Shakir},
  booktitle={International conference on machine learning},
  pages={1530--1538},
  year={2015},
  organization={PMLR}
}

@inproceedings{dabney2018distributional,
  title={Distributional reinforcement learning with quantile regression},
  author={Dabney, Will and Rowland, Mark and Bellemare, Marc and Munos, R{\'e}mi},
  booktitle={Proceedings of the AAAI conference on artificial intelligence},
  volume={32},
  year={2018}
}

@article{moskovitz2021tactical,
  title={Tactical optimism and pessimism for deep reinforcement learning},
  author={Moskovitz, Ted and Parker-Holder, Jack and Pacchiano, Aldo and Arbel, Michael and Jordan, Michael},
  journal={Advances in Neural Information Processing Systems},
  volume={34},
  pages={12849--12863},
  year={2021}
}

@article{Todorov2012MuJoCoAP,
  title={MuJoCo: A physics engine for model-based control},
  author={Emanuel Todorov and Tom Erez and Yuval Tassa},
  journal={2012 IEEE/RSJ International Conference on Intelligent Robots and Systems},
  year={2012},
  pages={5026-5033},
  url={https://api.semanticscholar.org/CorpusID:5230692}
}

@article{Yu2019MetaWorldAB,
  title={Meta-World: A Benchmark and Evaluation for Multi-Task and Meta Reinforcement Learning},
  author={Tianhe Yu and Deirdre Quillen and Zhanpeng He and Ryan C. Julian and Karol Hausman and Chelsea Finn and Sergey Levine},
  journal={ArXiv},
  year={2019},
  volume={abs/1910.10897},
  url={https://api.semanticscholar.org/CorpusID:204852201}
}

@inproceedings{haarnoja2018soft,
  title={Soft actor-critic: Off-policy maximum entropy deep reinforcement learning with a stochastic actor},
  author={Haarnoja, Tuomas and Zhou, Aurick and Abbeel, Pieter and Levine, Sergey},
  booktitle={International conference on machine learning},
  pages={1861--1870},
  year={2018},
  organization={PMLR}
}

@article{mao2024doubly,
  title={Doubly mild generalization for offline reinforcement learning},
  author={Mao, Yixiu and Wang, Qi and Qu, Yun and Jiang, Yuhang and Ji, Xiangyang},
  journal={arXiv preprint arXiv:2411.07934},
  year={2024}
}

@inproceedings{ni2023metadiffuser,
  title={Metadiffuser: Diffusion model as conditional planner for offline meta-rl},
  author={Ni, Fei and Hao, Jianye and Mu, Yao and Yuan, Yifu and Zheng, Yan and Wang, Bin and Liang, Zhixuan},
  booktitle={International Conference on Machine Learning},
  pages={26087--26105},
  year={2023},
  organization={PMLR}
}

@inproceedings{mitchell2021offline,
  title={Offline meta-reinforcement learning with advantage weighting},
  author={Mitchell, Eric and Rafailov, Rafael and Peng, Xue Bin and Levine, Sergey and Finn, Chelsea},
  booktitle={International Conference on Machine Learning},
  pages={7780--7791},
  year={2021},
  organization={PMLR}
}

@inproceedings{yuan2022robust,
  title={Robust task representations for offline meta-reinforcement learning via contrastive learning},
  author={Yuan, Haoqi and Lu, Zongqing},
  booktitle={International Conference on Machine Learning},
  pages={25747--25759},
  year={2022},
  organization={PMLR}
}

@article{wang2024meta,
  title={Meta-DT: Offline Meta-RL as Conditional Sequence Modeling with World Model Disentanglement},
  author={Wang, Zhi and Zhang, Li and Wu, Wenhao and Zhu, Yuanheng and Zhao, Dongbin and Chen, Chunlin},
  journal={arXiv preprint arXiv:2410.11448},
  year={2024}
}

@inproceedings{li2021focal,
  title={{FOCAL}: Efficient Fully-Offline Meta-Reinforcement Learning via Distance Metric Learning and Behavior Regularization},
  author={Lanqing Li and Rui Yang and Dijun Luo},
  booktitle={International Conference on Learning Representations},
  year={2021}
}

@inproceedings{wang2023offline,
  title={Offline meta reinforcement learning with in-distribution online adaptation},
  author={Wang, Jianhao and Zhang, Jin and Jiang, Haozhe and Zhang, Junyu and Wang, Liwei and Zhang, Chongjie},
  booktitle={International Conference on Machine Learning},
  pages={36626--36669},
  year={2023},
  organization={PMLR}
}

@article{gao2024context,
  title={Context shift reduction for offline meta-reinforcement learning},
  author={Gao, Yunkai and Zhang, Rui and Guo, Jiaming and Wu, Fan and Yi, Qi and Peng, Shaohui and Lan, Siming and Chen, Ruizhi and Du, Zidong and Hu, Xing and others},
  journal={Advances in Neural Information Processing Systems},
  volume={36},
  year={2024}
}

@article{li2024towards,
  title={Towards an Information Theoretic Framework of Context-Based Offline Meta-Reinforcement Learning},
  author={Li, Lanqing and Zhang, Hai and Zhang, Xinyu and Zhu, Shatong and Zhao, Junqiao and Heng, Pheng-Ann},
  journal={arXiv preprint arXiv:2402.02429},
  year={2024}
}

@article{wu2019behavior,
  title={Behavior regularized offline reinforcement learning},
  author={Wu, Yifan and Tucker, George and Nachum, Ofir},
  journal={arXiv preprint arXiv:1911.11361},
  year={2019}
}

@inproceedings{zhou2024generalizable,
  title={Generalizable Task Representation Learning for Offline Meta-Reinforcement Learning with Data Limitations},
  author={Zhou, Renzhe and Gao, Chen-Xiao and Zhang, Zongzhang and Yu, Yang},
  booktitle={Proceedings of the AAAI Conference on Artificial Intelligence},
  volume={38},
  pages={17132--17140},
  year={2024}
}

@article{bing2023meta,
  title={Meta-Reinforcement Learning in Non-Stationary and Dynamic Environments},
  author={Bing, Zhenshan and Lerch, David and Huang, Kai and Knoll, Alois},
  journal={IEEE Transactions on Pattern Analysis \& Machine Intelligence},
  volume={45},
  number={03},
  pages={3476--3491},
  year={2023},
  publisher={IEEE Computer Society}
}

@article{beck2023survey,
  title={A survey of meta-reinforcement learning},
  author={Beck, Jacob and Vuorio, Risto and Liu, Evan Zheran and Xiong, Zheng and Zintgraf, Luisa and Finn, Chelsea and Whiteson, Shimon},
  journal={arXiv preprint arXiv:2301.08028},
  year={2023}
}

@inproceedings{nakhaeinezhadfard2025entropy,
  title={Entropy Regularized Task Representation Learning for Offline Meta-Reinforcement Learning},
  author={Nakhaeinezhadfard, Mohammadreza and Scannell, Aidan and Pajarinen, Joni},
  booktitle={Proceedings of the AAAI Conference on Artificial Intelligence},
  volume={39},
  pages={19616--19623},
  year={2025}
}

\newpage
\section{Appendix}
\appendix
\section{A. Proof}
\label{a-proof}
\begin{assumption}
\label{assum1}
The policy $\pi_l$ performs better than any other policy $\pi_j (j\neq l)$ on the task $\mathcal{M}_l$, that is, $Q_l^{\pi_j}\leq Q_l^{\pi_l}$.
\end{assumption}

\begin{lemma}
\label{lemma1}
{\bf{(Generalized Policy Improvement)}} Let $\pi_1$, $\pi_2$, ..., $\pi_L$ be 
$L$ meta-policies and let $\hat{Q}_1^{\pi_1}$, $\hat{Q}_2^{\pi_2}$, ..., $\hat{Q}_L^{\pi_L}$ be estimations of their respective contextual action-value on corresponding tasks such that for all $s \in \mathcal{S}, a 
\in \mathcal{A}, z\in\mathcal{Z},$
\begin{equation*}
\label{eq:epsilon2}
\left|Q_l^{\pi_l}(s, a, \mathbf{z})-\hat{Q}_l^{\pi_l}(s, a, \mathbf{z})\right| \leq \epsilon, \quad l\in\{1,2,\cdots,L\}.
\end{equation*}
Define $\pi \in {\operatorname{argmax}_{a}}\hat{Q}_l^{\pi_l}(s, a,\mathbf{z})$. Under Assumption~\ref{assum1},
\begin{equation*}
\label{eq:Qpitmax2}
Q^\pi_l(s,a,\mathbf{z})  \ge \max_j Q_l^{\pi_j}(s,a,\mathbf{z}) - \dfrac{2}{1 - \gamma} \epsilon,\quad j\neq l,
\end{equation*}
holds for any $s \in S$, $a \in A$ and $\mathbf{z} \in \mathcal{Z}$, where $Q_l^\pi(s,a,\mathbf{z})$ is the contextual action-value of meta-policy $\pi$ on task $\mathcal{M}_l$.
\end{lemma}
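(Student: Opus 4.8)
The statement is a contextual form of Barreto et al.'s generalized policy improvement bound, so the plan is to prove it through the policy-evaluation Bellman operator together with its monotonicity and $\gamma$-contraction in the sup-norm. Throughout I would fix the evaluation task $\mathcal{M}_l$ and treat the task representation $\mathbf{z}$ as a passive conditioning variable: the dynamics, reward, and hence the operator $T_l^\pi$ all depend on $\mathbf{z}$, but since $\mathbf{z}$ never changes within the argument it simply rides along and the standard tabular reasoning applies pointwise in $\mathbf{z}$. I would introduce the pointwise maxima $Q_{\max}(s,a,\mathbf{z}) = \max_j Q_l^{\pi_j}(s,a,\mathbf{z})$ and $\hat{Q}_{\max}(s,a,\mathbf{z}) = \max_j \hat{Q}_l^{\pi_j}(s,a,\mathbf{z})$, and let $T_l^\pi$ be the Bellman evaluation operator on $\mathcal{M}_l$ whose unique fixed point is $Q_l^\pi$. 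The whole proof reduces to a one-step inequality $T_l^\pi Q_{\max} \ge Q_{\max} - 2\gamma\epsilon$, which I then iterate.

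First I would record two elementary consequences of the hypothesis $|Q_l^{\pi_j} - \hat{Q}_l^{\pi_j}| \le \epsilon$: since a maximum of $\epsilon$-close functions is $\epsilon$-close, $|\hat{Q}_{\max} - Q_{\max}| \le \epsilon$ pointwise; and since each $Q_l^{\pi_j}$ solves its own evaluation equation $Q_l^{\pi_j}(s,a,\mathbf{z}) = r + \gamma\,\mathbb{E}_{s'}[Q_l^{\pi_j}(s',\pi_j(s'),\mathbf{z})] \le r + \gamma\,\mathbb{E}_{s'}[\max_{a'} Q_{\max}(s',a',\mathbf{z})]$, taking the maximum over $j$ shows that $Q_{\max}$ is a sub-solution of the optimal Bellman equation, i.e. $Q_{\max}(s,a,\mathbf{z}) \le r + \gamma\,\mathbb{E}_{s'}[\max_{a'} Q_{\max}(s',a',\mathbf{z})]$.

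Next comes the heart of the argument: the greedy step. Because $\pi$ is greedy with respect to $\hat{Q}_{\max}$, at any successor $s'$ I would chain the two $\epsilon$-bounds to get $Q_{\max}(s',\pi(s'),\mathbf{z}) \ge \hat{Q}_{\max}(s',\pi(s'),\mathbf{z}) - \epsilon = \max_{a'}\hat{Q}_{\max}(s',a',\mathbf{z}) - \epsilon \ge \max_{a'} Q_{\max}(s',a',\mathbf{z}) - 2\epsilon$, where the first $\epsilon$ passes from estimate to truth at the greedy action and the second accounts for that action being only approximately optimal for $Q_{\max}$. Substituting this into the definition of $T_l^\pi Q_{\max}$ and combining with the sub-solution inequality from the previous step yields the one-step bound $T_l^\pi Q_{\max} \ge Q_{\max} - 2\gamma\epsilon$.

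Finally I would iterate using monotonicity of $T_l^\pi$ and the identity $T_l^\pi(Q - c\mathbf{1}) = T_l^\pi Q - \gamma c\mathbf{1}$: applying the operator $k$ times and summing the geometric series gives $(T_l^\pi)^k Q_{\max} \ge Q_{\max} - 2\epsilon(\gamma + \cdots + \gamma^k)$, and letting $k \to \infty$ with $(T_l^\pi)^k Q_{\max} \to Q_l^\pi$ by contraction delivers $Q_l^\pi \ge Q_{\max} - \tfrac{2\gamma}{1-\gamma}\epsilon \ge Q_{\max} - \tfrac{2}{1-\gamma}\epsilon = \max_j Q_l^{\pi_j} - \tfrac{2}{1-\gamma}\epsilon$; Assumption~\ref{assum1}, which certifies that the home policy is best on its own task, is what lets me identify this maximum in the form stated. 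I expect the main obstacle to be the careful two-$\epsilon$ bookkeeping in the greedy step together with verifying that $Q_{\max}$ is a sub-solution of the optimal Bellman equation on $\mathcal{M}_l$; once those pointwise inequalities are in place the contraction-and-iterate machinery is routine, and the only extra care the meta-setting demands is confirming that conditioning on $\mathbf{z}$ leaves the contraction property of $T_l^\pi$ intact.
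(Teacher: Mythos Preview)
Your core machinery---a one-step inequality for the Bellman evaluation operator followed by monotone iteration to the fixed point---is exactly the paper's approach, and the chain of inequalities is correct once the objects involved are well-defined. But there is a setup mismatch: you introduce $\hat{Q}_{\max}=\max_j \hat{Q}_l^{\pi_j}$ and take $\pi$ to be greedy with respect to it, yet the hypotheses never furnish the cross-estimates $\hat{Q}_l^{\pi_j}$ for $j\neq l$. Only the ``diagonal'' estimates $\hat{Q}_l^{\pi_l}$ (one per task $l$) are given, and the lemma defines $\pi$ as greedy with respect to the single function $\hat{Q}_l^{\pi_l}$. Consequently your claim $|\hat{Q}_{\max}-Q_{\max}|\le\epsilon$ invokes undefined quantities, and your greedy step is phrased for a policy different from the one actually specified in the statement.

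The fix is to invoke Assumption~\ref{assum1} at the \emph{start} of the argument rather than as a cosmetic remark at the end: it gives $Q_l^{\pi_j}\le Q_l^{\pi_l}$ for all $j$, hence $Q_{\max}=Q_l^{\pi_l}$, so the single available estimate $\hat{Q}_l^{\pi_l}$ already plays the role of your $\hat{Q}_{\max}$ and $\pi$ is indeed greedy for it. With that identification your two-$\epsilon$ greedy chain and the iteration go through verbatim. This is precisely what the paper does: it applies $\mathcal{T}^\pi$ directly to $\hat{Q}_l$ (i.e.\ $\hat{Q}_l^{\pi_l}$), uses Assumption~\ref{assum1} inside the one-step inequality to pass from $\max_v Q_l^{\pi_l}(s',v,\mathbf{z})$ down to $Q_l^{\pi_j}(s',\pi_j(s'),\mathbf{z})$, and then telescopes to the $\tfrac{2}{1-\gamma}\epsilon$ bound. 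Once you relocate the assumption, your proof and the paper's coincide.
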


\begin{proof}
For all $s \in S$, $a \in A$, $\mathbf{z} \in \mathcal{Z}$ and $l \in \{1,2,...,L\}$ we have
\begin{equation}
\begin{aligned}
\mathcal{T}^\pi \hat{Q}_l & =r_l+\gamma \sum_{s^{\prime}} p_l\left(s^{\prime} \mid s, a\right) \hat{Q}_l\left(s^{\prime}, \pi_l\left(s^{\prime},\mathbf{z}\right),\mathbf{z}\right) \\
& =r_l+\gamma \sum_{s^{\prime}} p_l\left(s^{\prime} \mid s, a\right) \max _v \hat{Q}_l\left(s^{\prime}, v,\mathbf{z}\right) \\
& \geq r_l+\gamma \sum_{s^{\prime}} p_l\left(s^{\prime} \mid s, a\right) \max _v Q_l\left(s^{\prime}, v, \mathbf{z}\right)-\gamma \epsilon \\
& \geq r_l+\gamma \sum_{s^{\prime}} p_l\left(s^{\prime} \mid s, a\right) Q_l^{\pi_j}\left(s^{\prime}, \pi_j\left(s^{\prime}, \mathbf{z}\right), \mathbf{z}\right)-\gamma \epsilon \\
& =\mathcal{T}^{\pi_j} Q_l^{\pi_j}(s, a, \mathbf{z})-\gamma \epsilon \\
& =Q_l^{\pi_j}(s, a, \mathbf{z})-\gamma \epsilon.
\end{aligned}   
\end{equation}
Since $\mathcal{T}^\pi \hat{Q}_l(s, a, \mathbf{z}) \geq Q_l^{\pi_j}(s, a, \mathbf{z})-\gamma \epsilon$ for any $j$, we have 
\begin{equation}
\begin{aligned}
\mathcal{T}^\pi \hat{Q}_l(s, a,\mathbf{z}) & \geq \max _j Q_l^{\pi_j}(s, a,\mathbf{z})-\gamma \epsilon \\
& =Q_l^{\max }(s, a,\mathbf{z})-\gamma \epsilon \\
& \geq \hat{Q}_l^{\max }(s, a,\mathbf{z})-\epsilon-\gamma \epsilon
\end{aligned} 
\end{equation} 
Utilizing the contraction properties of the Bellman operator $\mathcal{T}^\pi$, the following formular holds:
\begin{equation}
\begin{aligned}
Q_l^\pi(s, a,\mathbf{z}) & =\lim _{u \rightarrow \infty}\left(\mathcal{T}^\pi\right)^u \hat{Q}_l^{\max }(s, a,\mathbf{z}) \\
& \geq \hat{Q}_l^{\max }(s, a,\mathbf{z})-\frac{1+\gamma}{1-\gamma} \epsilon \\
& \geq Q_l^{\max }(s, a,\mathbf{z})-\epsilon-\frac{1+\gamma}{1-\gamma} \epsilon \\&\geq Q_l^{\max }(s, a,\mathbf{z})-\frac{2}{1-\gamma}\epsilon.
\end{aligned}
\end{equation}
\end{proof}

\begin{assumption}
\label{assume2}
\cite{barreto2017successor}~When a new task $\mathcal{M}_j$ is introduced with transition dynamics identical to those of task $\mathcal{M}_i$ but a distinct reward function, the estimated successor features $\hat{\psi}_i$ derived from the shared transition dynamics of $\mathcal{M}_i$ can be transferred and reused across tasks. Consequently, the action-value function of $\mathcal{M}_j$ can be approximated as $\hat{Q}^\pi_j(s,a,\textbf{z})=\hat{\psi}^{\pi}_i(s,a,\mathbf{z})^\top\hat{W}_j(\mathbf{z})$, where $\hat{W}_j$ represents the estimation of the reward weight for $\mathcal{M}_j$.
\end{assumption}

\begin{lemma}
\label{lemma2}
Let $\delta_r=\left|r_i-r_j\right|$ and $\psi_{\max}=\max_{s^\prime,a^\prime}\|\psi\|$, then for our FLORA and the standard $Q$ value (s-q), we can derive the following upper bounds: 
\begin{equation*}
\begin{aligned}
\left|Q_i^{\pi_i^*}-Q_i^{\pi_j^*}\right|_{\text{flora}}&\leq2\delta_r+2\gamma\psi_{\max}(\left\|W_i\right\|+\left\|W_j\right\|)\\&+2\gamma\psi_{\max}\left\|W_i-W_j\right\|,
\end{aligned}
\end{equation*}
\begin{equation*}
\left|Q_i^{\pi_i^*}-Q_i^{\pi_j^*}\right|_{\text{s-q}}\leq2\delta_r+4\gamma\psi_{\max}(\left\|W_i\right\|+\left\|W_j\right\|).
\end{equation*}
\end{lemma}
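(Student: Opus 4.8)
\textbf{Plan for proving Lemma~\ref{lemma2}.}
The plan is to bound the quantity $\bigl|Q_i^{\pi_i^*}-Q_i^{\pi_j^*}\bigr|$ by inserting an intermediate term and applying the triangle inequality, exploiting the successor-feature decomposition $Q=\psi^\top W$ from Eq.~\ref{equ:de-q} together with Assumption~\ref{assume2}, which lets us transfer and reuse $\hat{\psi}_i$ across tasks. First I would write both $Q_i^{\pi_i^*}$ and $Q_i^{\pi_j^*}$ in the decomposed form $\psi^\top W$, so that the difference splits into a part coming from the mismatch in successor features (reflecting which policy's $\psi$ is evaluated on $\mathcal{M}_i$) and a part coming from the mismatch in reward weights $\|W_i-W_j\|$. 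The reward-difference term $\delta_r=|r_i-r_j|$ will enter through the one-step Bellman expansion of each $Q$, and $\psi_{\max}=\max_{s',a'}\|\psi\|$ will be used to uniformly bound the successor-feature factors whenever they are multiplied against a weight vector.

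The key divergence between the FLORA bound and the standard-$Q$ bound lies in how the successor-feature factor is controlled. For \modelname, because the decoupled policy space reuses a single shared $\hat{\psi}_i$ (the same dynamics-dependent feature for both the optimal policy of $\mathcal{M}_i$ and the transferred optimal policy of $\mathcal{M}_j$), the cross terms partially cancel, leaving only $2\gamma\psi_{\max}(\|W_i\|+\|W_j\|)$ from the surviving weighted-feature terms plus an additional $2\gamma\psi_{\max}\|W_i-W_j\|$ that isolates the genuine reward-weight discrepancy. For the standard (undecomposed) $Q$, no such cancellation is available: each policy carries its own entangled value estimate, so the successor-feature contributions must be bounded separately for both policies and both weight vectors, doubling the coefficient to yield $4\gamma\psi_{\max}(\|W_i\|+\|W_j\|)$. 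I would present the two bounds in parallel, carefully tracking which terms cancel under decomposition and which do not.

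The steps in order are: (1) expand each $Q$ via one step of its Bellman operator to surface $\delta_r$ and a $\gamma$-discounted future term; (2) substitute the decomposition $\psi^\top W$ and, using Assumption~\ref{assume2}, identify the shared $\hat{\psi}_i$ for the FLORA case; (3) add and subtract a common term (e.g.\ $\psi_i^\top W_j$ or the analogous mixed product) and apply the triangle inequality to separate the weight-difference and feature-difference contributions; (4) bound every surviving $\|\psi\|$ by $\psi_{\max}$ and collect the coefficients to obtain the stated inequalities; (5) repeat the argument for the standard $Q$ without the shared-feature cancellation to obtain the looser bound. The hard part will be step~(3): making precise exactly which mixed product to insert so that the cancellation structure of the decomposed representation is exposed cleanly, and justifying rigorously that the shared-$\hat{\psi}_i$ reuse permitted by Assumption~\ref{assume2} is what removes a factor of two relative to the entangled estimate. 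Getting the bookkeeping of the cross terms right — so that the FLORA bound genuinely comes out with the extra $2\gamma\psi_{\max}\|W_i-W_j\|$ term in place of a second copy of $2\gamma\psi_{\max}(\|W_i\|+\|W_j\|)$ — is where the real content of the lemma resides, and it is the step I would check most carefully against Assumption~\ref{assume2}.
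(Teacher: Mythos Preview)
Your high-level plan matches the paper's---Bellman expansion, the successor-feature decomposition $Q=\psi^\top W$, Assumption~\ref{assume2} for the shared $\psi$, and bounding by $\psi_{\max}$---but there is one imprecision that would block the argument as you have ordered it. Both $Q_i^{\pi_i^*}$ and $Q_i^{\pi_j^*}$ are action-values \emph{on task $\mathcal{M}_i$}, so a one-step Bellman expansion of each produces the same reward $r_i$ and the same dynamics $p_i$; the term $\delta_r=|r_i-r_j|$ never appears if you expand first and insert a mixed product like $\psi_i^\top W_j$ afterwards.

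The paper fixes this by inserting $Q_j^{\pi_j^*}$ (the optimal value on task $\mathcal{M}_j$) as the intermediate term \emph{before} any Bellman expansion, giving
\[
\bigl|Q_i^{\pi_i^*}-Q_i^{\pi_j^*}\bigr|\le \bigl|Q_i^{\pi_i^*}-Q_j^{\pi_j^*}\bigr|+\bigl|Q_j^{\pi_j^*}-Q_i^{\pi_j^*}\bigr|.
\]
Each half now compares values across tasks, so Bellman-expanding each one surfaces $|r_i-r_j|$ plus a $(p_i-p_j)$-weighted piece (bounded by $2\gamma\psi_{\max}\|W_i\|$, resp.\ $2\gamma\psi_{\max}\|W_j\|$) and a residual cross-term $\gamma\max_{s',a'}|Q_i^{\pi_i^*}-Q_j^{\pi_j^*}|$ (resp.\ $\gamma\max_{s',a'}|Q_j^{\pi_j^*}-Q_i^{\pi_j^*}|$). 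It is only in bounding these residual cross-terms that the $\psi^\top W$ decomposition and Assumption~\ref{assume2} are invoked: with a shared $\psi$ the cross-term collapses to $\gamma\psi_{\max}\|W_i-W_j\|$ for \modelname, whereas the standard $Q$ can only be bounded by $\gamma\psi_{\max}(\|W_i\|+\|W_j\|)$. Reorder your steps---insert $Q_j^{\pi_j^*}$ first, then Bellman-expand each half, then apply the decomposition to the residual---and the rest of your plan goes through exactly as in the paper.
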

\begin{proof}
\begin{equation}
\begin{aligned}
 Q_i^{\pi_i^*}-Q_i^{\pi_j^*}&=Q_i^{\pi_i^*}-Q_j^{\pi_j^*}+Q_j^{\pi_j^*}-Q_i^{\pi_j^*}\\&\leq\underbrace{\left|Q_i^{\pi_i^*}-Q_j^{\pi_j^*}\right|}_{(1)}+\underbrace{\left|Q_j^{\pi_j^*}-Q_i^{\pi_j^*}\right|}_{(2)}.  
\end{aligned}
\end{equation}
For $(1)$, the following inequality holds:
\begin{equation}
\begin{aligned}
\left|Q_i^{\pi_i^*}-Q_j^{\pi_j^*}\right|&=\bigg|r_i+\gamma \sum_{s^{\prime}} p_i\left(s^{\prime}|s, a\right) \max _{a^{\prime}} Q_i^{\pi_i^*}\left(s^{\prime},a^{\prime},\mathbf{z}\right)\\&-r_j-\gamma \sum_{s^{\prime}} p_j\left(s^{\prime}|s, a\right) \max _{a^{\prime}} Q_j^{\pi_j^*}\left(s^{\prime},a^{\prime},\mathbf{z}\right)\bigg|\\&=\bigg|r_i-r_j+\gamma\sum_{s^{\prime}}(p_i-p_j)\max _{a^{\prime}} Q_i^{\pi_i^*}\\&+\gamma\sum_{s^{\prime}}p_j\left(\max _{a^{\prime}}Q_i^{\pi_i^*}-\max _{a^{\prime}}Q_j^{\pi_j^*}\right)\bigg|\\&\leq\left|r_i-r_j\right|+\gamma\left|\sum_{s^{\prime}}p_i\max _{a^{\prime}} Q_i^{\pi_i^*}\right|\\&+\gamma\left|\sum_{s^{\prime}}p_j\max _{a^{\prime}} Q_i^{\pi_i^*}\right|\\&+\gamma\left|\sum_{s^{\prime}}p_j\left(\max _{a^{\prime}}Q_i^{\pi_i^*}-\max _{a^{\prime}}Q_j^{\pi_j^*}\right)\right|\\&\leq\left|r_i-r_j\right|+2\gamma\max_{s^\prime,a^\prime}\left|Q_i^{\pi_i^*}\right|\\&+\gamma\max_{s^\prime,a^\prime}\left|Q_i^{\pi_i^*}-Q_j^{\pi_j^*}\right|\\&
=\delta_r+\delta_{q_i}+\delta_{q_iq_j}.
\end{aligned}
\end{equation}
For $(2)$, we have 
\begin{equation}
\begin{aligned}
\left|Q_j^{\pi_j^*}-Q_i^{\pi_j^*}\right|&=\bigg|r_j+\gamma \sum_{s^{\prime}} p_j\left(s^{\prime}|s, a\right) \max _{a^{\prime}} Q_j^{\pi_j^*}\left(s^{\prime},a^{\prime},\mathbf{z}\right)\\&-r_i-\gamma \sum_{s^{\prime}} p_i\left(s^{\prime}|s, a\right) \max _{a^{\prime}} Q_i^{\pi_j^*}\left(s^{\prime},a^{\prime},\mathbf{z}\right)\bigg|\\&=\bigg|r_j-r_i+\gamma\sum_{s^{\prime}}(p_j-p_i)\max _{a^{\prime}} Q_j^{\pi_j^*}\\&+\gamma\sum_{s^{\prime}}p_i\left(\max _{a^{\prime}}Q_j^{\pi_j^*}-\max _{a^{\prime}}Q_i^{\pi_j^*}\right)\bigg|\\&\leq\left|r_j-r_i\right|+\gamma\left|\sum_{s^{\prime}}p_j\max _{a^{\prime}} Q_j^{\pi_j^*}\right|\\&+\gamma\left|\sum_{s^{\prime}}p_i\max _{a^{\prime}} Q_j^{\pi_j^*}\right|\\&+\gamma\left|\sum_{s^{\prime}}p_i\left(\max _{a^{\prime}}Q_j^{\pi_j^*}-\max _{a^{\prime}}Q_i^{\pi_j^*}\right)\right|\\&\leq\left|r_j-r_i\right|+2\gamma\max_{s^\prime,a^\prime}\left|Q_j^{\pi_j^*}\right|\\&+\gamma\max_{s^\prime,a^\prime}\left|Q_j^{\pi_j^*}-Q_i^{\pi_j^*}\right|\\&
=\delta_r+\delta_{q_j}+\delta_{q_jq_i}.
\end{aligned}
\end{equation}
For the cross-task terms $\delta_{q_iq_j}$ and $\delta_{q_jq_i}$, where tasks $\mathcal{M}_i$ and $\mathcal{M}_j$ share identical transition dynamics but differ in reward functions, FLORA admits the following upper bound under Assumption~\ref{assume2}:
\begin{equation}
\begin{aligned}
\delta_{q_iq_j}^\text{flora}&=\gamma\max_{s,a}\left|\psi^\top W_i-\psi^\top W_j\right|\\&\leq\gamma\max_{s,a}(\left\|\psi\right\|\left\|W_i-W_j\right\|)\leq\gamma\psi_{\max}\left\|W_i-W_j\right\|,\\\delta_{q_jq_i}^\text{flora}&=\gamma\max_{s,a}\left|\psi^\top W_j-\psi^\top W_i\right|\\&\leq\gamma\max_{s,a}(\left\|\psi\right\|\left\|W_j-W_i\right\|)\leq\gamma\psi_{\max}\left\|W_j-W_i\right\|.
\end{aligned}
\end{equation}
And for the standard $Q$ value (s-q) formulation, we have  
\begin{equation}
\begin{aligned}
\delta_{q_iq_j}^\text{s-q}&=\gamma\max_{s^\prime,a^\prime}\left|\psi_i^\top W_i-\psi_j^\top W_j\right|\\&\leq\gamma\max_{s^\prime,a^\prime}(\left|\psi_i^\top W_i\right|+\left|\psi_j^\top W_j\right|)\\&\leq\gamma\max_{s^\prime,a^\prime}(\left\|\psi_i\right\|\left\|W_i\right\|+\left\|\psi_j\right\|\left\|W_j\right\|),\\&\leq\gamma\psi_{\max}(\left\|W_i\right\|+\left\|W_j\right\|)\\\delta_{q_jq_i}^\text{s-q}&=\gamma\max_{s^\prime,a^\prime}\left|\psi_j^\top W_j-\psi_i^\top W_i\right|\\&\leq\gamma\max_{s^\prime,a^\prime}(\left|\psi_j^\top W_j\right|+\left|\psi_i^\top W_i\right|)\\&\leq\gamma\max_{s^\prime,a^\prime}(\left\|\psi_j\right\|\left\|W_j\right\|+\left\|\psi_i\right\|\left\|W_i\right\|)\\&\leq\gamma\psi_{\max}(\left\|W_j\right\|+\left\|W_i\right\|).
\end{aligned}
\end{equation} 
For $\delta_{q_j}$ and $\delta_{q_i}$, FLORA has the same upper bound as s-q.
\begin{equation}
\delta_{q_j}\leq2\gamma\psi_{\max}\|W_j\|,\quad\delta_{q_i}\leq2\gamma\psi_{\max}\|W_i\|.
\end{equation} 

Finally, the following formula holds: 
\begin{equation}
\begin{aligned}
\left|Q_i^{\pi_i^*}-Q_i^{\pi_j^*}\right|_{\text{s-q}}&\leq2\delta_r+4\gamma\psi_{\max}(\left\|W_i\right\|+\left\|W_j\right\|)\\&=\triangle_{\text{s-q}}.\\
\left|Q_i^{\pi_i^*}-Q_i^{\pi_j^*}\right|_{\text{flora}}&\leq2\delta_r+2\gamma\psi_{\max}(\left\|W_i\right\|+\left\|W_j\right\|)\\&+2\gamma\psi_{\max}\left\|W_i-W_j\right\|\\&=\triangle_{\text{flora}}.
\end{aligned}
\end{equation}
\end{proof}

\begin{theorem}
\label{a-th:psi-improve}
{\bf{(Policy Superiority Guarantee)}} Consider a meta-task $\mathcal{M}_i$ with an optimal policy $\pi^{*}$ whose action-value is $Q_{i}^{\pi^{*}}$. Let $Q_{i}^{\pi_j^{*}}$ be the action-value of an optimal policy of $\mathcal{M}_j$ when performed on $\mathcal{M}_i$. Given the set $\{\hat{Q}_{i}^{\pi_1^{*}}, \hat{Q}_{i}^{\pi_2^{*}},\cdots,\hat{Q}_{i}^{\pi_L^{*}}\}$ such that $|Q_i^{\pi_j^*}-\hat{Q}_{i}^{\pi_j^*}|\leq \epsilon$ for all $s\in\mathcal{S}, a\in\mathcal{A}, z\in\mathcal{Z}$. Define the upper bounds of the distance to the optimal policy for FLORA and the standard $Q$ value as follows: $\left|Q_{i}^{\pi^{*}}-Q^{\pi_\text{flora}}_i\right|\leq\delta_{\text{flora}}$ and $\left|Q_{i}^{\pi^{*}}-Q^{\pi_\text{s-q}}_i\right|\leq\delta_{\text{s-q}}$, respectively. Then, under Lemma~\ref{lemma1} and \ref{lemma2}, we have $\delta_{\text{flora}}\leq\delta_{\text{s-q}}$.
\end{theorem}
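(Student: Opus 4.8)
The plan is to bound the distance from each generalized-policy-improvement (GPI) policy to the optimal policy $\pi^{*}=\pi_i^{*}$ on $\mathcal{M}_i$ by splitting it into two pieces: an estimation-error piece controlled uniformly by Lemma~\ref{lemma1}, and a transfer-gap piece controlled by the cross-task value bounds of Lemma~\ref{lemma2}. Because the estimation piece is identical for both FLORA and the standard $Q$ value, the ordering $\delta_{\text{flora}}\leq\delta_{\text{s-q}}$ should follow entirely from comparing the two transfer gaps, and that comparison reduces to a single application of the triangle inequality on the reward weights.

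First I would observe that in both settings the executed policy is the GPI policy $\pi\in\operatorname{argmax}_a \hat{Q}_i^{\pi_j^{*}}(s,a,\mathbf{z})$ over the transferred candidate set $\{\hat{Q}_i^{\pi_1^{*}},\ldots,\hat{Q}_i^{\pi_L^{*}}\}$. Applying Lemma~\ref{lemma1} with $l=i$ yields $Q_i^{\pi}\geq\max_{j\neq i} Q_i^{\pi_j^{*}}-\tfrac{2}{1-\gamma}\epsilon$; here the maximum ranges only over policies transferred from other tasks, since $\pi_i^{*}$ itself is the target rather than a candidate. As $\pi^{*}=\pi_i^{*}$ is optimal on $\mathcal{M}_i$ we have $Q_i^{\pi^{*}}\geq Q_i^{\pi}$, so the distance to optimal telescopes:
\begin{align*}
\left|Q_i^{\pi^{*}}-Q_i^{\pi}\right|&\leq\Big(Q_i^{\pi_i^{*}}-\max_{j\neq i} Q_i^{\pi_j^{*}}\Big)+\frac{2}{1-\gamma}\epsilon\\
&=\min_{j\neq i}\Big(Q_i^{\pi_i^{*}}-Q_i^{\pi_j^{*}}\Big)+\frac{2}{1-\gamma}\epsilon.
\end{align*}
Each suboptimality gap $Q_i^{\pi_i^{*}}-Q_i^{\pi_j^{*}}$ is precisely the quantity bounded in Lemma~\ref{lemma2}, so I would substitute $\triangle_{\text{flora}}$ and $\triangle_{\text{s-q}}$ respectively, giving $\delta_{\text{flora}}=\min_{j\neq i}\triangle_{\text{flora}}^{(j)}+\tfrac{2}{1-\gamma}\epsilon$ and $\delta_{\text{s-q}}=\min_{j\neq i}\triangle_{\text{s-q}}^{(j)}+\tfrac{2}{1-\gamma}\epsilon$.

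The decisive step is the termwise comparison $\triangle_{\text{flora}}^{(j)}\leq\triangle_{\text{s-q}}^{(j)}$. Both bounds share the contribution $2\delta_r+2\gamma\psi_{\max}(\|W_i\|+\|W_j\|)$; they differ only in the cross-task term, where FLORA contributes $2\gamma\psi_{\max}\|W_i-W_j\|$ while the standard $Q$ value contributes an additional $2\gamma\psi_{\max}(\|W_i\|+\|W_j\|)$. The triangle inequality $\|W_i-W_j\|\leq\|W_i\|+\|W_j\|$ then gives $\triangle_{\text{flora}}^{(j)}\leq\triangle_{\text{s-q}}^{(j)}$ for every $j$, whence $\min_{j\neq i}\triangle_{\text{flora}}^{(j)}\leq\min_{j\neq i}\triangle_{\text{s-q}}^{(j)}$, and the claim $\delta_{\text{flora}}\leq\delta_{\text{s-q}}$ follows.

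The conceptual heart, and the step I expect to require the most care, is justifying why FLORA's cross-task gap depends only on $\|W_i-W_j\|$. This rests on Assumption~\ref{assume2}: when $\mathcal{M}_i$ and $\mathcal{M}_j$ share transition dynamics, FLORA reuses a \emph{single} successor-feature map $\psi$ for both tasks, so $Q_i^{\pi_i^{*}}-Q_i^{\pi_j^{*}}$ collapses to $\psi^{\top}(W_i-W_j)$ and the feature magnitudes cancel. The standard $Q$ value, by contrast, learns separate $\psi_i,\psi_j$ that cannot be differenced, forcing each to be bounded individually by $\psi_{\max}$ and producing the looser sum-of-norms term. I would therefore invoke the reuse of $\psi$ under shared dynamics explicitly when deriving the FLORA branch of Lemma~\ref{lemma2}, since that is exactly where the decoupled representation and policy space buys the tighter bound that drives the theorem.
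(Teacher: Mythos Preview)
Your proposal is correct and follows essentially the same route as the paper: invoke Lemma~\ref{lemma1} to extract the common estimation penalty $\tfrac{2}{1-\gamma}\epsilon$, substitute the transfer-gap bounds $\triangle_{\text{flora}}$ and $\triangle_{\text{s-q}}$ from Lemma~\ref{lemma2}, and conclude via the triangle inequality $\|W_i-W_j\|\leq\|W_i\|+\|W_j\|$. Your version is in fact slightly more careful than the paper's, which leaves the choice of $j$ implicit; your explicit $\min_{j\neq i}$ and your explanation of why the shared $\psi$ under Assumption~\ref{assume2} collapses FLORA's cross-task term to $\psi^{\top}(W_i-W_j)$ are exactly the right justifications and match the paper's intent.
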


\begin{proof}
According to Lemma~\ref{lemma1} and \ref{lemma2}, the distance between the $Q^{\pi_\text{flora}}_i$ value and the optimal $Q_i^{\pi^*}$ value is bounded by:
\begin{equation}
\begin{aligned}
\left|Q_{i}^{\pi^{*}}-Q^{\pi_\text{flora}}_i\right|&\leq\left|Q_{i}^{\pi^{*}}-Q_{i}^{\pi_j^{*}}+\frac{2\epsilon}{(1-\gamma)}\right|\\ &\leq 2\delta_r+2\gamma\psi_{\max}(\left\|W_i\right\|+\left\|W_j\right\|)\\&+2\gamma\psi_{\max}\left\|W_i-W_j\right\|+\frac{2\epsilon}{(1-\gamma)}\\&=\triangle_{\text{flora}}+\frac{2\epsilon}{(1-\gamma)}=\delta_\text{flora}.
\end{aligned}  
\end{equation}
Similarly, for the standard $Q$ value, we have
\begin{equation}
\begin{aligned}
\left|Q_{i}^{\pi^{*}}-Q^{\pi_\text{s-p}}_i\right|&\leq\left|Q_{i}^{\pi^{*}}-Q_{i}^{\pi_j^{*}}+\frac{2\epsilon}{(1-\gamma)}\right|\\&\leq2\delta_r+4\gamma\psi_{\max}(\left\|W_i\right\|+\left\|W_j\right\|)+\frac{2\epsilon}{(1-\gamma)}\\&=\triangle_{\text{s-q}}+\frac{2\epsilon}{(1-\gamma)}=\delta_\text{s-q}.
\end{aligned}  
\end{equation}
According to the following triangle inequality:
\begin{equation}
\|W_i-W_j\|\leq\|W_i\|+\|W_j\|,   
\end{equation}
we have $\triangle_\text{flora}\leq\triangle_\text{s-q}$. The upper bounds of the distance to the optimal $Q_i^{\pi^*}$ value satisfy $\delta_\text{flora}\leq\delta_\text{s-q}$. 

Consequently, our FLORA has a tighter upper bound of policy performance than the standard $Q$ value. This indicates that even in the worst-case estimation scenario, the $Q$ value achieved by FLORA (corresponding to policy $\pi_{\text{flora}}$), is potentially able to accomplish a smaller distance to the optimal $Q$ value $Q_i^{\pi^*}$ compared to the standard $Q$ value (corresponding to policy $\pi_{\text{s-q}}$).
\end{proof}

\subsection{About the $\mathcal{J}_{\text{ELBO}}$ in Eq.~(17)}
For the invertible transformation defined in Eq.~(14), $f(\mathbf{z})=\mathbf{z}+\mathbf{u} l\left(\mathbf{w}^{\top} \mathbf{z}+b\right)$, let $\chi(\mathbf{z})=l^{\prime}\left(\mathbf{w}^{\top} \mathbf{z}+b\right) \mathbf{w}$.
Then, the determinant of the Jacobian is given by
\begin{equation}
\left|\operatorname{det} \frac{\partial f}{\partial \mathbf{Z}}\right|=\left|\operatorname{det}\left(\mathbf{I}+\mathbf{u} \chi(\mathbf{z})^{\top}\right)\right|=\left|1+\mathbf{u}^{\top} \chi(\mathbf{z})\right| .
\end{equation}
The Evidence Lower Bound (ELBO) for optimizing the context encoder $E$ can then be derived as follows:
\begin{equation}
\begin{aligned}
\mathcal{J}_{\text{ELBO}} & =\mathbb{E}_{E_\omega(\mathbf{z}|\tau)}\left[\log p(Q)-\log E_\omega(\mathbf{z}|\tau)+\log p(\mathbf{z})\right] \\
& =\mathbb{E}_{E^0(\mathbf{z}|\tau)}\left[\log p(Q)-\log E^K(\mathbf{z}|\tau)+\log p(\mathbf{z}_K)\right] \\
& \xlongequal{\mathbf{z}=\mathbf{z}_0}\mathbb{E}_{E^0(\mathbf{z}|\tau)}\bigg[\sum_{k=1}^K \log \left|1+\mathbf{u}_k^{\top} \chi_k(\mathbf{z}_{k-1})\right|\\&-\log E^0(\mathbf{z}|\tau)+\log p(Q)+\log p(\mathbf{z}_K)\bigg]\\
& =\mathbb{E}\left[\log p(Q)-D_{\mathrm{KL}}
(E_\omega^K\left(\mathbf{z}|\tau\right)\|p(\mathbf{z}_K))\right]. \\
\end{aligned}
\end{equation}

\section{B. Preliminary}
\subsection{Normalizing Flow}
Normalizing flow~\cite{rezende2015variational} directly and explicitly learns complex data distributions by transforming simple probability distributions into complex ones through a series of invertible mappings. The key idea is to define an invertible and differentiable transformation $f$ parameterized by $\eta: y=f_\eta(x)$, where $x$ is a sample from a simple base distribution, typically a Gaussian distribution $p(x)$. The probability density function of the transformed variable $p(y)$ can be expressed using the change of variables theorem: 
\begin{equation}
p(y) = p(x)\left|\operatorname{det} \frac{\partial f_\eta^{-1}}{\partial y}\right|,
\end{equation}
where $f^{-1}$ denotes the inverse of the transformation, and the absolute term is the Jacobian determinant of the inverse function, capturing the change in volume induced by the transformation. By modeling complex distributions step by step, we can trace back to the initial simple distribution $y_0$, yielding $y_K=f_{\eta_K} \circ \ldots \circ f_{\eta_2} \circ f_{\eta_1}\left(y_0\right)$, with the probability  
\begin{equation}
\log p_K\left(y_K\right)=\log p\left(y_0\right)-\sum_{k=1}^K \log \left|\operatorname{det} \frac{\partial f_{\eta_k}}{\partial y_{k-1}}\right|.
\end{equation}
The sequence traversed by $y_k=f_{\eta_k}(y_{k-1})$ is referred to as a \emph{flow}, and the complete chain of successive distributions $p_k$ is termed \emph{normalizing flow} (NF). In this paper, we leverage NF to capture a more diverse data structure within the offline meta-task dataset, enabling the context encoder to generate more accurate task representations through a series of invertible transformations that maps a simple Gaussian distribution to better align with the meta-tasks.

\section{C. Experimental Details}
\subsection{Baselines}
1) \textbf{FOCAL}~\cite{li2021focal} is the first model-free offline meta-RL algorithm, which proposes an inverse power distance metric in latent space to cluster similar tasks while pushing away different tasks for efficient task inference;
2) \textbf{IDAQ}~\cite{wang2023offline} formally addresses the transition-reward distribution shift problem between offline datasets and online adaptation in offline meta-RL. It devises return-based uncertainty quantification to facilitate online context adaptation; 3) \textbf{CSRO}~\cite{gao2024context} utilizes mutual information to alleviate the influence of behavior policies on task representations during offline meta-training. It implements a random policy during meta-testing to reduce the effect of exploration policies; 4) \textbf{UNICORN}~\cite{li2024towards} presents a unified information-theoretic framework and employs either a decoder (self-supervised version) or a task classifier (supervised version) to reconstruct task representations, in addition to the distance metric loss in FOCAL; 5) \textbf{Meta-DT}~\cite{wang2024meta} combines a context-aware world model with complementary prompts derived from an agent’s history to condition a decision transformer for action generation; 6) \textbf{ER-TRL}~\cite{nakhaeinezhadfard2025entropy} leverages a generative adversarial network to maximize the entropy of
behavior policy conditioned on the task representations, thereby mitigating distributional shift.

In our experiment, we implemented the self-supervised UNICORN due to its superior performance across most environments and its ability to provide a fair comparison without prior knowledge of task labels. Moreover, all baseline methods utilize BRAC~\cite{wu2019behavior} as the backbone algorithm to address the overestimation of $Q$ value in offline meta-RL, ensuring a fair comparison.

\subsection{Experimental Settings}
\label{a-env}
We conduct experiments on the Meta-World~\cite{Yu2019MetaWorldAB} and MuJoCo physics engine~\cite{Todorov2012MuJoCoAP} to evaluate the performance of FLORA and baseline methods. Specifically we adopt the  Meta-Learning 1 (ML1) evaluation protocol, where the agent is trained on a subset of tasks of the same type, then tested on goals it hasn't seen during training time. We train on 40 tasks and test on 10 tasks for each of the 8 environment types used in our experiments on Meta-World. The details of the environments in our experiments are as follows:

\begin{itemize}
    \item \textbf{Faucet-Close}: Rotate the faucet clockwise while faucet positions are randomized.
    \item \textbf{Push-wall}: Bypass a wall and push a puck to a goal while puck and goal positions are randomized.
    \item \textbf{Push}: Push the puck to a goal while puck and goal positions are randomized.
    \item \textbf{Door-Unlock}: Unlock the door by rotating the lock counter-clockwise while door positions are randomized.
    \item \textbf{Plate-Slide-Back}: Get a plate from the cabinet while plate and cabinet positions are randomized.
    \item \textbf{Faucet-Open}: Rotate the faucet counter-clockwise while faucet positions are randomized.
    \item \textbf{Door-Close}: Close a door with a revolving joint while door positions are randomized.
    \item \textbf{Drawer-Close}: Push and close a drawer while the drawer positions are randomized.
\end{itemize}
We also adopt the MuJoCo control benchmark~\cite{Todorov2012MuJoCoAP} in our experiments. We train on 8 tasks and test on 2 tasks on MuJoCo. Specifically, we chose the following environments:

\begin{itemize}
    \item \textbf{Point-Robot}: This environment contains a point agent that can move freely in a 2D space. The start position is fixed at (0, 0) and the goal is located on a unit semicircle centered at the start position.  The action space is $[-1,1]^2$ corresponding to the moving distance in the $\mathbb{X}$ and $\mathbb{Y}$ direction at each step. The reward function is deﬁned as $r = ||s - goal||^2$, where $s$ is the current position vector $(\mathbb{X}, \mathbb{Y})$.
    \item \textbf{Point-Robot-Wind}: A variant of Point-Robot where tasks share the same reward structure but differ in transition dynamics. Specifically, each task introduces a wind effect parameter sampled uniformly from $[-l_w,l_w]^2$, where $l_w$ is the maximum strength of the wind in the $\mathbb{X}$ and $\mathbb{Y}$ direction. Every time the agent takes a step in the environment, it experiences an additional displacement scaled by the wind vector.
\end{itemize}

\subsection{Hyperparameters}
The Hyperparameters used in our experiments are listed in Table \ref{table:hyperparameters}.
\begin{table}[ht]
\begin{center}
\begin{small}
\begin{tabular}{ll}
\toprule
\textbf{Hyperparameter} & \textbf{Value}  \\
\midrule
Number of iterations    & $200$  \\ 
Training steps per iteration    & $1000$ \\ 
Initial steps    & $2000$  \\ 
Evaluation steps per iteration    & $5000$  \\ 
Number of context transitions    & $400$  \\ 
Maximum path length     & $500$  \\ 
Tasks for collecting data   & $5$  \\
Batch size    & $256$  \\ 
Meta batch    & $16$  \\ 
Learning rate & $3e-4$  \\
$\psi$ function EMA $\tau$ & $0.005$ \\
Discount $\gamma$ & $0.99$ \\
\bottomrule
\end{tabular}
\end{small}
\end{center}
\caption{Hyperparameters used in offline meta training.}
\label{table:hyperparameters}
\end{table}

\section{Additional Experiment Results}
We evaluate FLORA and other baselines on the NVIDIA GeForce RTX 3090 GPU.
\subsection{Impact of Flow Length $K$}
 We analyze the performance of \modelname under different flow length $K$ utilized in the FTI module. Fig.~\ref{fig:k} shows that increasing the flow length, which means more transformation steps are applied, enhances the model's ability to adapt to new tasks more efficiently and gradually improves the final performance. This reflects that longer transformation chains lead to more accurate approximations of complex task distributions. Hence, compact task representations are inferred to accelerate policy adaptation. As $K$ increases (e.g., from 5 to 7), the performance gains diminish, while computational overhead continues to grow. Thus, it is strategic to select an appropriate $K$ value tailored to specific environments.
 
\begin{figure}[htbp]
\centering
\subfigure{
\includegraphics[width=0.22\textwidth]{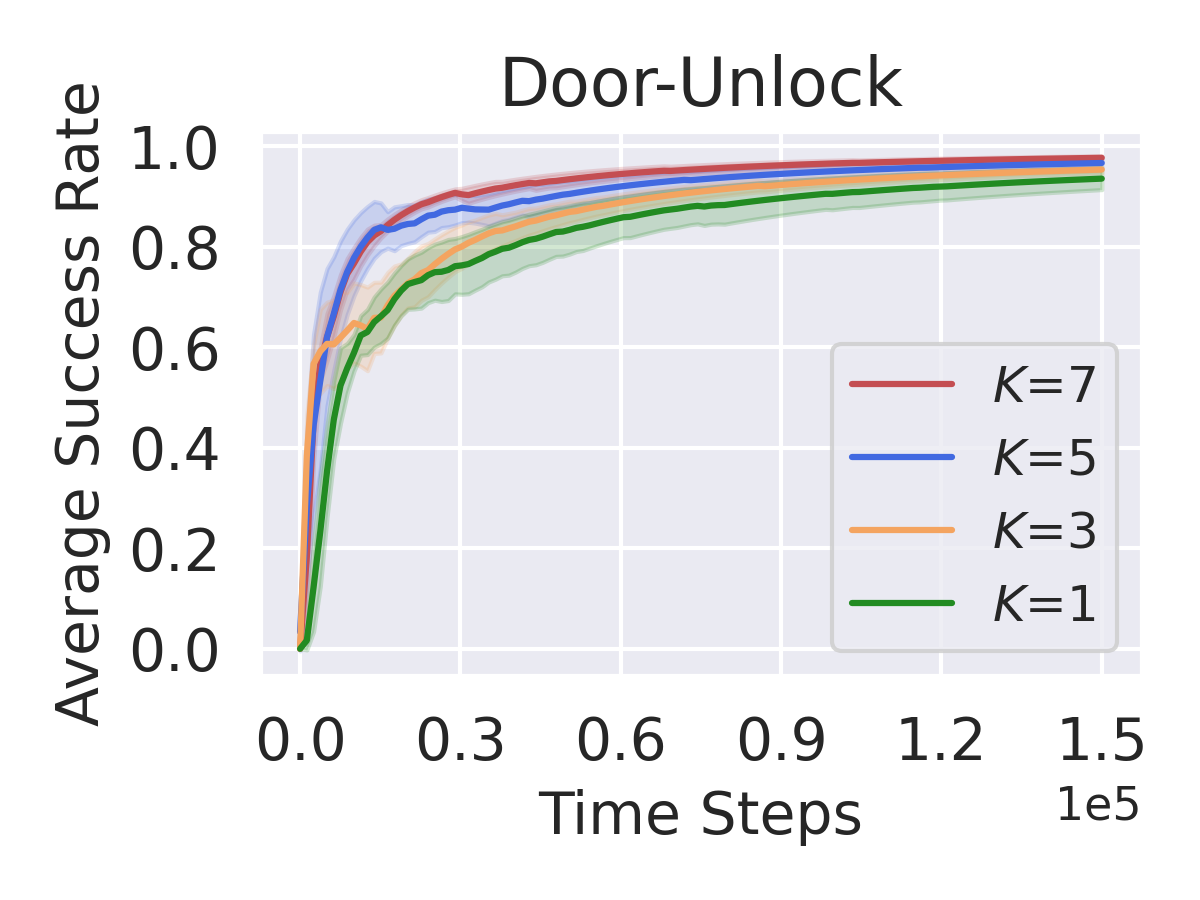}
}
\subfigure{
\includegraphics[width=0.22\textwidth]{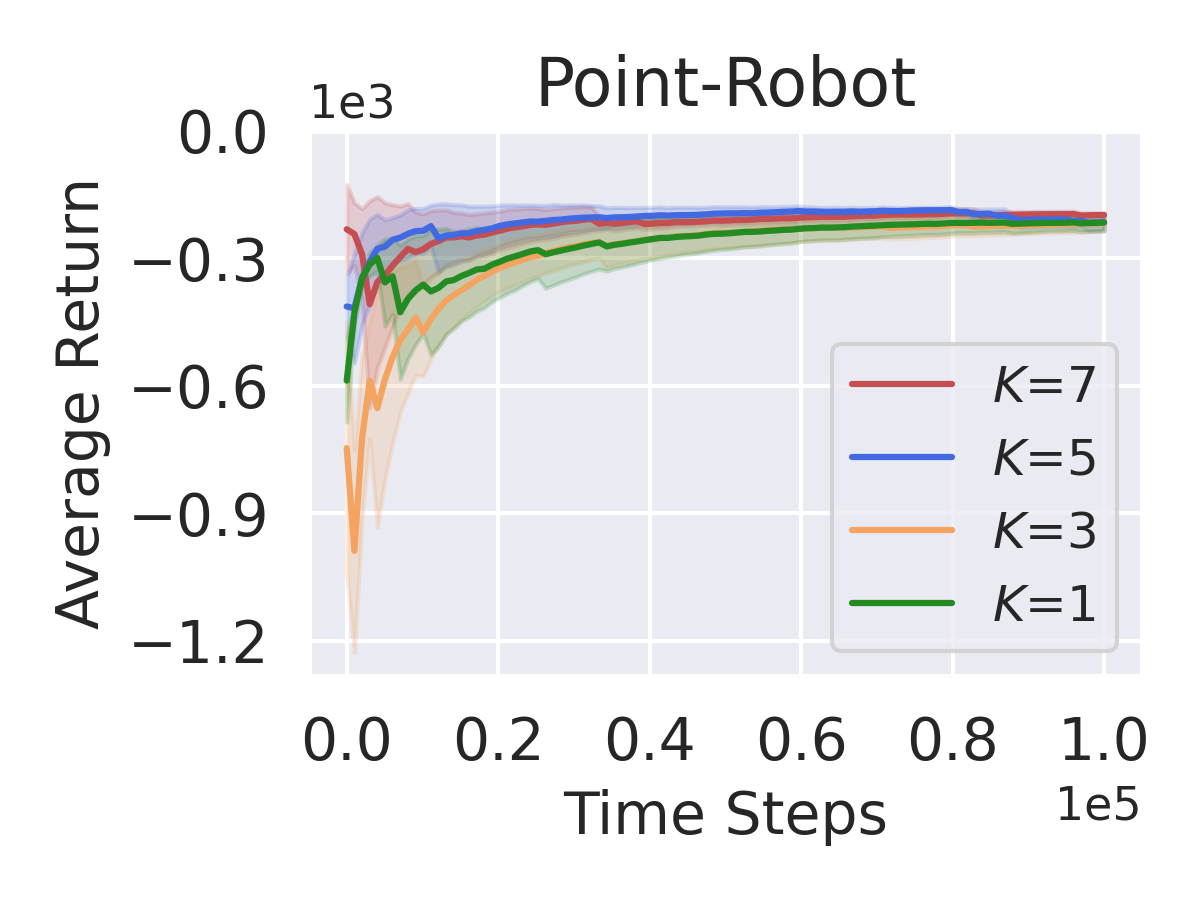}
}
\vskip -0.1in
\caption{Comparison of varying flow lengths.}
\label{fig:k}
\end{figure}

\subsection{Final Average Return on MuJoCo}
\begin{table}[htbp]
\centering
\begin{adjustbox}{max width=1.00\textwidth}
\begin{tabular}{@{}lccccc@{}}
\toprule
& \multicolumn{1}{c}{Point-Robot} & \multicolumn{1}{c}{Point-Robot-Wind} \\
\midrule
FOCAL & -293.25$\pm{323.57}$ & -392.80$\pm{416.27}$ & \\
IDAQ & -262.06$\pm{204.36}$ & -582.49$\pm{733.48}$ & \\
CSRO & -185.88$\pm{183.00}$ & -421.62$\pm{436.85}$ & \\
UNICORN & -383.08$\pm{450.30}$ & -252.25$\pm{315.88}$ & \\
\textbf{\modelname} & \textbf{-197.96}$\pm{\textbf{107.74}}$ & \textbf{-22.72}$\pm{\textbf{22.74}}$ & \\
\bottomrule
\end{tabular}
\end{adjustbox}
\caption{Final average return $\pm$ standard error on MuJoCo.}
\label{tab:mujoco}
\end{table}

\subsection{Significance Test}
We conducted additional t-tests between FLORA and each baseline across Meta-World to verify the robustness of the final results over 6 random seeds in Table 1. The average p-values were listed in Table~\ref{tab:sig-test}.
\begin{table}[htbp]
\centering
\begin{adjustbox}{max width=1.00\textwidth}
\begin{tabular}{ll}
\toprule
\textbf{Baseline} & \textbf{P-value}  \\
\midrule
FOCAL    & $0.03$  \\ 
IDAQ    & $0.04$ \\ 
CSRO    & $0.02$  \\ 
UNICORN    & $0.05$  \\ 
\bottomrule
\end{tabular}
\end{adjustbox}
\caption{Significance test on Meta-World.}
\label{tab:sig-test}
\end{table}

All p-values are below 0.05, indicating statistical significance. In MuJoCo, FLORA achieved the best mean performance with the smallest variance over 6 random seeds, as shown in Fig.~3 and Table~\ref{tab:mujoco}. Meta-World is generally considered more challenging due to its broader distribution compared to MuJoCo. The t-tests and evaluation results affirm FLORA's superior performance than baselines.

\subsection{Ablation Study on Retaining Policy Improvement}
As depicted in Fig.~\ref{fig:ablate} and Table~\ref{tab:a-pi}, in environments without distribution shift, \modelname maintains the advantages of $Q$ function decomposition for policy improvement and further amplifies this advantage. The results indicate that ACO ensures the correct direction of the policy iteration through uncertainty estimation and return feedback mechanisms. Moreover, FTI facilitates more accurate task representations, enhancing policy stability and final performance.

\begin{figure}[hbpt]
\centering
\subfigure{
\includegraphics[width=0.22\textwidth]{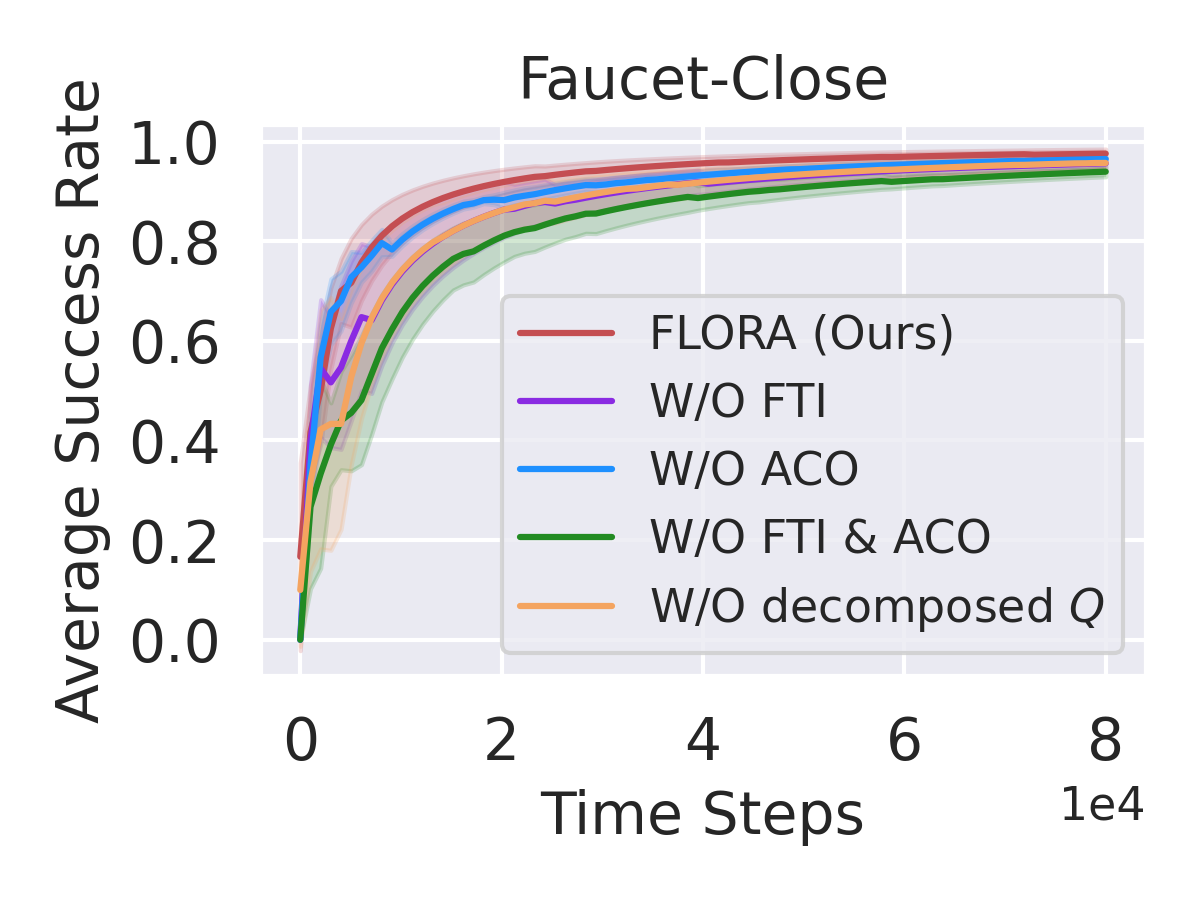}
}
\subfigure{
\includegraphics[width=0.22\textwidth]{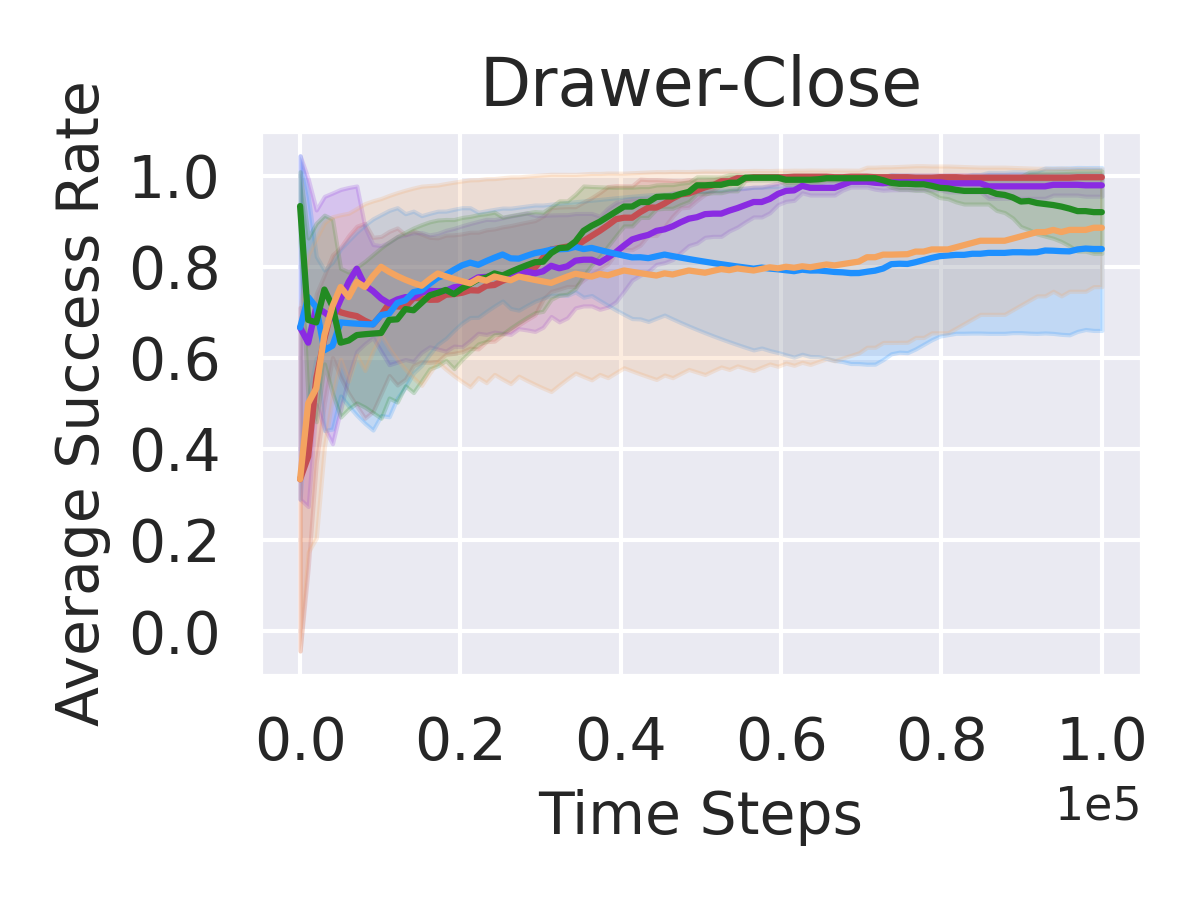}
}
\caption{Ablation study on retaining policy improvement.}
\label{fig:ablate}
\end{figure}

\begin{table}[htbp]
\centering
\begin{adjustbox}{max width=1.00\textwidth}
\begin{tabular}{@{}lccccc@{}}
\toprule
& \multicolumn{1}{c}{Drawer-Close} & \multicolumn{1}{c}{Faucet-Close} \\
\midrule
FLORA (Ours) & \textbf{99.83}$\pm{\textbf{0.24}}$ & \textbf{99.61}$\pm{\textbf{0.55}}$ & \\
W/O FTI & 97.33$\pm{3.77}$ & 98.94$\pm{1.49}$ & \\
W/O ACO & 96.33$\pm{5.19}$ & 99.22$\pm{1.00}$ & \\
W/O FTI \& ACO & 91.17$\pm{12.49}$ & 98.56$\pm{1.85}$ & \\
W/O decomposed $Q$ & 93.33$\pm{9.43}$ &  99.06$\pm{1.15}$ & \\
\bottomrule
\end{tabular}
\end{adjustbox}
\caption{Final success rate $\pm$ standard error $(\%)$ of ablation.}
\label{tab:a-pi}
\end{table}

\subsection{Different Strategies for Collecting Test Context}
Consistent with the offline RL setting, FLORA interacts with the environment during meta-testing rather than relying solely on pre-collected data for policy evaluation. As outlined in~\cite{beck2023survey}, there exist two common strategies for collecting context to adapt task representations during meta-testing: 1) using pre-collected offline context; 2) collecting online context via an explored policy. Following FOCAL, we adopt the first strategy. We also conducted additional experiments on the second strategy. As illustrated in Fig.~\ref{a-fig:meta-test}, offline context outperforms online context and exhibits lower variance, indicating that the pre-collected context is more informative for inferring task information.

\begin{figure}[htbp]
\centering
\subfigure{
\includegraphics[width=0.22\textwidth]{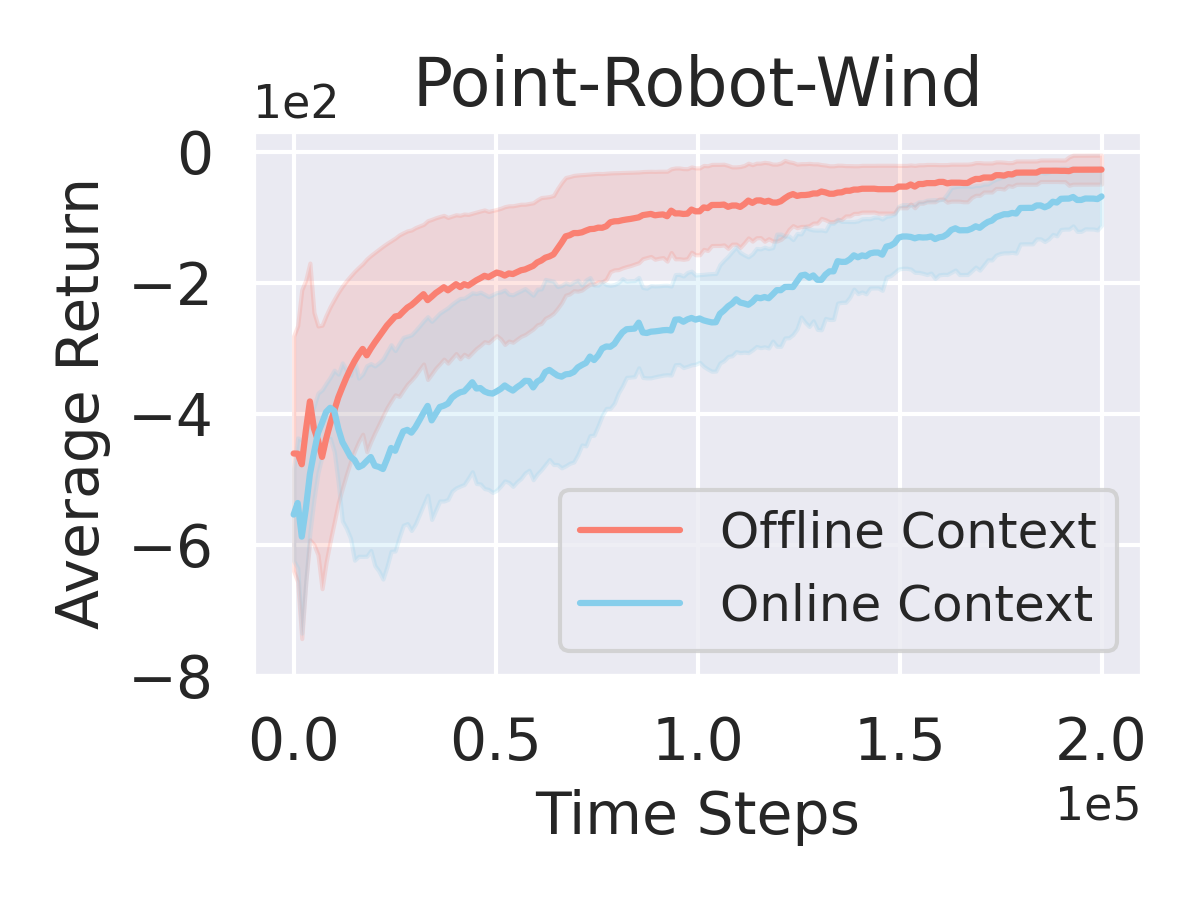}
}
\subfigure{
\includegraphics[width=0.22\textwidth]{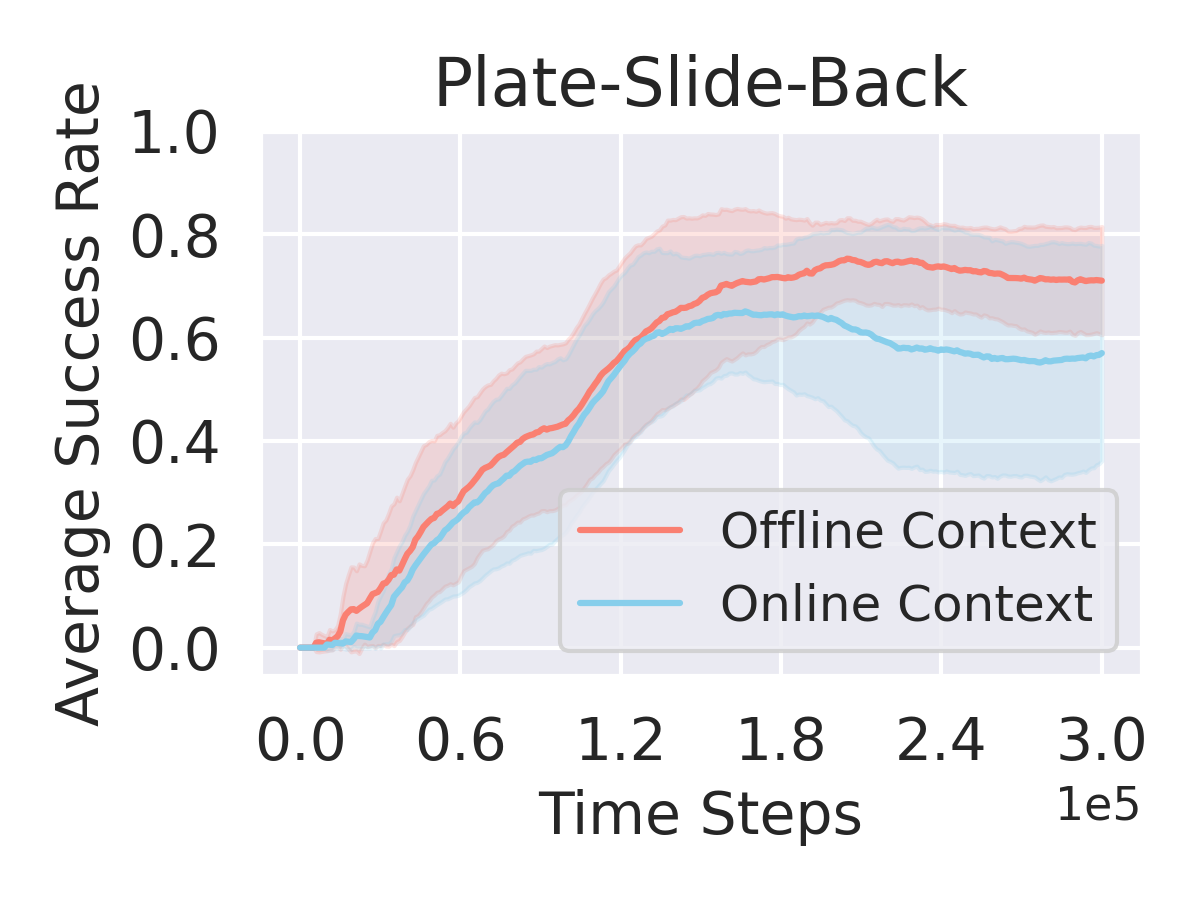}
}
\vskip -0.1in
\caption{Comparison of Context Collecting Strategies.}
\label{a-fig:meta-test}
\end{figure}

\end{document}